\def\safedef#1{%
   \ifx#1\undefined
      \expandafter\def\expandafter#1%
   \else
      \errmessage{The \string#1 is defined already}%
      \expandafter\def\expandafter\tmp
   \fi
}
\newcommand{\kj}[1]{{\color{RedOrange}[#1]}}
\definecolor{kjgray}{rgb}{.7,.7,.7}
\renewcommand{\paragraph}{%
  \@startsection{paragraph}{4}%
  {\z@}{0.50ex \@plus 1ex \@minus .2ex}{-1em}%
  {\normalfont\normalsize\bfseries}%
}
\newcolumntype{P}[1]{>{\centering\arraybackslash}p{#1}}
\newcolumntype{M}[1]{>{\centering\arraybackslash}m{#1}}
\def\ddefloop#1{\ifx\ddefloop#1\else\ddef{#1}\expandafter\ddefloop\fi}
\def\ddef#1{\expandafter\def\csname #1#1\endcsname{\ensuremath{\mathbb{#1}}}}
\def\ddef#1{\expandafter\def\csname c#1\endcsname{\ensuremath{\mathcal{#1}}}}
\def\ddef#1{\expandafter\def\csname b#1\endcsname{\ensuremath{{\mathbf{#1}}}}}
\def\ddef#1{\expandafter\def\csname b#1\endcsname{\ensuremath{{\boldsymbol{#1}}}}}
\def\ddef#1{\expandafter\def\csname h#1\endcsname{\ensuremath{\hat{#1}}}}
\def\ddef#1{\expandafter\def\csname hc#1\endcsname{\ensuremath{\hat{\mathcal{#1}}}}}
\def\ddef#1{\expandafter\def\csname hb#1\endcsname{\ensuremath{\hat{\mathbf{#1}}}}}
\def\ddef#1{\expandafter\def\csname hb#1\endcsname{\ensuremath{\hat{\boldsymbol{#1}}}}}
\def\ddef#1{\expandafter\def\csname t#1\endcsname{\ensuremath{\tilde{#1}}}}
\def\ddef#1{\expandafter\def\csname tc#1\endcsname{\ensuremath{\tilde{\mathcal{#1}}}}}
\def\ddef#1{\expandafter\def\csname tb#1\endcsname{\ensuremath{\tilde{\mathbf{#1}}}}}
\def\ddef#1{\expandafter\def\csname tb#1\endcsname{\ensuremath{\tilde{\boldsymbol{#1}}}}}
\def\ddef#1{\expandafter\def\csname bar#1\endcsname{\ensuremath{\bar{#1}}}}
\def\ddef#1{\expandafter\def\csname barc#1\endcsname{\ensuremath{\bar{\mathcal{#1}}}}}
\def\ddef#1{\expandafter\def\csname barb#1\endcsname{\ensuremath{\bar{\mathbf{#1}}}}}
\def\ddef#1{\expandafter\def\csname barb#1\endcsname{\ensuremath{\bar{\boldsymbol{#1}}}}}
\def\ddef#1{\expandafter\def\csname war#1\endcsname{\ensuremath{\overline{#1}}}}
\def\ddef#1{\expandafter\def\csname warc#1\endcsname{\ensuremath{\overline{\mathcal{#1}}}}}
\def\ddef#1{\expandafter\def\csname warb#1\endcsname{\ensuremath{\overline{\mathbf{#1}}}}}
\def\ddef#1{\expandafter\def\csname warb#1\endcsname{\ensuremath{\overline{\boldsymbol{#1}}}}}
\def\greeksymbols{alpha,beta,gamma,gam,delta,dt,eps,epsilon,zeta,eta,theta,th,iota,kappa,kap,lambda,lam,mu,nu,xi,pi,rho,sigma,sig,tau,phi,chi,psi,omega,om,Gamma,Gam,Delta,Dt,Theta,Th,Lambda,Lam,Pi,Sigma,Sig,Phi,Psi,Omega,Om}
\def\greeksymbolsnoeta{alpha,beta,gamma,gam,delta,dt,eps,epsilon,zeta,theta,th,iota,kappa,kap,lambda,lam,mu,nu,xi,pi,rho,sigma,sig,tau,phi,chi,psi,omega,om,Gamma,Gam,Delta,Dt,Theta,Th,Lambda,Lam,Pi,Sigma,Sig,Phi,Psi,Omega,Om} %
\xdef\csname barb\x\endcsname{\noexpand\ensuremath{\noexpand\bar{\noexpand\boldsymbol{ \csname \x\endcsname}}}}
\providecommand{\normz}[2][-1]{
\ensuremath{\mathinner{
\ifthenelse{\equal{#1}{-1}}{ %
\!\left\|#2\right\|}{}
\ifthenelse{\equal{#1}{0}}{ %
\|#2\|}{}
\ifthenelse{\equal{#1}{1}}{ %
\bigl\|#2\bigr\|}{}
\ifthenelse{\equal{#1}{2}}{ %
\Bigl\|#2\Bigr\|}{}
\ifthenelse{\equal{#1}{3}}{ %
\biggl\|#2\biggr\|}{}
\ifthenelse{\equal{#1}{4}}{ %
\Biggl\|#2\Biggr\|}{}
}} %
}  %
\providecommand{\floor}[2][-1]{
\ensuremath{\mathinner{
\ifthenelse{\equal{#1}{-1}}{ %
\!\left\lfloor#2\right\rfloor}{}
\ifthenelse{\equal{#1}{0}}{ %
\lfloor#2\rfloor}{}
\ifthenelse{\equal{#1}{1}}{ %
\!\bigl\lfloor#2\bigr\rfloor}{}
\ifthenelse{\equal{#1}{2}}{ %
\!\Bigl\lfloor#2\Bigr\rfloor}{}
\ifthenelse{\equal{#1}{3}}{ %
\!\biggl\lfloor#2\biggr\rfloor}{}
\ifthenelse{\equal{#1}{4}}{ %
\!\Biggl\lfloor#2\Biggr\rfloor}{}
}} %
}
\providecommand{\ceil}[2][-1]{
\ensuremath{\mathinner{
\ifthenelse{\equal{#1}{-1}}{ %
\!\left\lceil#2\right\rceil}{}
\ifthenelse{\equal{#1}{0}}{ %
\lceil#2\rceil}{}
\ifthenelse{\equal{#1}{1}}{ %
\!\bigl\lceil#2\bigr\rceil}{}
\ifthenelse{\equal{#1}{2}}{ %
\!\Bigl\lceil#2\Bigr\rceil}{}
\ifthenelse{\equal{#1}{3}}{ %
\!\biggl\lceil#2\biggr\rceil}{}
\ifthenelse{\equal{#1}{4}}{ %
\!\Biggl\lceil#2\Biggr\rceil}{}
}} %
}
\definecolor{mygrn}{rgb}{0,.8,0}
\definecolor{myred}{rgb}{.8,0,0}
\DeclareMathOperator{\KL}{{\mathsf{KL}}}
\DeclarePairedDelimiterX{\inp}[2]{\langle}{\rangle}{#1, #2}
\newcommand\declareop[3]{%
  \newcommand#1{%
    \mskip\muexpr\medmuskip*#2\relax
    {#3}%
    \mskip\muexpr\medmuskip*#2\relax
}}
\declareop\capprox{1}{{\sr{\const}{\approx}}} %
\declareop\logapprox{1}{{\sr{\mathsf{log}}{\approx}}} %
\def\const{\mathsf{const}}
\def\TV{\mathsf{TV}}
\def\kl{{\mathsf{kl}}}
\newcommand{\sr}{\stackrel}
\newcommand{\vast}{\bBigg@{3}}
\newcommand{\Vast}{\bBigg@{4}}
\newcommand{\stkout}[1]{\ifmmode\text{\sout{\ensuremath{#1}}}\else\sout{#1}\fi}
\let\vec\undefined %
\DeclareMathOperator{\vec}{\text{\normalfont vec}}
\newenvironment{talign*}
 {\csname align*\endcsname}
 {\endalign}
\def\chrulefill{\leavevmode\leaders\hrule height 0.7ex depth \dimexpr0.4pt-0.7ex\hfill\kern0pt}
\def\eqref#1{(\ref{#1})}
\def\vs{{\bm{s}}}
\def\vx{{\bm{x}}}
\def\mB{{\bm{B}}}
\def\mD{{\bm{D}}}
\def\mE{{\bm{E}}}
\def\mI{{\bm{I}}}
\def\mP{{\bm{P}}}
\def\mT{{\bm{T}}}
\def\mU{{\bm{U}}}
\def\mV{{\bm{V}}}
\def\mW{{\bm{W}}}
\def\mX{{\bm{X}}}
\def\mY{{\bm{Y}}}
\DeclareMathAlphabet{\mathsfit}{\encodingdefault}{\sfdefault}{m}{sl}
\SetMathAlphabet{\mathsfit}{bold}{\encodingdefault}{\sfdefault}{bx}{n}
\def\gA{{\mathcal{A}}}
\def\gC{{\mathcal{C}}}
\def\gD{{\mathcal{D}}}
\def\gE{{\mathcal{E}}}
\def\gF{{\mathcal{F}}}
\def\gG{{\mathcal{G}}}
\def\gH{{\mathcal{H}}}
\def\gI{{\mathcal{I}}}
\def\gL{{\mathcal{L}}}
\def\gM{{\mathcal{M}}}
\def\gO{{\mathcal{O}}}
\def\gP{{\mathcal{P}}}
\def\gQ{{\mathcal{Q}}}
\def\gS{{\mathcal{S}}}
\def\gT{{\mathcal{T}}}
\def\gX{{\mathcal{X}}}
\def\sN{{\mathbb{N}}}
\def\sP{{\mathbb{P}}}
\def\sR{{\mathbb{R}}}
\newcommand{\E}{\mathbb{E}}
\newcommand{\Var}{\mathrm{Var}}
\DeclareMathOperator*{\argmax}{arg\,max}
\DeclareMathOperator*{\argmin}{arg\,min}
\theoremstyle{plain}
\newtheorem{assumption}{Assumption}
\newtheorem{theoremplain}{Theorem}[section]
\newtheorem{proposition}{Proposition}[section]
\newtheorem{lemma}{Lemma}[section]
\newtheorem{claim}{Claim}[section]
\newtheorem{remark}{Remark}
\def\ceil#1{\lceil #1 \rceil}
\def\floor#1{\lfloor #1 \rfloor}
\def\1{\mathbbm{1}}
\newcommand{\diag}{\mathrm{diag}}
\newcommand{\rank}{\mathrm{rank}}
\newcommand{\Bin}{\mathrm{Bin}}
\def\bignorm#1{\left\lVert #1 \right\rVert}
\def\bigabs#1{\left| #1 \right|}
\newcommand{\indicator}{\mathds{1}}
\definecolor{darkblue}{rgb}{0.0,0.0,0.65}
\definecolor{darkred}{rgb}{0.65,0.0,0.0}
\definecolor{darkgreen}{rgb}{0.0,0.5,0.0}
\definecolor{lightbrown}{rgb}{0.71,0.4,0.11}
\definecolor{tab:blue}{RGB}{31,119,180}  %
\definecolor{tab:red}{RGB}{214,39,40}  %
\definecolor{tab:green}{RGB}{44,160,44}  %
\definecolor{tab:orange}{RGB}{255,127,14}  %
\newcommand{\Unif}{\mathrm{Unif}}
\newcommand{\ps}{\mathrm{ps}}
\newcommand{\mix}{\mathrm{mix}}
\newcommand{\thres}{\mathrm{thres}}
\newcommand{\Sym}{\mathrm{Sym}}
\newcommand{\kjnew}[1]{{\color{MidnightBlue}#1}}
\definecolor{OursEM1}{HTML}{D55E00}     %
\definecolor{OursEM10}{HTML}{C24700}    %
\definecolor{OursStage1}{HTML}{E5864A}  %
\definecolor{Oracle}{HTML}{009E73}      %
\definecolor{KausikP10}{HTML}{2F65A2}   %
\definecolor{KausikP30}{HTML}{3F87C5}   %
\definecolor{KausikP50}{HTML}{85C3E9}   %
\newif\ifFINAL
\def\guide#1{}
\def\kj#1{}
\def\kjnew#1{}
\begin{document}

\twocolumn[

\aistatstitle{Near-Optimal Clustering in Mixture of Markov Chains}

\aistatsauthor{ Junghyun Lee \And Yassir Jedra \And  Alexandre Prouti\`{e}re \And Se-Young Yun }

\aistatsaddress{ KAIST AI \\ \texttt{jh\_lee00@kaist.ac.kr} \And ICL EEE \\ \texttt{y.jedra@imperial.ac.uk} \And KTH EECS, Digital Futures \\ \texttt{alepro@kth.se} \And KAIST AI \\ \texttt{yunseyoung@kaist.ac.kr}}
]

\begin{abstract}
	We study the problem of clustering $T$ trajectories of length $H$, each generated by one of $K$ unknown ergodic Markov chains over a finite state space of size $S$.
    We derive an instance-dependent, high-probability lower bound on the clustering error rate, governed by the stationary-weighted KL divergence between transition kernels.
    We then propose a two-stage algorithm: Stage I applies spectral clustering via a new injective Euclidean embedding for ergodic Markov chains, a contribution of independent interest enabling sharp concentration results; Stage II refines clusters with a single likelihood-based reassignment step.
    We prove that our algorithm achieves near-optimal clustering error with high probability under reasonable requirements on $T$ and $H$.
    Preliminary experiments support our approach, and we conclude with discussions of its limitations and extensions.
\end{abstract}
\section{INTRODUCTION}
\label{sec:introduction}
Clustering, or community detection for graphs, is a fundamental statistical problem with applications across diverse scientific disciplines, including social science, biology, and statistical physics~\citep{mclachlan2019finite,kiselev2019rna,ezugwu2022clustering,fortunato2010graphs}.
The precise statistical characterization of clustering has been rigorously investigated under various probabilistic frameworks, such as stochastic block models (SBMs, \cite{abbe2018sbm}), Gaussian mixture models (GMMs, \cite{lu2016gmm,loffler2021gmm,chen2024gmm}), and block Markov chains (BMCs, \cite{sanders2020bmc,jedra2023bmdp}).

However, the aforementioned models primarily focus on clustering static data points or nodes, where individual elements often lack inherent informative structure in isolation: the clustering signal typically emerges only when considering the entire data.
In contrast, many real-world applications involving multiple underlying processes, such as astronomy~\citep{yang2020trajectory}, mobile social networks~\citep{tang2021trajectory}, and human activity patterns~\citep{zhou2021human}, deal with \textit{trajectories} with a mixture of temporal information.
The task is to cluster the trajectories based on their generating process or model.
The longer each trajectory (e.g., a user's interaction sequence or a time series) is, the more information it potentially reveals about its generating model, facilitating clustering.

The \textbf{Mixture of Markov Chains (MMC)} provides a fundamental yet powerful probabilistic framework for the problem of trajectory clustering in the most basic scenario with no controllable actions.
In this model, we are given $T$ trajectories of length $H$, where each trajectory is generated by one of $K$ unknown Markov chains defined over a finite state space of size $S$.
This model has been studied in a variety of contexts, originally by \citet{blumen1955industrial} for modeling heterogeneous labor mobility patterns, and later utilized for learning underlying usage patterns from user trails of app usages, music playlists, web browsing, and more~\citep{gupta2016mixture,spaeh2023mixture,maystre2022mmc,cadez2003mmc,poulsen1990mmc,girolami2003mmc,zhou2021human}.

In \textbf{MMC}, two primary and closely intertwined objectives emerge: \textit{learning} the underlying $K$ Markov chain models~\citep{gupta2016mixture,spaeh2023mixture,kausik2023mixture} and \textit{clustering} the observed trajectories according to their generative source~\citep{kausik2023mixture}.
Successful clustering can significantly simplify the learning task by allowing for information aggregation within each identified group of trajectories. 
Conversely, while learning each model accurately can indeed facilitate clustering, it is expected to necessitate stringent requirements on the trajectory length $H$, because each trajectory is solely responsible for its own amount of information helpful for clustering.
Thus, beyond its intrinsic value as an exploratory data analysis technique, accurate clustering can lead to statistically more efficient inference than analyzing each trajectory independently.

As done in prior clustering literature, one can ask two critical questions:
\emph{(i) What is the fundamental limit on the misclassification rate?} \emph{(ii) Given sufficiently large (but not excessively so) trajectories, is there a \textit{computationally tractable} algorithm whose performance (clustering error rate) matches the lower bound? What is the requirement on $T$ and $H$?}

Several studies have algorithmically addressed the second question~\citep{gupta2016mixture,kausik2023mixture,spaeh2023mixture,spaeh2024mixture1,spaeh2024mixture2}, often without rigorous statistical guarantees.
One notable exception is \cite{kausik2023mixture}, where the authors proposed a clustering algorithm that provably performs \textit{exact} clustering (i.e., zero misclassification error) in \textbf{MMC} when $T = \widetilde{\Omega}(K^2 S)$ and $H = \widetilde{\Omega}(K^{3/2} t_\mix)$.
However, they do not provide the clustering error rate across various regimes of $T$, $H$ and appropriate separation parameters.
Perhaps more importantly, they lack a corresponding lower bound and related discussions, making it difficult to assess the optimality of their algorithm.
Furthermore, their algorithm relies on explicit knowledge of problem-specific quantities, often unavailable in practice.
We will elaborate on this comparison in Section~\ref{sec:compare-kausik}.
Consequently, despite their importance, both of the aforementioned questions have remained elusive in the literature.

\paragraph{Contributions.}
In short, we answer both questions fully.
Specifically:
\emph{(a)} We prove an instance-specific high-probability lower bound on the clustering error rate for \textbf{MMC}. This reveals the problem-difficulty quantity $\gD$: the minimum weighted KL divergence between the transition kernels (Section~\ref{sec:lowerbound}).
\emph{(b)} We propose a two-stage clustering algorithm that achieves near-optimal clustering error. Notably, it does not require any \textit{a priori} knowledge of the underlying model, yet fully adapts to the given problem difficulty (Section~\ref{sec:upperbound}). Especially for Stage I, we introduce a new injective Euclidean embedding ($L$-embedding, \Cref{def:l-embedding}) specifically designed for ergodic Markov chains. This embedding, a contribution of independent interest, facilitates sharp concentration results for spectral clustering analysis (Section~\ref{sec:spectral}).
\emph{(c)} Our upper and lower bounds reveal gaps in misclassification errors and the required trajectory length $H$. Building on recent advances in concentration inequalities~\citep{paulin2015markov, fan2021hoeffding} and estimation techniques~\citep{wolfer2021markov} for Markov chains, we elucidate the inherent complexities of clustering in \textbf{MMC} that currently render these gaps unavoidable (Section~\ref{sec:discussions}).

\paragraph{Notation.}
For a positive integer $n \geq 1$, let $[n] := \{1, 2, \cdots, n\}$.
For a set $X$, let $\Delta(X)$ be the set of probability distributions over $X$.
Let $a \vee b := \max\{a, b\}$ and $a \wedge b := \min\{a, b\}$.
We will freely utilize the asymptotic notations $\gO, o, \Omega, \omega, \Theta$, and for aesthetic purpose, we will also use $f \gtrsim g$, $f \lesssim g$, $f \asymp g$, defined as $f = \Omega(g)$, $f = \gO(g)$, $f = \Theta(g)$, respectively; $\log$ in the subscript (e.g., $\lesssim_{\log}$) indicates up to logarithmic factors, corresponding to $\widetilde{\Omega}$, $\widetilde{\gO}$, and $\widetilde{\Theta}$.

\section{PROBLEM SETTING}
\label{sec:setting}
\paragraph{Mixture of Markov chains (MMCs).}
There are $K$ unknown Markov chains. The $k$-th Markov chain is denoted as $\gM^{(k)} = (\gS, H, \mu^{(k)}, p^{(k)})$. $\gS$ is a finite state space of cardinality $S$, $H \geq 2$ is the horizon or (episode length), $\mu^{(k)}$ is a initial state distribution, and $p^{(k)}(\cdot | \cdot)$ and $\mP^{(k)}$ are the transition kernel and matrix.

We now recall an essential concept for spectral analysis of Markov chains.
For an ergodic Markov chain with transition matrix $\mP$, its \textbf{pseudo-spectral gap} is defined as $\gamma_\ps := \max_{m \geq 1} \frac{1}{m} \gamma\left( (\mP^*)^m \mP^m \right)$, where $\gamma(\cdot)$ is the spectral gap of the self-adjoint operator~\citep[Section 3]{paulin2015markov}.
With this, we assume the following:
\begin{assumption}
\label{assumption:ergodic}
    Each $\gM^{(k)}$ is ergodic,\footnote{We assume uniform ergodicity: there exist $M > 0, \rho \in (0, 1)$ such that $\max_{s \in \gS} \TV( P^H(s, \cdot), \pi) \leq M \rho^H$ for all $H \in \sN$, which is implied by aperiodicity and irreducibility~\citep[Theorem 4.9]{markovmixing}.} with pseudo-spectral gap $\gamma_\ps^{(k)} > 0$ and stationary distribution $\pi^{(k)}$.
    Also, $\gamma_\ps := \min_k \gamma_\ps^{(k)}$ and $\pi_{\min} := \min_{k, s} \pi^{(k)}(s) > 0.$
\end{assumption}

\paragraph{Learner's Objective.}
We first clarify that the learner knows $\gS$, and for the simplicity of exposition, we also assume that the learner knows \textit{either} the number of clusters $K$ \textit{or} the minimum pseudo-spectral gap $\gamma_\ps$.\footnote{A fully “parameter-free” algorithm would require neither. In fact, by leveraging recent advances in tight estimation of $\gamma_\ps$~\citep{wolfer2024mixing}, our algorithm can be made parameter-free through an additional initialization step, as we elaborate below Theorem~\ref{thm:initial-spectral}.}
With this, the learner observes $T$ trajectories generated as follows: for each $t \in [T]$, a trajectory of length $H$ is sampled from $\gM^{(f(t))}$.
Specifically, trajectory $\gT_t = (s_{t,1}, \dots, s_{t,H})$ is generated as $s_{t,1} \sim \mu^{(f(t))}$ and $s_{t,h+1} \sim p^{(f(t))}(\cdot | s_{t,h})$, which we denote as $\gT_t \sim \gM^{(f(t))}$.
$f: [T] \rightarrow [K]$ is the unknown, ground-truth decoding function that maps each trajectory to its generating Markov chain.

The learner’s objective is to recover $f$—that is, to cluster the $T$ trajectories according to their generating models. Let $\widehat{f} : [T] \rightarrow [K]$ denote the learner's estimated decoding function. The number of misclassified trajectories is measured as:
$$
E_T(\widehat{f}, f) := \min_{\sigma \in \Sym(K)} \sum_{t=1}^T \mathds{1}[\widehat{f}(t) \neq \sigma(f(t))],
$$
where $\Sym(K)$ is the symmetric group over $[K]$. The goal is to design an algorithm such that, for any confidence level $\delta \in (0, 1)$ and misclassification rate $\varepsilon \in (0, 1)$, the following guarantee holds: $\sP ( E_T(\widehat{f}, f) \leq \varepsilon T ) \geq 1 - \delta$, where the probability $\mathbb{P}$ is over the randomness of both the trajectory generation and the clustering algorithm.
We note in advance that the error rate $\varepsilon$ and the failure probability $\delta$ may both depend on factors such as $T$, $H$, $S$, $K$, and the mixing properties of the Markov chains.

\section{FUNDAMENTAL LOWER BOUND}
\label{sec:lowerbound}
We define an instance of \textbf{clustering in MMC} as a tuple $\Phi_T := ((\gM^{(k)})_{k \in [K]}, f, T)$, where $(\gM^{(k)})_{k \in [K]}$ is a collection of $K$ ergodic Markov chains, $f$ is the decoding function, and $T \in \mathbb{N}$ is the total number of trajectories to be clustered.

To derive instance-specific lower bounds, we must consider algorithms that for a given instance, genuinely adapt to perturbations of the decoding function. Indeed, an algorithm that would return $\widehat{f}=f$, would have no misclassified trajectories, but would fail for other decoding functions. Hence, we now introduce a class of ``good'' algorithms that are robust to small instance perturbations with a given confidence level. Define $\bm\alpha(f) := (\alpha_k(f))_{k \in [K]} \in \Delta([K])$ with $\alpha_k(f) := |f^{-1}(k)| / T$.

\begin{definition}{Stable Algorithms}{stable}
    Let $(\varepsilon, \beta, \delta) \in [0, 1] \times \sR_{\geq 0} \times (0, 1/2]$.
    A clustering algorithm $\gA$ is {\bf $(\varepsilon, \beta, \delta)$-locally stable} at $\Phi_T := ((\gM^{(k)})_{k \in [K]}, f, T)$ if the following holds: 
    
    for all $\Phi_T'=((\gM^{(k)})_{k \in [K]}, f', T)$ such that $\lVert \bm\alpha(f') - \bm\alpha(f) \rVert_2 \leq \beta$,
    \begin{equation}
    \label{eqn:stability}
        \sP_{\Phi_T', \gA} \left( E_T(\widehat{f}_\gA, f') > \varepsilon T \right) \leq \delta,
    \end{equation}
    where $\widehat{f}_\gA$ is the outputted clustering function from our algorithm $\gA$ that takes as input the $T$ trajectories whose ground-truth clustering is given by $f'$.
    Furthermore, we say that $\gA$ is {\bf $(\varepsilon, \beta)$-asymptotically locally stable} at $\Phi := ((\gM^{(k)})_{k \in [K]}, f)$ if there exists a sequence $(\delta_T)_{T \in \sN}$ with $\lim_{T \rightarrow \infty} \delta_T = 0$ such that $\gA$ is $(\varepsilon, \beta, \delta_T)$-locally stable at $\Phi_T$ for all $T \in \sN$.
\end{definition}
Intuitively, perturbations are (slight) changes in the relative cluster sizes $\bm\alpha(f)$, inspired by the prior definitions of locally stable algorithms for clustering in SBMs~\citep[Definition 1]{yun2019optimal}, block Markov chains~\citep{sanders2020bmc}, and recently, block Markov Decision Processes~\citep{jedra2023bmdp}.

We now present our \textit{instance-specific, high-probability} lower bound for clustering in mixture of Markov chains:
\begin{theorem}{Error Rate Lower Bound}{lower-bound}
    Let $(\varepsilon, \delta) \in [0, 1] \times (0, 1/2]$.
    Then, a necessary condition for the existence of a $(\varepsilon, \beta, \delta)$-locally stable algorithm at $\Phi_T := ((\gM^{(k)})_{k \in [K]}, f, T)$ with $\beta \geq 2\sqrt{2} \varepsilon$ is as follows: denoting $\alpha_{\min} = \min_{k \in [K]} \alpha_k(f)$,
    \begin{align}
        &\delta \geq \frac{1}{2} \left( \frac{\alpha_{\min}}{16 e \varepsilon} \right)^{\varepsilon T} \exp\left( -4 \varepsilon T (H - 1) \gD \right) \nonumber \\
        &\Longleftrightarrow
        4 (H - 1) \gD \geq \frac{1}{\varepsilon T}\log\frac{1}{2\delta} + \log\frac{\alpha_{\min}}{16 e \varepsilon}. \label{eqn:necessary}
    \end{align}
    For $(\varepsilon, \beta)$-asymptotically locally stable algorithm with the same $\beta$ as above and $\varepsilon = o(1)$, we have the following necessary condition:
    \begin{equation}
        \liminf_{T \rightarrow \infty} \frac{2 (H - 1) \gD}{\log \frac{\alpha_{\min}}{16 e \varepsilon}} \geq 1.
    \end{equation}
    
    The {\bf information-theoretic divergence} $\gD$ is defined as $\gD := \min_{k \neq k' \in [K]} \gD^{(k, k')}$, where
    \begin{align*}
        \gD^{(k, k')} &:= \frac{1}{H - 1} \KL\left( \mu^{(k)}, \mu^{(k')} \right) \\
        &\quad + \sum_{s \in \gS} \sP_H^{(k)}(s) \KL\left( p^{(k)}(\cdot | s), p^{(k')}(\cdot | s) \right),
    \end{align*}
    where $\sP_H^{(k)}(s) := \frac{1}{H - 1} \sum_{h=1}^{H-1} \sP^{(k)}(s_h = s)$.
\end{theorem}
\begin{proof}[Proof sketch]
    The proof proceeds by first constructing a large number of hypotheses, each corresponding to different cluster allocations of the observations, following the combinatorial arguments of \citet{yun2019optimal}.
    These hypotheses are slight perturbations of the given instance of clustering in MMC.
    We then apply the change-of-measure argument~\citep{lai1985efficient} and the data-processing inequality~\citep[Lemma 1]{garivier2019lower} to relate the error of any “good’’ clustering algorithm to the KL divergences between true and alternate models.
    Finally, we optimize the resulting lower bound over the different allocations.
    The approach is inspired by \citet[Theorem 1]{yun2019optimal} and \citet[Theorem 2]{jedra2023identification}, which establish high-probability lower bounds for clustering in SBMs and for linear system identification, respectively.
    The full proof is deferred to Appendix~\ref{app:lower-bound}.
\end{proof}

\begin{remark}[Chernoff Information]
    \citet{dreveton2024universal} recently highlighted Chernoff information as a universal measure of clustering difficulty for sub-exponential mixtures, extending earlier results in SBM clustering~\citep{abbe2015sbm1,abbe2015sbm2,dreveton2023sbm}.
    While we use $\KL(\cdot, \cdot)$, the two quantities are closely related~\citep{vanErven2014renyi} and become equivalent as $T \to \infty$ in many settings, including labeled SBMs~\citep[Claim 4]{yun2016labeled}.
\end{remark}

\begin{remark}[Comparison to \citet{wang2023divergence}]
    One notable related result is \citet[Theorem 4.1]{wang2023divergence}, which establishes an asymptotic equipartition property~\citep[Theorem 11.8.1]{infotheory} of binary hypothesis testing between two Markov chains from $\{(X_{h,1},X_{h,2})\}_{h \in [H]}$, where $X_{h,1} \overset{i.i.d.}{\sim} \nu$ and $X_{h,2} \sim P(\cdot | X_{h,1})$.
    Our asymptotic lower bound is a generalization of theirs, as we consider the full Markov chain instead of a collection of single hops.
\end{remark}

\paragraph{Asymptotically Exact Recovery.}
The necessary condition for asymptotically exact recovery ($\varepsilon = \gO(1 / T)$) is $H \gD = \Omega(\log T)$.
This mirrors the minimax lower bound for GMM clustering~\citep[Theorem 3.3]{lu2016gmm}, where strong consistency demands that the squared signal-to-noise ratio (SNR) is at least logarithmic w.r.t. the number of data points $n$: $\Delta^2 / \sigma^2 = \Omega(\log n)$, with $\Delta$ denoting the minimum $\ell_2$-separation of the mean vectors and $\sigma^2$ the isotropic variance of the GMM.
Analogously, in our setting, $\gD$ corresponds to $\Delta^2$, and $H$ plays a role similar to $1 / \sigma^2$: longer trajectories reduce variance, much like smaller $\sigma^2$ does in the GMM.

\paragraph{Stationary Divergence.}
Without loss of generality, suppose that $\max_{k,k'} \KL(\mu^{(k)}, \mu^{(k')}) < \infty$.
Then, under Assumptions~\ref{assumption:ergodic} (ergodicity), by the Ergodic Theorem for Markov chains~\citep[Appendix C]{markovmixing}), $\gD^{(k,k')} \rightarrow \gD_\pi^{(k,k')}$ almost surely as $H \rightarrow \infty$, where
\begin{equation}
\label{eqn:D-pi}
    \gD_\pi^{(k,k')} := \sum_{s \in \gS} \pi^{(k)}(s) \KL\left( p^{(k)}(\cdot | s), p^{(k')}(\cdot | s) \right).
\end{equation}
The \textbf{stationary (information-theoretic) divergence} is then defined as $\gD_\pi := \min_{k \neq k'} \gD_\pi^{(k,k')}$, which quantifies the difficulty of clustering in MMC in the stationary regime.
It aggregates the ``local'' difficulty at each state $s$—quantified by the KL-divergence between outgoing transition probabilities—weighted by the long-run frequency $\pi^{(k)}(s)$ of visiting that state.
Thus, a state's contribution to distinguishability is small if \emph{either} the chains behave similarly from that state or the state is rarely visited.
This aggregation is information-geometrically optimal, as it weights the KL divergence -- the optimal statistic for local hypothesis testing under the Neyman-Pearson lemma~\citep{neyman-pearson} -- by the stationary distribution.

\section{TWO-STAGE ALGORITHM}
\label{sec:upperbound}

Motivated by prior clustering literature~\citep{gao2017sbm,yun2016labeled,lu2016gmm}, our algorithm consists of two stages.
The full pseudocodes are provided in Algorithms~\ref{alg:initial-spectral} and \ref{alg:likelihood}.

\subsection{Stage I. Initial Spectral Clustering}
\label{sec:spectral}

\paragraph{$L$-Embedding.}
We first introduce a new Euclidean embedding of ergodic Markov chains:

\begin{definition}{$L$-Embedding}{L-embedding}
\label{def:l-embedding}
    For any ergodic Markov chain $\gM$ with transition probability matrix $\mP$ and stationary distribution $\pi$, its \textbf{$L$-embedding} and its empirical version $\widehat{L}$ are defined as
    \begin{align*}
        &L : \gM \mapsto \vec\left(\diag(\pi)^{1/2} \mP\right), \\
        &\widehat{L} : \gT = (s_1, s_2, \cdots, s_H) \mapsto \left( \frac{\widehat{N}(s, s')}{\sqrt{H \widehat{N}(s)}} \right)_{s, s' \in \gS},
    \end{align*}
    where $\gT \sim \gM$, $\widehat{N}(s) := \sum_{h=1}^H \indicator[s_h = s]$, and $\widehat{N}(s, s') := \sum_{h=1}^{H-1} \indicator[s_h = s, s_{h+1} = s']$.
\end{definition}

With this, we define the ground-truth and empirical data matrices of size $T \times S^2$ by row-concatenating the $L$-embeddings as follows:
\begin{equation}
\label{eqn:W}
    \mW = \begin{bmatrix}
        L(\gM^{(f(t))})
    \end{bmatrix}_{t \in [T],:}, \quad
    \widehat{\mW} = \begin{bmatrix}
        \widehat{L}(\gT_t)
    \end{bmatrix}_{t \in [T],:}.
\end{equation}

\begin{remark}
    To the best of our knowledge, such a Euclidean embedding for ergodic Markov chains has not been previously reported in the literature.
    In Appendix~\ref{app:comparisons}, we discuss its connections and distinctions to existing spectral-related representations, including diffusion maps~\citep{coifman2006diffusion}, the (weighted) Laplacian~\citep{chung1997spectral}, the symmetrized form~\citep{markovmixing}, and doublet frequencies~\citep{wolfer2021markov,vidyasagar2014markov}.
\end{remark}

We now discuss three desirable properties of the $L$-embedding that make it suitable for our purpose.

First, our $L(\cdot)$ uniquely determines $\gM$, up to the initial distribution:

\begin{proposition}
\label{prop:L}
    Over the space of ergodic Markov chains, $L$ is an injective mapping up to the initial distribution: for $\gM = (\gS, H, \mu, p)$ and $\gM' = (\gS, H, \mu', p')$, $L(\gM) = L(\gM') \Longleftrightarrow p = p'$.
\end{proposition}
\begin{proof}
    By definition, we must have that for all $s, s' \in \gS$, $\sqrt{\pi(s)} p(s' | s) = \sqrt{\pi'(s)} p'(s' | s)$.
    Summing over $s' \in \gS$, we have that $\pi(s) = \pi'(s) > 0$ for all $s \in \gS$, where the ergodicity of the chains implies strict positivity.
    This then implies that $p = p'$.
\end{proof}

Second, the $L$-embedding possesses desirable concentration properties.
The empirical mapping $\widehat{L}$ normalizes the transition counts by $\sqrt{\widehat{N}(s)}$.
To understand why this is advantageous, note that we can rewrite each entry of the empirical $L$-embedding as
\begin{equation}
    \frac{\widehat{N}(s, s')}{\sqrt{H \widehat{N}(s)}} = \underbrace{\frac{\widehat{N}(s, s')}{\widehat{N}(s)}}_{\approx p(s' | s)} \underbrace{\sqrt{\frac{\widehat{N}(s)}{H}}}_{\approx \sqrt{\pi(s)}}.
\end{equation}
Because the concentration bound of the empirical transition vector for state $s$, $\left( \widehat{N}(s, s') / \widehat{N}(s) \right)_{s' \in \gS}$, scales as $\tilde{\gO}(1 / \sqrt{H \pi(s)})$ (see Eqn.~\eqref{eqn:p-concentration} in Appendix~\ref{app:initial-spectral}), multiplying by $\sqrt{\pi(s)}$---which is effectively what the empirical $\sqrt{\widehat{N}(s)}$ normalization approximates---homogenizes the rate of concentration across all states $s \in \gS$.
This variance stabilization improves the conditioning of the spectral representation and yields a sharp concentration rate for $\bignorm{\widehat{\mW} - \mW}_{2 \to \infty}$ that completely avoids a detrimental dependence on $\pi_{\min}^{-1}$.
This tightly controlled error is critical for the theoretical guarantees of our initial spectral clustering, as we will highlight in the proof sketch of Theorem~\ref{thm:initial-spectral}.

Third, our $L$ facilitates a more ``direct'' comparison between the $\ell_2$-based separation $\Delta_\mW^2$ and the KL-based separation $\gD$.
The minimum (row-wise) $\ell_2$-separation of $\mW$, which is a crucial quantity for $\ell_2$-distance-based spectral clustering, is defined as:
\begin{equation*}
\resizebox{\columnwidth}{!}{$
    \begin{aligned}
        \Delta_\mW^2 &:= \min_{k \neq k'} \bignorm{L(\gM^{(k)}) - L(\gM^{(k')})}_2^2 \\
        &= \min_{k \neq k'} \sum_{s \in \gS} \bignorm{\sqrt{\pi^{(k)}(s)} p^{(k)}(\cdot | s) - \sqrt{\pi^{(k')}(s)} p^{(k')}(\cdot | s)}_2^2.
    \end{aligned}
$}
\end{equation*}
By Proposition~\ref{prop:L}, $\Delta_\mW = 0$ iff at least two of the $K$ Markov models are exactly the same.
This also provides an intuitive motivation for our choice of $L$.
Informally, $\Delta_\mW^2$ can be viewed as roughly corresponding to $\min_{k \neq k'} \sum_{s \in \gS} \pi^{(k)}(s) \bignorm{p^{(k)}(\cdot | s) - p^{(k')}(\cdot | s)}_2^2$, which, up to a constant, is upper bounded by $\gD_\pi$ (see Proposition~\ref{prop:gaps}).
We discuss in more detail the relations between different gaps in Section~\ref{sec:discussions}\textbf{(3)}.

\paragraph{Algorithm Design.}
Stage I is then the usual (row-wise) spectral clustering~\citep{vonLuxburg2007spectral} applied to $\widehat{\mW}$, divided into two cases.
If $K$ is given, we perform standard spectral clustering (lines 2--4) by computing the spectral embedding using the top-$K$ singular vectors and then applying $K$-means row-wise.
If $K$ is unknown, the procedure consists of three sub-stages~\citep{kannan2009spectral,yun2016labeled,Vuren2024bmc}.
First, we perform \emph{adaptive thresholding}, where the rank of $\mW$ is estimated via singular value thresholding as $\widehat{R}$,\footnote{It could be that $\rank(\mW) \ll K$; see Appendix~\ref{app:rank-K}.} and a low-rank spectral embedding is formed (lines 6--7).
Second, we perform \emph{density-based clustering}, where for each trajectory $t$, we construct a neighborhood $\gQ_t$ based on $\ell_2$ distance in the embedding space, and then greedily form clusters by selecting ${\color{blue}t_{k^\star}}$---the trajectory with the largest uncovered neighborhood---as a cluster ``center'' and assigning its neighbors (lines 8--15).
Finally, for the remaining trajectories, we perform \emph{assignment}, where each trajectory is assigned to its nearest cluster center in $\ell_2$ distance (lines 16--17).

\begin{algorithm2e}[!t]
\caption{Initial Spectral Clustering}
\label{alg:initial-spectral}
\Input{$\{(s_{t,1}, \cdots, s_{t,H}) \}_{t \in [T]}$, $K$ or $\gamma_\ps$}

$\widehat{\mU} \widehat{\bm\Sigma} \widehat{\mV}^\top \gets $SVD of $\widehat{\mW} \in [0, 1]^{T \times S^2}$ (see Eqn.~\eqref{eqn:W}), with $\widehat{\bm\Sigma} := \diag(\{\widehat{\sigma}_i\}_{i \in [\min(T, S^2)]})$\;

\eIf{$K$ is given}{
    $\widehat{K} \gets K$, $\widehat{\mX} \gets \widehat{\mU}_{1:K} \widehat{\bm\Sigma}_{1:K}$\;

    $\widehat{f}_0$ $\gets$ $K$-means clustering of the rows of $\widehat{\mX}$\;
}{
    \tcc{\textcolor{blue}{1. Adaptive Thresholding}}
    $\widehat{R} \gets \sum_{i=1}^{T \wedge S^2} \indicator\left[ \widehat{\sigma}_i \geq \widehat{\sigma}_\thres \triangleq \sqrt{\frac{64 T S}{H \gamma_\ps} \log\frac{T H}{\delta}} \right]$ \;
    
    $\widehat{\mX} \gets \widehat{\mU}_{1:\widehat{R}} \widehat{\bm\Sigma}_{1:\widehat{R}}$
    
    \tcc{\textcolor{blue}{2. Density-based Clustering}}
    Define $\gQ_t \gets \left\{ t' \in [T] : \bignorm{\widehat{\mX}_{t',:} - \widehat{\mX}_{t,:}}_2^2 \leq (\widehat{\sigma}_{\thres})^2 \right\}$ for each $t \in [T]$\;
    
    Initialize $\gS_0 \gets \emptyset$, $k \gets 1$, and $\rho \gets T$\;
    \While{$\rho \geq \frac{32 \widehat{R} T}{\log\frac{TH}{\delta}}$}{
        ${\color{blue}t_k^\star} \gets \argmax_{t \in [T]} \left| \gQ_t \setminus \bigcup_{\ell=0}^{k-1} \gS_\ell \right|$\;
        $\gS_k \gets \gQ_{{\color{blue}t_k^\star}} \setminus \bigcup_{\ell=0}^{k-1} \gS_\ell$\;
        $\rho \gets |\gS_k|$, $k \gets k + 1$\;
    }
    
    $\widehat{K} := k - 1$ \tcp*[r]{Estimated \# of clusters}
    
    $\widehat{f}_0(t) := k$ for $t \in \gS_k$ and $k \in [\widehat{K}]$\;

    \tcc{\textcolor{blue}{3. $\ell_2$-Distance-based Clustering}}
    \For{$t \in [T] \setminus \bigcup_{k=1}^{\widehat{K}} \gS_k$}{
        $\widehat{f}_0(t) := \argmin_{k \in [\widehat{K}]} \bignorm{\widehat{\mX}_{{\color{blue}t_k^\star},:} - \widehat{\mX}_{t,:}}_2$
    }
}

\Return{$\widehat{K}, \ \widehat{f}_0 : [T] \rightarrow [\widehat{K}]$}
\end{algorithm2e}

\paragraph{Theoretical Analysis.}
We now present the error rate of Stage I:
\begin{theorem}{Error Rate of Stage I}{initial-spectral}
    Let $\delta \in (0, 1)$.
    Suppose that Assumption~\ref{assumption:ergodic} holds, and that the trajectories are sufficiently long in the following sense:
    \begin{equation*}
        H \gtrsim \frac{1}{\pi_{\min} \gamma_\ps} \log\frac{1}{\pi_{\min} \delta} \vee \frac{S}{\Delta_\mW^2 \gamma_\ps} \log\frac{H}{\delta} \log\frac{TH}{\delta}.
    \end{equation*}
    Then, we have:
    denoting $R = \rank(\mW)$,
    \begin{equation*}
        \sP\left( E_T(\widehat{f}_0, f) \lesssim \frac{T R S}{H \gamma_\ps \Delta_\mW^2} \log\frac{TH}{\delta} \right) \geq 1 - \delta.
    \end{equation*}
\end{theorem}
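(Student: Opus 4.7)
The plan is a perturbation-then-cluster argument specialized to Markov-chain data, broken into four interlocking pieces: (i) control $\widehat{\mW}-\mW$ in operator norm at the scale $\widehat{\sigma}_\thres$; (ii) invoke Weyl's inequality to conclude $\widehat{R}=R$; (iii) turn the spectral-norm bound into a row-wise perturbation bound on the spectral embedding $\widehat{\mX}$ and count the resulting ``bad'' trajectories; and (iv) show that the peeling step of Algorithm~\ref{alg:initial-spectral} labels every good trajectory correctly, making the misclassification count at most the size of the bad set.

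The technical core is showing that with probability at least $1-\delta/2$,
\begin{equation*}
\bignorm{\widehat{\mW}-\mW}_{\op} \lesssim \sqrt{\tfrac{TS}{H\gamma_\ps}\log\tfrac{TH}{\delta}} \;\asymp\; \widehat{\sigma}_\thres.
\end{equation*}
The delicate point is the random denominator $\sqrt{H\widehat{N}(s)}$ inside $\widehat{L}(\gT_t)$. I would first establish the uniform event $\widehat{N}(s)\in[\tfrac{1}{2}H\pi^{(f(t))}(s),\tfrac{3}{2}H\pi^{(f(t))}(s)]$ over all $t\in[T]$ and $s\in\gS$ by Markov-chain Bernstein inequalities~\citep{paulin2015markov}; this is precisely where the hypothesis $H \gtrsim \tfrac{1}{v_{\min}\gamma_\ps}\log\tfrac{1}{\pi_{\min}\delta}$ is consumed. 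On that event, a first-order Taylor expansion reduces each row-wise error $\widehat{L}(\gT_t)-L(\gM^{(f(t))})$ to a linear functional of the centered transition counts $\widehat{N}(s,s')-(H-1)\pi^{(f(t))}(s)p^{(f(t))}(s'|s)$, which again concentrate sharply. An $\varepsilon$-net (or matrix-Bernstein) argument over the $S^2$-dimensional unit sphere, exploiting the independence of the $T$ trajectories, then lifts the row-wise bound to the claimed operator-norm bound.

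With the concentration in hand, Weyl's inequality yields $\lvert\widehat{\sigma}_i-\sigma_i(\mW)\rvert \leq \widehat{\sigma}_\thres/8$. Since $\sigma_{R+1}(\mW)=0$ and $\sigma_R(\mW)\gtrsim\sqrt{T\alpha_{\min}}\,\Delta_\mW/\sqrt{K}$ (the rows of $\mW$ realize only $K$ distinct vectors, mutually separated by at least $\Delta_\mW$), the stated lower bound on $H$ forces $\sigma_R(\mW)\gg\widehat{\sigma}_\thres$, and the thresholding correctly returns $\widehat{R}=R$. A Frobenius-form Davis--Kahan/Wedin bound then provides an orthogonal $\mO\in\RR^{R\times R}$ with $\bignorm{\widehat{\mX}-\mX\mO}_F^2 \lesssim R\bignorm{\widehat{\mW}-\mW}_{\op}^2$. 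Exploiting the isometry $\|\mX_{t,:}-\mX_{t',:}\|_2=\|\mW_{t,:}-\mW_{t',:}\|_2$ between spectral-embedding rows and raw rows (since multiplication by the right singular vectors is isometric on the row space of $\mW$), the rows of $\mX$ are constant within a true cluster and separated by at least $\Delta_\mW$ across clusters. Defining the bad set $\mathcal{B}\triangleq\{\,t: \|\widehat{\mX}_{t,:}-(\mX\mO)_{t,:}\|_2 \geq c\Delta_\mW\,\}$ for a small absolute $c>0$, Markov's inequality yields
\begin{equation*}
|\mathcal{B}| \;\lesssim\; R\bignorm{\widehat{\mW}-\mW}_{\op}^2/\Delta_\mW^2 \;\lesssim\; \tfrac{TRS}{H\gamma_\ps\Delta_\mW^2}\log\tfrac{TH}{\delta},
\end{equation*}
which matches the claim and, under the hypothesis on $H$, is sublinear in $T$. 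A standard inductive analysis of the peeling loop---the $k$-th iteration picks a center $t_k^\star$ in the largest remaining cluster, $\gQ_{t_k^\star}$ captures the good members of that cluster up to overlap with $\mathcal{B}$, and the while-loop terminates after exactly $K$ iterations---then shows $\widehat{K}=K$ and that every $t\notin\mathcal{B}$ is correctly labeled by the nearest-centroid assignment, so $E_T(\widehat{f}_0,f)\leq|\mathcal{B}|$. The chief difficulty is the operator-norm concentration in the first step: the Markov-chain dependence within each trajectory and the random normalization $\sqrt{\widehat{N}(s)}$ preclude any direct matrix-concentration argument, and resolving this requires the stationary-count localization that forces the first condition on $H$; everything that follows is a standard application of matrix-perturbation and clustering arguments.
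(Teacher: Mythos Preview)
Your outline is correct and tracks the paper's proof closely, but two points deserve comment.

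\textbf{The concentration step is simpler than you propose.} The paper does not use Taylor expansion, $\varepsilon$-nets, or matrix Bernstein. It proves a \emph{row-wise} bound $\bignorm{\widehat{\mW}-\mW}_{2\to\infty}\lesssim\sqrt{S/(H\gamma_\ps)\log(TH/\delta)}$ by a direct triangle-inequality decomposition of each entry block,
\[
\sqrt{\pi(s)}\,p(\cdot\mid s)-\tfrac{\widehat N(s,\cdot)}{\sqrt{H\widehat N(s)}}
=\sqrt{\pi(s)}\Bigl(p(\cdot\mid s)-\tfrac{\widehat N(s,\cdot)}{\widehat N(s)}\Bigr)
+\Bigl(\sqrt{\pi(s)}-\sqrt{\tfrac{\widehat N(s)}{H}}\Bigr)\tfrac{\widehat N(s,\cdot)}{\widehat N(s)},
\]
bounding the first piece by reduction to i.i.d.\ $\ell_2$ learning of a discrete distribution conditional on $\widehat N(s)\in[\tfrac12 H\pi(s),\tfrac32 H\pi(s)]$, and the second via Markovian Bernstein on $\widehat N(s)/H$. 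The operator-norm bound is then the crude $\bignorm{A}_2\le\sqrt{T}\,\bignorm{A}_{2\to\infty}$; no independence across trajectories is exploited at this stage. Your Taylor-plus-net route would also work, but is strictly more laborious for the same rate. The paper likewise avoids Davis--Kahan by name: it sets $\widehat{\mY}:=\widehat{\mU}_{1:\widehat R}\widehat{\bm\Sigma}_{1:\widehat R}\widehat{\mV}_{1:\widehat R}^\top$ and bounds $\bignorm{\widehat{\mY}-\mW}_F^2\le 2R\bignorm{\widehat{\mY}-\mW}_2^2$ using only that $\mathrm{rank}(\widehat{\mY}-\mW)\le\widehat R+R\le 2R$ and the Eckart--Young/Weyl estimates $\bignorm{\widehat{\mY}-\widehat{\mW}}_2\le\widehat\sigma_\thres$ and $\bignorm{\widehat{\mW}-\mW}_2\le\widehat\sigma_\thres$.

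\textbf{One genuine gap.} Your claim $\sigma_R(\mW)\gtrsim\sqrt{T\alpha_{\min}}\,\Delta_\mW/\sqrt{K}$ does not follow from pairwise row separation: mutual $\ell_2$-separation of the $K$ distinct rows lower-bounds no singular value when $K\ge 3$ (three well-separated vectors can still be nearly coplanar). Fortunately this claim is unnecessary. The paper only uses $\widehat R\le R$, which is immediate from Weyl and $\sigma_{R+1}(\mW)=0$; the rest of the argument (the Frobenius bound, the $\gI_k/\gO$ combinatorics for the peeling loop, and the final Markov-type count $E_T(\widehat f_0,f)\le 16\bignorm{\widehat{\mY}-\mW}_F^2/\Delta_\mW^2$) goes through verbatim with $\widehat R\le R$. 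You should drop the $\sigma_R(\mW)$ step and verify that your peeling analysis and while-loop termination only need the upper bound on $\widehat R$.
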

\begin{proof}[Proof sketch]
    We first prove $\|\mW - \widehat{\mW}\|_{2 \rightarrow \infty} \lesssim \sqrt{\tfrac{S}{H \gamma_\ps} \log\tfrac{T H}{\delta}}$ (Lemma~\ref{lem:concentration-W}), our key technical novelty.
    From here, the proof largely follows that of \citet[Algorithm 2]{yun2016labeled}.
    Note that the decay rate is \textit{independent} of $\pi_{\min}^{-1}$, which is crucial in obtaining a tight requirement on $H$ when combined with Stage II.
    Such rate is possible as the weighting by $\sqrt{\pi(s)}$ in our embedding $L$ cancels out with both $\widetilde{\gO}\left(1 / \sqrt{H \pi(s)}\right)$ for $\ell_2$-estimation of $p(\cdot | s)$ and $\widetilde{\gO}\left(1 / \sqrt{H \pi(s) \gamma_\ps}\right)$ for the estimation of $\sqrt{\pi(s)}$, which arises from triangle inequality.
    The full proof is deferred to Appendix~\ref{app:initial-spectral}.
\end{proof}

\paragraph{Towards a Parameter-Free Algorithm.}
When $K$ is known, our algorithm does not require knowledge of the pseudo-spectral gap $\gamma_\ps$, and vice versa.
However, to achieve a truly parameter-free algorithm where both $K$ and $\gamma_\ps$ are unknown, one must first estimate $\gamma_\ps$. 
This is because estimating the number of clusters $K$ relies on spectral thresholding derived from Markovian concentration bounds, which inherently depend on $\gamma_\ps$ (line 6 of Algorithm~\ref{alg:initial-spectral}).
Since Stage II is already parameter-free as we will see later, obtaining a reliable estimator for $\gamma_\ps$ is the sole remaining hurdle.
Theoretically, we only require a high-probability lower bound on $\gamma_\ps$; specifically, an estimator $\hat{\gamma}_\ps$ satisfying  $(1/2)\gamma_\ps \le \hat{\gamma}_\ps \le (3/2)\gamma_\ps$ would allow our current analysis to hold up to constant factors.

Crucially, this estimation introduces additional sample complexity requirements on the trajectory length $H$.
A straightforward approach of taking the minimum over per-trajectory estimators, such as the one proposed by \citet[Theorem 1]{wolfer2024mixing}, can furnish the required lower bound.
However, this necessitates a longer trajectory length of $H \gtrsim \gamma_\ps^{-3}(\log T)^2$ (ignoring other factors).
This is strictly more demanding than our known-parameter regime by an extra $\gamma_\ps^{-2}$ factor and a $\log T$ term.
The extra $\log T$ dependence arises from the union bound over $T$ trajectories and can potentially be mitigated via logarithmic subsampling.
Conversely, the extra $\gamma_\ps^{-2}$ factor is likely unavoidable in the fully parameter-free setting, given the minimax lower bounds for estimating the pseudo-spectral gap itself~\citep[Theorem 4]{wolfer2019mixing}.
We defer further details and extended discussions to Appendix~\ref{app:gamma-ps}.

\subsection{Stage II. One-Shot Trajectory Likelihood Improvement}

\paragraph{Algorithm Design.}
By the Neyman-Pearson lemma~\citep{neyman-pearson} and from our information-theoretic lower bound (Theorem~\ref{thm:lower-bound}), we must perform trajectory-wise likelihood testing to achieve the optimal performance.
But, we do not know the transition kernels $p$ beforehand. Hence, we first utilize the ``good enough'' $\widehat{f}_0 : [T] \rightarrow [K]$ from Stage I to estimate the transition kernels $\widehat{p}_0$ (line 1).
Then, we perform likelihood-based reassignment of the cluster labels based on $\widehat{p}_0$ (line 2).
A keen reader may note that this is essentially the hard Expectation-Maximization (EM) algorithm~\citep{celeux1992hardEM,kearns1997hardEM}.

\begin{algorithm2e}[!t]
\caption{Likelihood Improvement}
\label{alg:likelihood}
\Input{$\widehat{f}_0 : [T] \rightarrow [K]$, $\{(s_{t,1}, \cdots, s_{t,H}) \}_{t \in [T]}$}
Estimate $p^{(k)}(\cdot | \cdot)$ for each $k \in [K]$ as follows:
denoting $\widehat{\gC}_0 := (\widehat{f}_0)^{-1}(k)$, for all $s, s' \in \gS$,
\begin{equation*}
    \widehat{p}_0^{(k)}(s' | s) \gets \frac{\sum_{t \in \widehat{\gC}_0} \sum_{h \in [H-1]} \indicator[s_{t,h} = s, s_{t,h+1} = s']}{\sum_{t \in \widehat{\gC}_0} \sum_{h \in [H-1]} \indicator[s_{t,h} = s]}.
\end{equation*}

Refine the cluster estimates via trajectory-wise maximum likelihood estimator:
for each $t \in [T]$,
\begin{equation*}
    \widehat{f}(t) \gets \argmax_{k \in [K]} \left\{ \gL(k; t) \triangleq \sum_{h=1}^{H-1} \log \widehat{p}_0^{(k)}(s_{t,h+1} | s_{t,h}) \right\}
\end{equation*}

\Return{$\widehat{f} : [T] \rightarrow [K]$}
\end{algorithm2e}

\paragraph{Theoretical Analysis.}
For the analysis of Stage II, we consider the following additional assumption:
\begin{assumption}[$\eta$-regularity \textit{across} the chains]
\label{assumption:eta-regularity}
    There exists $\eta_\pi, \eta_p > 1$ such that:
    \begin{equation*}
    \resizebox{\columnwidth}{!}{$
        \begin{aligned}
            \max_{s \in \gS} \max_{k, k'}   \frac{\pi^{(k)}(s)}{\pi^{(k')}(s)} \leq \eta_\pi, \ \max_{s, s' \in \gS} \max_{k, k'} \frac{p^{(k)}(s' | s)}{p^{(k')}(s' | s)} \leq \eta_p.
        \end{aligned}
    $}
    \end{equation*}
\end{assumption}

We now present the final error guarantee of Algorithm~\ref{alg:likelihood}; the proof is deferred to Appendix~\ref{app:likelihood}.
Recalling that $\gD_\pi$ is the stationary divergence (Eqn.~\eqref{eqn:D-pi}),
\begin{theorem}{Error Rate of Stage I+II}{likelihood}
    Suppose Assumptions~\ref{assumption:ergodic} and \ref{assumption:eta-regularity} hold, and that $T, H$ are sufficiently large in the following sense:
    
    \resizebox{1.05\columnwidth}{!}{$
        \begin{aligned}
            &T H \gtrsim \frac{(S \log T)^2}{\gamma_\ps \alpha_{\min} \gD_\pi^2}, \textit{ and } \\
            &H \gtrsim \frac{1}{\gamma_\ps} \left( S^2 \log T \vee \frac{\log T}{\pi_{\min}} \vee \frac{\eta_p \eta_\pi R S \log(TH)}{\alpha_{\min} \gD_\pi \Delta_\mW^2} \right).
        \end{aligned}
    $}
    Then, for some $C_\eta \asymp \frac{1}{\eta_p^2}$, we have that w.h.p.\footnote{with probability tending to $1$ as $T \rightarrow \infty$.},
    \begin{equation}
        E_T(\widehat{f}, f) \lesssim T \exp\left( -  C_\eta \gamma_\ps H \gD_\pi \right).
    \end{equation}
\end{theorem}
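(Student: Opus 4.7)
The plan is to chain Stage~I's guarantee with a trajectory-wise likelihood-test analysis. First, I would condition on the high-probability event from Theorem~\ref{thm:initial-spectral}, which under the stated $H$ requirement yields $E_T(\widehat{f}_0, f) \leq \varepsilon_0 T$ with $\varepsilon_0 \lesssim (R S \log(TH))/(H \gamma_\ps \Delta_\mW^2) \ll \alpha_{\min}$. After fixing the optimal permutation, the dataset feeding $\widehat{p}_0^{(k)}$ consists of a fraction $\geq 1 - O(\varepsilon_0/\alpha_{\min})$ of genuine trajectories from $\gM^{(k)}$. Combining (i) Paulin's Bernstein-type concentration for the empirical state-transition counts on the clean portion—where the condition $H \gtrsim \log(TH/\pi_{\min})/(v_{\min}\gamma_\ps)$ ensures every state is visited $\Omega(\log T)$ times per trajectory—with (ii) the $\eta_p,\eta_\pi$-regularity bounding the contribution of contaminated trajectories by $O(\eta_p\eta_\pi \varepsilon_0/\alpha_{\min})$, I would derive a uniform multiplicative closeness
\begin{equation*}
    \max_{k, s, s'} \left| \log \frac{\widehat{p}_0^{(k)}(s'|s)}{p^{(k)}(s'|s)} \right| \leq \tfrac{1}{100},
\end{equation*}
where the $(\eta_p \eta_\pi RS \log(TH))/(\alpha_{\min} \gD_\pi \Delta_\mW^2 \gamma_\ps)$ term in the $H$ requirement is precisely what makes this possible without an extra $\pi_{\min}^{-1}$ factor.

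Second, for each trajectory $t$ with $f(t) = k$ and each alternative $k' \neq k$, I would decompose
\begin{equation*}
    Z_t(k, k') = \sum_{h=1}^{H-1} \log \frac{\widehat{p}_0^{(k')}(s_{t,h+1}|s_{t,h})}{\widehat{p}_0^{(k)}(s_{t,h+1}|s_{t,h})}
\end{equation*}
into the idealized sum with true kernels $p^{(k')},p^{(k)}$ plus two correction sums, each deterministically $\leq H/50$ by the multiplicative closeness above. Under $\sP_{\Phi_T}$, the idealized sum has conditional mean $-(H-1)\gD_\pi^{(k,k')} + O(\log\eta_\mu)$ by definition of $\gD_\pi^{(k,k')}$ and is the sum of Markov-chain functionals uniformly bounded by $\log\eta_p$ with asymptotic variance controlled by $\gD_\pi^{(k,k')}$ (via the standard one-sided bound $\Var_p(\log(q/p)) \lesssim \eta_p \cdot \KL(p,q)$ on bounded log-likelihood ratios).

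Third, the event $\{Z_t(k,k') \geq 0\}$ forces the idealized sum to exceed its mean by at least $\tfrac{1}{2} H \gD_\pi^{(k,k')}$; Paulin's Bernstein inequality for uniformly ergodic chains then yields
\begin{equation*}
    \sP\left(Z_t(k,k') \geq 0\right) \leq \exp\left( -\Omega\bigl(\gamma_\ps H \gD_\pi^{(k,k')} / \eta_p^2\bigr) \right),
\end{equation*}
where the $1/\eta_p^2$ factor absorbs the interplay between the boundedness $M \asymp \log \eta_p$ and the variance factor in Bernstein's denominator. A union bound over the $K - 1$ alternatives preserves the exponential rate. To upgrade this per-trajectory bound into the stated high-probability bound on $E_T(\widehat{f},f)$, I would condition on $\widehat{p}_0$ (making the $T$ misclassification indicators conditionally independent) and apply a Chernoff bound on their sum, absorbing the failure probabilities from Stage~I and the kernel-estimation step via a union bound that drives the $TH \gtrsim S^2(\log T)^2/(\alpha_{\min}\gD_\pi^2 \gamma_\ps)$ condition.

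The main obstacle is step~one: obtaining a \emph{multiplicative}, rather than merely additive, closeness of $\widehat{p}_0^{(k)}$ to $p^{(k)}$ uniformly over $(s,s')$. This is what allows the kernel-estimation corrections in $Z_t(k,k')$ to be negligible compared to the signal $H \gD_\pi$; an additive bound would scale with $1/p^{(k)}(s'|s)$ after taking $\log$ and ruin the rate. Separating the clean from the contaminated transitions, and leveraging $\eta$-regularity so that contaminating trajectories contribute bounded multiplicative perturbations rather than diverging ones, is the linchpin that connects Stage~I's $\ell_2$-style error guarantee to Stage~II's KL-type error exponent.
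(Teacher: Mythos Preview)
Your high-level strategy matches the paper's, but the linchpin you identify—the \emph{uniform} multiplicative closeness
\[
\max_{k,s,s'}\Bigl|\log\tfrac{\widehat{p}_0^{(k)}(s'|s)}{p^{(k)}(s'|s)}\Bigr|\le \tfrac{1}{100}
\]
—is not achievable under the stated conditions, and even if it were, it is too weak. First, the theorem's requirements on $H$ and $TH$ do not control $p_{\min}:=\min_{k,s,s'}p^{(k)}(s'|s)$ at all (Assumption~\ref{assumption:eta-regularity} only bounds ratios across chains, not the absolute size). The concentration part of your estimate for $\widehat{p}_0^{(k)}(s'|s)$ inevitably carries a factor $1/\sqrt{TH\,\pi^{(k)}(s)\,p^{(k)}(s'|s)}$, which cannot be made uniformly $O(1)$ without a condition like $TH\gtrsim 1/(\pi_{\min}p_{\min})$ that the theorem does not impose. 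Second, even granting the $1/100$ bound, your correction terms are then $\le H/50$, whereas you need them to be $\ll H\gD_\pi$. Since $\gD_\pi$ can be $o(1)$, the deviation ``$\ge \tfrac12 H\gD_\pi^{(k,k')}$'' you need for the Bernstein step does not follow.

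The paper avoids both problems by \emph{not} seeking a uniform bound. Its Lemma~\ref{lem:intermediate} gives a pointwise estimate
\[
\Bigl|\log\tfrac{\widehat{p}_0^{(k)}(s'|s)}{p^{(k)}(s'|s)}\Bigr|
\lesssim \sqrt{\tfrac{\log T}{\gamma_\ps\,\alpha_k TH\,\pi^{(k)}(s)\,p^{(k)}(s'|s)}}+\tfrac{\eta_p\eta_\pi\,e^{(0)}}{\alpha_k T},
\]
and then exploits the cancellation with the weight $\widehat{N}_t(s,s')\approx H\,\pi^{(k)}(s)\,p^{(k)}(s'|s)$ when summing the correction $E_{2,t}=\sum_{s,s'}\widehat{N}_t(s,s')\log(\widehat{p}_0/p)$. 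After summing over $(s,s')$ one gets
\[
E_{2,t}\lesssim S\sqrt{\tfrac{H}{\gamma_\ps\alpha_{\min}T}}\log T+\tfrac{H\eta_p\eta_\pi e^{(0)}}{\alpha_{\min}T},
\]
and it is precisely the stated $TH\gtrsim (S\log T)^2/(\alpha_{\min}\gD_\pi^2\gamma_\ps)$ and $H\gtrsim \eta_p\eta_\pi RS\log(TH)/(\alpha_{\min}\gD_\pi\Delta_\mW^2\gamma_\ps)$ that force each of these two terms to be $\lesssim H\gD_\pi$. The paper then separates the analysis: it defines $\gH$ via the \emph{true} kernels (so $|\gH^\complement|$ is a sum of independent indicators, bounded by Markov after Paulin's Bernstein), and shows deterministically that every $t\in\gH$ is correctly classified once $E_{2,t},E_{3,t}\lesssim H\gD_\pi$. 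Your final Chernoff step is also fragile: conditioning on $\widehat{p}_0$ does not make the misclassification indicators independent, since $\widehat{p}_0$ depends on all trajectories.
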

One can immediately see that the obtained upper bound on the clustering error rate nearly matches the lower bound (Theorem~\ref{thm:lower-bound}), but there are two gaps, which we elaborate on in the subsequent section.

\paragraph{Necessity of Assumption~\ref{assumption:eta-regularity}.}
We first clarify that Assumption~\ref{assumption:eta-regularity} does not require the chains to be uniform, i.e., it does not impose $\pi^{(k)}(s) \asymp 1/S$, as is commonly assumed in stochastic block models~\citep{yun2014adaptive} or block Markov chains~\citep{sanders2020bmc,jedra2023bmdp}. Instead, our assumption only requires that if one chain exhibits non-uniformity in certain states, the other chains must exhibit similar non-uniformity in those same states. 

Crucially, this assumption is only required for the theoretical analysis of Stage II; Stage I's guarantee (Theorem~\ref{thm:initial-spectral}) relies solely on Assumption~\ref{assumption:ergodic}. Theoretically, Assumption~\ref{assumption:eta-regularity} acts as a proof artifact necessary to bound the summands when applying the Markovian Bernstein concentration inequality~\citep{paulin2015markov} to the empirical likelihood ratios. Consequently, our current proof technique does not cover scenarios in which it fails, such as near-deterministic transitions in which a chain remains in a distinct state. 

Empirically, however, this strong assumption is rarely necessary. When Assumption~\ref{assumption:eta-regularity} fails, it often implies that certain chains are highly non-uniform and thus trivially easy to distinguish, a regime where we expect Stage II to remain robust.
To handle potential numerical instabilities in such extreme scenarios -- such as division by zero when candidate models assign near-zero probabilities to observed transitions -- one can easily regularize the likelihood evaluation via Laplace smoothing~\citep{laplace1812}. Provided Stage I yields a sufficiently accurate initial clustering, this smoothed likelihood ratio faithfully captures the high information content, heavily penalizing incorrect models while inducing negligible bias.
The precise theoretical analysis is left to future work.

\paragraph{What If Reversible?}
Conversely, one might ask whether additional structural assumptions, notably reversibility, a common property in Markov chain Monte Carlo (MCMC) methods~\citep{mcmc}, would simplify the analysis or yield sharper bounds. 

Assuming reversibility would indeed simplify certain aspects of the proof.
First, the classical spectral gap~\citep{lezaud1998chernoff} becomes well-defined, eliminating the need to rely on the pseudo-spectral gap.
Second, one could apply Bernstein's inequality specifically tailored for reversible Markov chains~\citep[Theorem 3.9]{paulin2015markov}, which yields slightly improved constants.

However, beyond these simplifications, we do not expect the reversibility assumption to yield fundamentally sharper bounds. The order of Markovian concentration remains the same regardless of reversibility; only the governing gap parameter (spectral vs. pseudo-spectral) changes.
Furthermore, \citet{wolfer2021markov} established minimax-optimal rates for estimating Markov chain kernels, demonstrating that simple visitation-based empirical transition estimation yields a tight upper bound for any ergodic Markov chain, whereas their lower-bound construction involves reversible chains~\citep[Section 6.4]{wolfer2021markov}.
Because our transition estimation in Stage II mirrors this approach (applied to grouped trajectories based on the initial clustering), restricting the scope to reversible chains is unlikely to improve the theoretical clustering error rate.

\section{DISCUSSIONS}
\label{sec:discussions}

Throughout the discussions, for simplicity, we assume $\alpha_{\min} \asymp 1/K$, i.e., roughly balanced cluster sizes.

\paragraph{(1) Gaps Between Upper and Lower Bounds.}
By comparing our lower bound (Theorem~\ref{thm:lower-bound}) and upper bound (Theorem~\ref{thm:likelihood}), we identify two key gaps. 

The first concerns the clustering error rates: the lower bound scales as $\exp(- 2 H \gD)$, while the upper bound scales as $\exp(- C_\eta \gamma_\ps H \gD_\pi)$, where $C_\eta$ is a constant depending on the regularity parameters $\eta_p$ and $\eta_\pi$. The upper bound thus contains an additional multiplicative factor of $\gamma_\ps$ in the exponential term.
The second gap pertains to the requirement on the horizon length $H$. Specifically, the lower bound holds for any $H \geq 2$ and $T \geq 1$, while the upper bound requires $H = \widetilde{\Omega}( \gamma_\ps^{-1} (S^2 \vee \pi_{\min}^{-1} \vee K S) )$ and $TH = \widetilde{\Omega}( \gamma_\ps^{-1} K S^2 )$, omitting logarithmic and other factors.

Specifically, the first gap arises from employing the Markovian Bernstein concentration inequality of \citet{paulin2015markov}, which relies on decomposing  the trajectories of the Markov chain into nearly independent blocks of length proportional to $\gamma_\ps H$~\citep{yu1994blocking,janson2004dependent,lezaud1998chernoff}. Recent results confirm that this dependence on $\gamma_\ps$ in the exponent is indeed unavoidable for (reversible) Markov chains: there exist chains for which the concentration rate matches this dependency~\citep{fan2018bernstein,fan2021hoeffding,rabinovich2020concentration}.
The second gap stems from the concentration of $\widehat{\mW}$ around $\mW$, which in turn requires accurate estimation of the transition kernels. \citet[Theorem 3.2]{wolfer2021markov} show the following requirement on $H$: if $H = o\left( \frac{1}{\varepsilon^2 \pi_{\min}} + \frac{S}{\gamma_\ps} \right)$, then no estimator can achieve $\varepsilon$-accuracy in $\bignorm{\cdot}_{2 \rightarrow \infty}$.\footnote{Their theorem is stated for $\bignorm{\cdot}_{1 \rightarrow \infty}$, but the same argument applies to $\bignorm{\cdot}_{2 \rightarrow \infty}$ with minimal modification.}

We believe these gaps indicate that the current lower bound is loose, as the lower bound analysis assumes complete knowledge of the $p$'s; we leave it to future work to establish a tighter lower bound.
Conceptually, this fits within a nonparametric testing viewpoint~\citep{tsybakov09}: how does uncertainty in the underlying measures degrade testing power and, in turn, the achievable clustering error?

\paragraph{\texorpdfstring{(2) Comparison with \citet{kausik2023mixture}.}{(2) Comparison with Kausik et al. (2023)}}
\label{sec:compare-kausik}
From an algorithmic perspective, a notable limitation of \citet{kausik2023mixture} is that their approach requires prior knowledge of the problem parameters $K$, $\alpha$, and $\Delta$, whereas our proposed algorithm is entirely parameter-free. 
Turning to the statistical guarantees, we recall their condition for exact cluster recovery:
\begin{assumption}[Assumption 2 of \citealt{kausik2023mixture}]
\label{assumption:kausik}
    $\exists \alpha, \Delta > 0 \ \text{s.t.} \ \forall k \neq k' \in [K] \ \exists s = s(k, k'):$
    \begin{equation*}
         \pi^{(k)}(s), \pi^{(k')}(s) \geq \alpha \ \hbox{and} \ \bignorm{p^{(k)}(\cdot | s) - p^{(k')}(\cdot | s)}_2 \geq \Delta.
    \end{equation*}
\end{assumption}
\begin{theoremplain}[Theorem 3 of \citealt{kausik2023mixture}]
\label{thm:kausik}
    Suppose that Assumptions \ref{assumption:ergodic} and \ref{assumption:kausik} hold.
    Set the occurrence and histogram clustering thresholds as $\beta = \frac{\alpha}{3}$ and $\tau = \frac{3\Delta^2}{4}$.
    Then, with $H \gtrsim K^{3/2} t_\mix \frac{\left( \log (T / (\alpha \delta)) \right)^4}{\alpha^3 \Delta^6}$ and $T \gtrsim K^2 S \frac{\log(1/\delta)}{\alpha^3 \Delta^8}$, their Algorithm 2 attains exact clustering with probability at least $1 - \delta$.
\end{theoremplain}
\citet{kausik2023mixture} treat $\alpha$ and $\Delta$ as constants, independent of the number of states $S$ and the trajectory length $H$. This assumption allows their condition on $H$ to avoid any polynomial dependence on $S$, and ensures that the requirement on $T$ scales only linearly with $S$. In effect, they implicitly assume that for every pair $k \neq k'$, there exists a highly visited state $s(k, k')$ with constant separation and from which clusters can be identified.
However, this assumption does not generally hold.
For example, in uniform-like Markov chains where $\pi(s) \asymp 1/S$, they require\footnote{\citet{kausik2023mixture} wrote their guarantees in terms of the mixing time $t_{\mix}$. Here, for a clear comparison, we write them in terms of $\gamma_\ps^{-1}$.
Indeed, both are of same order, up to $\log\pi_{\min}^{-1}$~\citep[Proposition 3.4]{paulin2015markov}.} $H \gtrsim \gamma_\ps^{-1} K^{3/2} S^3 (\log T)^4$ and $T \gtrsim K^2 S^4 \log T$ (with $\delta = 1/T$), which are impractically large.
In the same regime, our algorithm requires only $H \gtrsim \gamma_\ps^{-1} (S^2 \vee K S) \log T$ and $TH \gtrsim \gamma_\ps^{-1} K S^2$, significant improvements.
An interesting future direction is to ask whether an \textit{instance-wise near-optimal} algorithm can be designed -- achieving the near-optimal error rate guarantee when the cluster recovery condition requires only $H \gtrsim \gamma_\ps^{-1} \log T$ under Assumption~\ref{assumption:kausik} with $\alpha \Delta^2 = \Omega(1)$, yet requires only $H \gtrsim \gamma_\ps^{-1} (S^2 \vee \pi_{\min}^{-1} \vee K S) \log T$ (or better) in the worst case, ignoring other factors.

\begin{figure*}[!t]
    \centering
    \includegraphics[width=\linewidth]{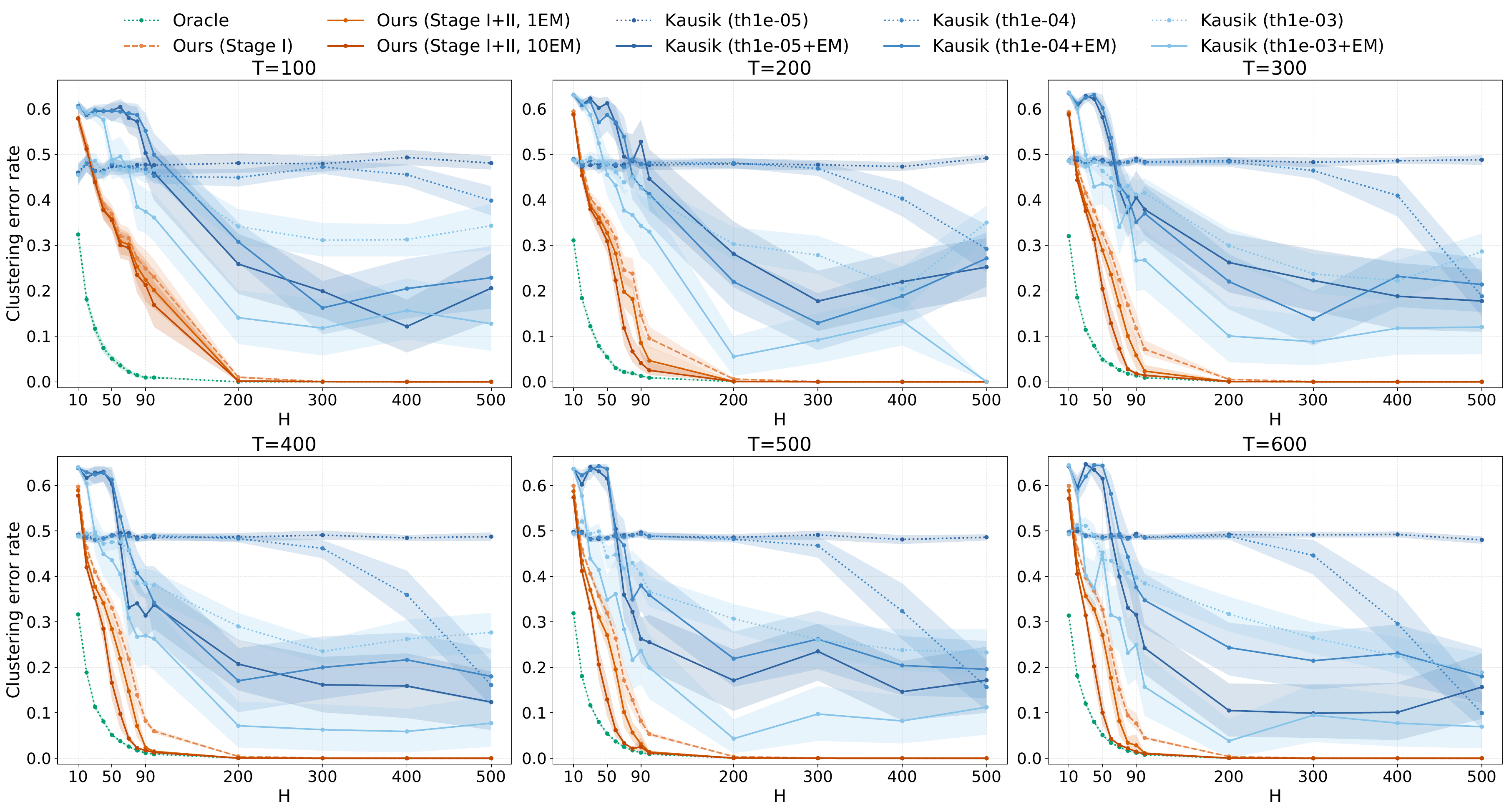}
    \caption{Clustering error rates of our and \citet{kausik2023mixture}'s algorithms on \textbf{Cyclic-Bump MMC}.
    }
    \label{fig:1}
\end{figure*}

\paragraph{(3) Different gaps: $\gD_\pi$, $\Delta_\mW$, and $\alpha \Delta^2$.}
Throughout this paper, we have introduced several notions of separability gaps: the KL gap $\gD_\pi$, the ``aggregated'' $\ell_2$ gap $\Delta_\mW$, and the $\Delta, \alpha$ from \citet{kausik2023mixture}.
The following proposition makes their relationships precise (see Appendix~\ref{app:gaps} for the proof):
\begin{proposition}
\label{prop:gaps}
    We have the following: with $p_{\max} := \max_{k \in [K]} \max_{s, s' \in \gS} p^{(k)}(s' | s)$, 
    \begin{enumerate}
        \item[(a)] $p_{\max} \gD_\pi \gtrsim \alpha \Delta^2$; \ (b) $p_{\max} \gD_\pi + \sqrt{\gD_\pi / \pi_{\min}} \gtrsim \Delta_\mW^2$;
        
        \item[(c)] under Assumption~\ref{assumption:eta-regularity}, $\Delta_\mW^2 \gtrsim \alpha \Delta^2 - \gO((\sqrt{\eta_\pi} - 1)^2)$.
    \end{enumerate}
\end{proposition}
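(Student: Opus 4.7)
My plan is to handle (a), (c), (b) in that order. Parts (a) and (c) reduce to analyzing the single ``distinguishing state'' $s^\star = s(k, k')$ guaranteed by Assumption~\ref{assumption:kausik}, while (b) requires a state-wise decomposition and a more delicate stationary-perturbation argument. The common backbone is a Hellinger-type sharpening of Pinsker: for discrete distributions $p, q$ on $\gS$ with entries bounded by $p_{\max}$, one has $(\sqrt{p_i} - \sqrt{q_i})^2 = (p_i - q_i)^2/(\sqrt{p_i} + \sqrt{q_i})^2 \geq (p_i - q_i)^2/(4 p_{\max})$, which combined with the standard $\KL(p \| q) \geq 2 H^2(p, q)$ yields $\KL(p \| q) \geq \|p - q\|_2^2/(4 p_{\max})$. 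This single inequality produces the $p_{\max}$ factor in (a) and the leading $p_{\max}\gD_\pi$ term in (b); note it is strictly sharper than the naive chain $\KL \geq \|\cdot\|_1^2/2 \geq \|\cdot\|_2^2/2$, which would lose the $p_{\max}$.

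For (a), I drop all terms in $\gD_\pi^{(k, k')}$ except the one indexed by $s^\star$ and apply the bound above:
\[
\gD_\pi^{(k, k')} \geq \pi^{(k)}(s^\star) \, \KL\bigl(p^{(k)}(\cdot|s^\star) \,\big\|\, p^{(k')}(\cdot|s^\star)\bigr) \geq \frac{\alpha \Delta^2}{4 p_{\max}}.
\]
For (c), I restrict $\Delta_\mW^2$ to the same $s^\star$ and write $\sqrt{\pi^{(k)}(s^\star)}\, p^{(k)}(\cdot|s^\star) - \sqrt{\pi^{(k')}(s^\star)}\, p^{(k')}(\cdot|s^\star) = \sqrt{\pi^{(k)}(s^\star)}\bigl(p^{(k)} - p^{(k')}\bigr)(\cdot|s^\star) + \bigl(\sqrt{\pi^{(k)}(s^\star)} - \sqrt{\pi^{(k')}(s^\star)}\bigr) p^{(k')}(\cdot|s^\star)$. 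The reverse triangle inequality, combined with $|\sqrt{\pi^{(k)}(s^\star)} - \sqrt{\pi^{(k')}(s^\star)}| \leq \sqrt{\eta_\pi} - 1$ (from Assumption~\ref{assumption:eta-regularity} and $\sqrt{\pi^{(k')}(s^\star)} \leq 1$) together with $\|p^{(k')}(\cdot|s^\star)\|_2 \leq 1$, yields $\Delta_\mW \geq \sqrt{\alpha}\Delta - (\sqrt{\eta_\pi} - 1)$; squaring and invoking the AM-GM consequence $(x - y)^2 \geq x^2/2 - y^2$ gives $\Delta_\mW^2 \gtrsim \alpha \Delta^2 - (\sqrt{\eta_\pi} - 1)^2$, which is (c).

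For (b), I apply the same row-wise decomposition for every $s$, use $(a + b)^2 \leq 2 a^2 + 2 b^2$, and sum:
\[
\Delta_\mW^2 \lesssim \sum_s \pi^{(k)}(s)\, \|p^{(k)}(\cdot|s) - p^{(k')}(\cdot|s)\|_2^2 + \sum_s \bigl(\sqrt{\pi^{(k)}(s)} - \sqrt{\pi^{(k')}(s)}\bigr)^2 \|p^{(k')}(\cdot|s)\|_2^2.
\]
The transition part is controlled by $p_{\max}\, \gD_\pi$ via the key Hellinger-KL inequality from the first paragraph; the stationary part, using $\|p^{(k')}(\cdot|s)\|_2^2 \leq p_{\max}$ and $\sum_s (\sqrt{\pi^{(k)}(s)} - \sqrt{\pi^{(k')}(s)})^2 = 2 H^2(\pi^{(k)}, \pi^{(k')})$, reduces to $\lesssim p_{\max}\, H^2(\pi^{(k)}, \pi^{(k')})$. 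The main obstacle is then to show $H^2(\pi^{(k)}, \pi^{(k')}) \lesssim \sqrt{\gD_\pi/\pi_{\min}}$: my plan is to chain $2 H^2 \leq \|\pi^{(k)} - \pi^{(k')}\|_1$ (via $(\sqrt{a} - \sqrt{b})^2 \leq |a - b|$) with the pointwise bound $\max_s \KL(p^{(k)}(\cdot|s) \| p^{(k')}(\cdot|s)) \leq \gD_\pi/\pi_{\min}$ (every state has weight at least $\pi_{\min}$ in the average defining $\gD_\pi$), apply Pinsker to obtain $\max_s \|p^{(k)}(\cdot|s) - p^{(k')}(\cdot|s)\|_1 \leq \sqrt{2 \gD_\pi/\pi_{\min}}$, and finally invoke a Markov-chain stationary-perturbation bound (e.g., via Seneta's ergodicity coefficient or a coupling argument) to transfer this worst-case row-wise bound to $\|\pi^{(k)} - \pi^{(k')}\|_1 \lesssim \sqrt{\gD_\pi/\pi_{\min}}$. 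This last perturbation step is where the bulk of the technical work concentrates: naive approaches carry a $1/\gamma_\ps$ or $t_\mix$ factor, and care is needed to absorb such ergodic constants into the stated $\gtrsim$ under Assumption~\ref{assumption:ergodic}.
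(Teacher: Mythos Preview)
Your proposal is correct and follows essentially the same route as the paper: the same row-wise decomposition $\sqrt{\pi}\,p - \sqrt{\pi'}\,p' = \sqrt{\pi}(p-p') + (\sqrt{\pi}-\sqrt{\pi'})\,p'$ for all three parts, the same reduction to the single distinguishing state $s(k,k')$ in (a) and (c), and in (b) the same chain $H^2(\pi^{(k)},\pi^{(k')}) \leq \TV(\pi^{(k)},\pi^{(k')}) \lesssim \max_s\TV(p^{(k)}(\cdot|s),p^{(k')}(\cdot|s)) \leq \sqrt{\gD_\pi/(2\pi_{\min})}$ via a Markov-chain stationary-perturbation bound (the paper invokes Mitrophanov's bound, which indeed carries the uniform-ergodicity constants you anticipated). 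Your Hellinger-based inequality $\KL(p\|q) \geq \|p-q\|_2^2/(4p_{\max})$ is a clean alternative to the paper's Lemma~\ref{lem:KL-L2}, which proves the same inequality with constant $\log(e/2)$ by a direct elementary argument.
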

The inequality {\it (b)} holds up to uniform ergodicity contraction constants (see Appendix \ref{app:gaps}).
Because of our particular choice of $\mW$ (Eqn.~\eqref{eqn:W}), the gaps $\Delta_\mW^2$, $\alpha \Delta^2$, and $\gD_\pi$ are comparable.
Indeed, for $\eta$-regular ergodic Markov chains with $\alpha\Delta^2 \gtrsim (\sqrt{\eta_\pi} - 1)^2$, we roughly have $\alpha \Delta^2 \lesssim \Delta_\mW^2 \lesssim p_{\max} \gD_\pi$.

Intuitively, \citet{kausik2023mixture}'s $\alpha \Delta^2$ accounts for separation from a single ``good'' state; $\Delta_\mW$ aggregates information across all states but in a geometrically suboptimal manner; our $\gD_\pi$ reflects the ``correctly'' aggregated information.
Moreover, for uniform-like Markov chains where $p_{\max} \asymp \frac{1}{S}$, Proposition~\ref{prop:gaps}(b) implies a factor-$S$ gap between $\gD_\pi$ and the other two gaps.

Although our optimal clustering error rates depend only on $\gD_\pi$, the success of $\ell_2$-based spectral clustering in Stage I requires $H \gtrsim \frac{1}{\gD_\pi \Delta_\mW^2}$.
For comparison, \citet{kausik2023mixture} requires $H \gtrsim \frac{1}{(\alpha \Delta^2)^3}$, reflecting a heavier dependence on more suboptimal separability gap.
To make the contrast between $\Delta_\mW^2$ and $\alpha\Delta^2$ explicit, we construct an instance of MMC -- which may be of independent interest -- where the three gaps exhibit a strict hierarchy in terms of $S$-factor (see Appendix~\ref{app:gaps2} for the proof):
\begin{proposition}
\label{prop:separation-Delta}
    There exists an instance of \textbf{clustering in MMC} such that $\gD_\pi \asymp S \Delta_\mW^2 \asymp S^2 \alpha \Delta^2$.
\end{proposition}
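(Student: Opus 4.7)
The plan is to construct an explicit two-cluster instance in which every state contributes equally to the separation. The intuition is that $\alpha\Delta^2$ captures separation from a single ``best'' state while $\Delta_\mW^2$ aggregates across all $S$ states, so making every state equally informative inflates $\Delta_\mW^2$ by a factor of $S$; a further factor of $S$ appears when passing to $\gD_\pi$ because locally $\KL \approx \tfrac{1}{2}\chi^2$ and $\chi^2(p,q) = \sum_{s'}(p(s')-q(s'))^2/q(s') \asymp S \|p-q\|_2^2$ whenever $q \asymp (1/S)\mathbf{1}$.

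\textbf{Construction.} Take $K=2$ with $S$ even, and let $\xi \in \{\pm 1\}^{S \times S}$ be any signed matrix whose row and column sums vanish, for instance $\xi_{s,s'} = (-1)^{s+s'}$. For a small parameter $c \in (0, 1/(4S))$, set
\[
    p^{(1)}(s' \mid s) = \frac{1}{S}, \qquad p^{(2)}(s' \mid s) = \frac{1}{S} + c\,\xi_{s,s'}.
\]
The condition $c < 1/S$ ensures every entry of $p^{(2)}$ is nonnegative, and the vanishing row/column sums of $\xi$ make both matrices doubly stochastic, so both chains are ergodic with common uniform stationary distribution $\pi^{(1)} = \pi^{(2)} = (1/S)\mathbf{1}$.

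\textbf{Evaluating the three gaps.} By symmetry every state is equivalent, so the tightest parameters in Assumption~\ref{assumption:kausik} are $\alpha = 1/S$ and $\Delta^2 = \|p^{(1)}(\cdot|s) - p^{(2)}(\cdot|s)\|_2^2 = Sc^2$, giving $\alpha\Delta^2 = c^2$. Since $\pi^{(1)} = \pi^{(2)}$, the embedding weights factor out and
\[
    \Delta_\mW^2 = \sum_{s \in \gS} \frac{1}{S}\,\bignorm{p^{(1)}(\cdot|s) - p^{(2)}(\cdot|s)}_2^2 = S c^2 = S\,(\alpha\Delta^2),
\]
which already establishes $\Delta_\mW^2 \asymp S\alpha\Delta^2$. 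For $\gD_\pi$, a second-order Taylor expansion of $-\log(1 + Sc\,\xi_{s,s'})$, together with $\sum_{s'}\xi_{s,s'}=0$ and $\xi_{s,s'}^2=1$, yields $\KL(p^{(1)}(\cdot|s), p^{(2)}(\cdot|s)) = \tfrac{1}{2}S^2 c^2 + O(S^3 c^3)$, and an analogous computation handles the reversed KL. Thus $\gD_\pi = \Theta(S^2 c^2)$, so $\gD_\pi/S = \Theta(Sc^2) \gtrsim S\alpha\Delta^2$, completing all three claims.

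The only delicate step is controlling the cubic remainder in the Taylor expansion. The choice $c \leq 1/(4S)$ gives $|Sc\,\xi_{s,s'}| \leq 1/4$, at which point the standard bound $|\log(1+x) - x + x^2/2| \leq |x|^3$ makes the $O(S^3 c^3)$ term a bounded fraction of the leading $S^2 c^2$ term; no other obstacle arises, since the combinatorial construction of $\xi$ is immediate for even $S$, and odd $S$ can be reduced to the even case by appending a dummy self-loop state without affecting any of the scalings.
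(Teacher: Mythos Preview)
Your proof is correct and takes a genuinely different construction from the paper's. The paper fixes a constant-ratio perturbation: it splits $\gS$ into two halves $\gS_+, \gS_-$ of size $S'=S/2$ and sets $p^{(1)}(s'|s) = \tfrac{3}{2S'}$ or $\tfrac{1}{2S'}$ according to which half $s'$ lies in (independently of $s$), with $p^{(2)}$ the mirror image. This yields \emph{different} stationary distributions $\pi^{(1)} \neq \pi^{(2)}$, so the $\Delta_\mW^2$ computation must carry the cross-terms $\sqrt{\pi^{(1)}(s)}-\sqrt{\pi^{(2)}(s)}$ explicitly, but every quantity comes out in closed form (in particular $\KL=\log 3$, no Taylor expansion needed). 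Your construction instead perturbs the uniform kernel by a small doubly-stochastic correction $c\,\xi$, which forces $\pi^{(1)}=\pi^{(2)}=(1/S)\mathbf{1}$ and collapses the $\Delta_\mW^2$ computation to a one-liner; the price is the Taylor expansion of the KL and the cubic-remainder control, which you handle correctly via $|Sc|\le 1/4$. Both routes deliver the same scalings $\alpha\Delta^2\asymp 1/S^2$, $\Delta_\mW^2\asymp 1/S$, $\gD_\pi\asymp 1$ (after fixing $c\asymp 1/S$ in yours). Your approach is arguably cleaner here and leaves $c$ available as a tunable separation parameter; the paper's buys exact constants and avoids any asymptotic remainder argument. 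Incidentally, the paper's appendix only spells out the $\Delta_\mW^2\asymp S\alpha\Delta^2$ part and leaves the $\gD_\pi$ scaling implicit, so your treatment of that step is in fact more complete.
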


\paragraph{(4) Computational Scalability.}
While our algorithm achieves near-optimal statistical guarantees, it is not currently optimized for computational efficiency.
Specifically, the SVD step in Stage I (line 1 of Algorithm~\ref{alg:initial-spectral}) requires $\gO(T S^2)$ space and $\gO\big(T S^2 \min(T, S^2)\big)$ time, which poses a scalability bottleneck for large state spaces $S$ or trajectory counts $T$.
We leave the development of scalable variants to future work; see Appendix~\ref{app:future}\textbf{(1)} for some potential directions.

\section{EXPERIMENTS}
\label{sec:experiments}
\paragraph{Setting.}
We evaluate our algorithm on the \textbf{Cyclic-Bump MMC}, a synthetic instance from Proposition~\ref{prop:separation-Delta} (Appendix~\ref{app:cylic-bump}), with $S = 10$, $\zeta = 0.9 / S$, and $K = 3$.
We compare three variants of our method (known $K$): \texttt{\color{OursStage1}Stage I}, \texttt{\color{OursEM1}Stage I+II (EM1)}, and \texttt{\color{OursEM10}Stage I+II (EM10)}, where the latter tests the benefit of multiple EM iterations.
We benchmark against \citet{kausik2023mixture}, denoted \texttt{\color{KausikP10}Kausik (th1e-05)}, \texttt{\color{KausikP30}Kausik (th1e-04)}, and \texttt{\color{KausikP50}Kausik (th1e-03)}, where `th' refers to the threshold level, a hyperparameter that we sweep over $\{10^{-5}, 10^{-4}, 10^{-3}\}$.
`+EM' denotes running 10 EM iterations on top of the subspace-based clustering of \citet{kausik2023mixture}.
We also include the \texttt{\color{Oracle}Oracle} likelihood test, which is statistically optimal by the Neyman-Pearson lemma~\citep{neyman-pearson}.
Further details are relegated to Appendix~\ref{app:exp-details}.

\paragraph{Results.}
As shown in Figure~\ref{fig:1}, \texttt{\color{OursStage1}Stage I} alone already outperforms \citet{kausik2023mixture}.
We also note that their performance is highly sensitive to the tuning of the threshold hyperparameter.
Adding Stage~II refinement (\texttt{\color{OursEM1}EM1}, \texttt{\color{OursEM10}EM10}) further boosts accuracy---especially in the low-$H$, high-$T$ regime---approaching the \texttt{\color{Oracle}Oracle} curve as theoretically predicted.

\paragraph{Additional Experiments.}
In Appendix~\ref{app:experiments}, we provide four ablation studies: the impact of Stage II iterations, the impact of $S$ and $T$ on performance and runtime, unknown $K$, and varying pseudo-spectral gaps $\gamma_\ps$ (Appendices~\ref{app:ablation-1}--\ref{app:ablation-4}).
We also evaluate on the Last.fm 1K~\citep{lastfm1k} dataset in Appendix~\ref{app:experiments-real}, where we \emph{significantly outperform} \citet{kausik2023mixture}.

\section{CONCLUSION}
\label{sec:conclusion}
This paper studies trajectory clustering in MMCs.
We established an instance-specific lower bound on the achievable clustering error and developed a computationally tractable two-stage algorithm that attains near-optimal rates without requiring any prior knowledge of the underlying MMC model.
The key ingredient is Stage~I, which introduces a new injective Euclidean embedding for ergodic Markov chains, thereby enabling sharp concentration guarantees for spectral clustering.
We numerically validate the efficacy of our algorithm.
Further future directions are detailed in Appendix~\ref{app:future}.

\newpage

\subsection*{Acknowledgements}
We thank all the anonymous reviewers for their helpful comments that helped us improve the manuscript.
A. Prouti\`{e}re is supported by Vetenskapr{\aa}det, Digital Futures, and the Wallenberg AI, Autonomous Systems and Software program.

\bibliographystyle{plainnat}
\bibliography{references}

\clearpage
\section*{Checklist}

\begin{enumerate}

  \item For all models and algorithms presented, check if you include:
  \begin{enumerate}
    \item A clear description of the mathematical setting, assumptions, algorithm, and/or model. [Yes] Section 3,4
    \item An analysis of the properties and complexity (time, space, sample size) of any algorithm. [Yes] Section 3,4
    \item (Optional) Anonymized source code, with specification of all dependencies, including external libraries. [Yes] Appendix F.
  \end{enumerate}

  \item For any theoretical claim, check if you include:
  \begin{enumerate}
    \item Statements of the full set of assumptions of all theoretical results. [Yes] Section 3,4
    \item Complete proofs of all theoretical results. [Yes] The whole Appendix
    \item Clear explanations of any assumptions. [Yes] Section 2, 4
  \end{enumerate}

  \item For all figures and tables that present empirical results, check if you include:
  \begin{enumerate}
    \item The code, data, and instructions needed to reproduce the main experimental results (either in the supplemental material or as a URL). [Yes] Appendix F.
    \item All the training details (e.g., data splits, hyperparameters, how they were chosen). [Yes] Appendix F.
    \item A clear definition of the specific measure or statistics and error bars (e.g., with respect to the random seed after running experiments multiple times). [Yes] Appendix F.
    \item A description of the computing infrastructure used. (e.g., type of GPUs, internal cluster, or cloud provider). [No] Simple CPU experiments. 
  \end{enumerate}

  \item If you are using existing assets (e.g., code, data, models) or curating/releasing new assets, check if you include:
  \begin{enumerate}
    \item Citations of the creator If your work uses existing assets. [Yes] Appendix F
    \item The license information of the assets, if applicable. [Not Applicable]
    \item New assets either in the supplemental material or as a URL, if applicable. [Yes] Appendix F
    \item Information about consent from data providers/curators. [Not Applicable]
    \item Discussion of sensible content if applicable, e.g., personally identifiable information or offensive content. [Not Applicable]
  \end{enumerate}

  \item If you used crowdsourcing or conducted research with human subjects, check if you include:
  \begin{enumerate}
    \item The full text of instructions given to participants and screenshots. [Not Applicable]
    \item Descriptions of potential participant risks, with links to Institutional Review Board (IRB) approvals if applicable. [Not Applicable]
    \item The estimated hourly wage paid to participants and the total amount spent on participant compensation. [Not Applicable]
  \end{enumerate}

\end{enumerate}

\clearpage
\appendix
\thispagestyle{empty}

\onecolumn

\tableofcontents
\newpage

\section{\texorpdfstring{PROOF OF THEOREM~\ref{thm:lower-bound} -- FUNDAMENTAL LOWER BOUND}{PROOF OF THEOREM 3.1 -- FUNDAMENTAL LOWER BOUND}}
\label{app:lower-bound}
Let $(\varepsilon, \delta) \in [0, 1] \times (0, 1/2]$, $\bm\alpha \in \Delta([K])$, and $T \in \sN$ be fixed.
Let $c = \varepsilon T$ be the ``target'' number of misclassifications.
Let $f^{(0)} \triangleq f : [T] \rightarrow [K]$ be the ground-truth clustering satisfying $\bm\alpha^{(0)} := \bm\alpha(f^{(0)}) = \bm\alpha$, and let us denote $\gC_k^{(0)} := (f^{(0)})^{-1}(k) = \{ t \in [T] : f^{(0)}(t) = k \}$ be the set of trajectories in the $k$-th cluster.

We recall the following notion of distance between two clustering functions $f, g : [T] \rightarrow [K]$:
\begin{equation}
    E_T(f, g) := \min_{\sigma \in \Sym(K)} \sum_{t \in [T]} \indicator[f(t) \neq \sigma(g(t))].
\end{equation}
This is a proper metric on a suitable space of allocations, as shown in the following proposition (its proof deferred to the end of this section):
\begin{proposition}
\label{prop:distance}
    Let $\gF(T, K) := [K]^T = \{ f \ | \ f : [T] \rightarrow [K] \}$ and $\widetilde{\gF}(T, K) := \gF(T, K) / \sim$, where $f \sim g$ iff $\exists \sigma \in \Sym(K) \ \text{s.t.} \ f = \sigma \circ g$.
    Then,
    $E_T(\cdot, \cdot)$ is a proper metric on $\widetilde{\gF}(T, K)$.
\end{proposition}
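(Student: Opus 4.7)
The plan is to verify the four defining properties of a metric—well-definedness on the quotient, non-negativity, identity of indiscernibles, symmetry, and the triangle inequality—for $E_T$ on $\widetilde{\gF}(T, K)$. Non-negativity is immediate since $E_T$ is a minimum of sums of indicator functions.

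First, I would check that $E_T$ descends to a well-defined function on the quotient $\widetilde{\gF}(T, K)$. If $f' = \alpha \circ f$ and $g' = \beta \circ g$ for some $\alpha, \beta \in \Sym(K)$, then
\begin{equation*}
    E_T(f', g') = \min_{\sigma \in \Sym(K)} \sum_{t \in [T]} \indicator\left[ \alpha(f(t)) \neq \sigma(\beta(g(t))) \right] = \min_{\sigma \in \Sym(K)} \sum_{t \in [T]} \indicator\left[ f(t) \neq \alpha^{-1} \sigma \beta (g(t)) \right],
\end{equation*}
and since $\sigma \mapsto \alpha^{-1} \sigma \beta$ is a bijection on $\Sym(K)$, the right-hand side equals $E_T(f, g)$. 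For symmetry, the substitution $\sigma \leftrightarrow \sigma^{-1}$ (also a bijection on $\Sym(K)$) together with the identity $\indicator[f(t) \neq \sigma(g(t))] = \indicator[\sigma^{-1}(f(t)) \neq g(t)]$ gives $E_T(f, g) = E_T(g, f)$. Identity of indiscernibles is immediate: $E_T(f, g) = 0$ if and only if there exists $\sigma \in \Sym(K)$ with $f = \sigma \circ g$ pointwise, i.e., $[f] = [g]$ in the quotient.

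The main step is the triangle inequality $E_T(f, h) \leq E_T(f, g) + E_T(g, h)$. Let $\sigma^\star$ and $\tau^\star$ attain the minima defining $E_T(f, g)$ and $E_T(g, h)$, respectively, and consider the composed permutation $\sigma^\star \circ \tau^\star \in \Sym(K)$. For each $t \in [T]$, the contrapositive of the transitivity of equality yields
\begin{equation*}
    \indicator\left[ f(t) \neq \sigma^\star(\tau^\star(h(t))) \right] \leq \indicator\left[ f(t) \neq \sigma^\star(g(t)) \right] + \indicator\left[ \sigma^\star(g(t)) \neq \sigma^\star(\tau^\star(h(t))) \right],
\end{equation*}
and since $\sigma^\star$ is a bijection, the second indicator equals $\indicator[g(t) \neq \tau^\star(h(t))]$. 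Summing over $t$ and taking the minimum over $\Sym(K)$ on the left-hand side gives the desired inequality.

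The only conceptual subtlety is the triangle-inequality step, which is why the quotient by $\Sym(K)$ is needed in the first place: without quotienting, one could not use a permutation like $\sigma^\star \circ \tau^\star$ as a valid ``bridge.'' I do not expect any technical obstacles; the proof is essentially a bookkeeping exercise once the composition trick is identified.
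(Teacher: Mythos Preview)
Your proposal is correct and follows essentially the same approach as the paper: the paper also dismisses positivity and symmetry as trivial and proves the triangle inequality via the same composition-of-permutations idea (it writes the argument with a double minimization and the substitution $\nu' = \nu^{-1}\circ\sigma$, which is equivalent to your choice of $\sigma^\star\circ\tau^\star$). Your additional check that $E_T$ is well-defined on the quotient is a detail the paper leaves implicit.
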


Let us arbitrarily fix ${\color{blue} a} \neq {\color{blue} b} \in [K]$ that we will optimize later.
We construct $M_T$ alternate hypotheses $\{ f^{(m)} : [T] \rightarrow [K] \}_{m \in [M_n]}$ by re-allocating some trajectories from cluster ${\color{blue} b}$ to ${\color{blue} a}$.
We denote the probability measure and expectation w.r.t. each $m$-th allocation as $\sP_m$ and $\E_m$, respectively.

The construction will satisfy the following conditions for all $m \in [M_T]$: denoting $\gC_k^{(m)} := (f^{(m)})^{-1}(k)$ for $k \in [K]$,
\begin{enumerate}[
    label=(\textbf{C\arabic*}), %
    ref=(\textbf{C\arabic*})     %
]
    \item $\gC_{\color{blue} a}^{(0)} \subset \bigcap_{m \in [M_T]} \gC_{\color{blue} a}^{(m)}$, $|\gC_{\color{blue} a}^{(m)}| = \alpha_{\color{blue} a} n + 2c$, and $|\gC_{\color{blue} b}^{(m)}| = \alpha_{\color{blue} b} n - 2c$, \label{C1}
    \item $E_T(f^{(m)}, f^{(\ell)}) > 2c, \quad \forall m, \ell \in \{0\} \cup [M_T]$, \label{C2}
    \item $\gC_{k}^{(m)} = \gC_{k}^{(0)}, \quad \forall k \in [K] \setminus \{{\color{blue} a, b}\}$. \label{C3}
\end{enumerate}
The following lemma quantifies the minimum number of such hypotheses:
\begin{lemma}
\label{lem:M}
    $M_T \geq 2 \left( \frac{\alpha_{\color{blue} b} T}{16 e c} \right)^c$.
\end{lemma}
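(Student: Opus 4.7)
\bigskip\noindent\textbf{Proof proposal.} The plan is to parameterize each alternate hypothesis $f^{(m)}$ by a subset $S^{(m)} \subset \gC_{\color{blue}b}^{(0)}$ of size exactly $2c$, defining $f^{(m)}$ to coincide with $f^{(0)}$ on $[T] \setminus S^{(m)}$ and to re-label every $t \in S^{(m)}$ from cluster ${\color{blue}b}$ to cluster ${\color{blue}a}$. Conditions (C1) and (C3) then hold by construction. For (C2), a Gilbert--Varshamov-style greedy packing argument will be used to exhibit a large family of such subsets whose pairwise symmetric differences exceed $2c$.

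\smallskip\noindent First, I would reduce (C2) to a purely combinatorial constraint on the $S^{(m)}$'s. Under the identity permutation $\sigma = \mathrm{id}$, $\sum_t \one[f^{(m)}(t) \neq f^{(\ell)}(t)]$ is exactly $|S^{(m)} \triangle S^{(\ell)}|$, since the only trajectories where the two hypotheses disagree are those that one hypothesis moved to ${\color{blue}a}$ and the other left in ${\color{blue}b}$. For any non-identity $\sigma \in \Sym(K)$, at least one cluster $k \in [K] \setminus \{{\color{blue}a}, {\color{blue}b}\}$ (or the swap of $\{{\color{blue}a}, {\color{blue}b}\}$ with a third cluster, when $K \geq 3$) is displaced entirely, producing a Hamming distance of order $\alpha_{\min} T \gg 2c$ in the regime of interest; the $K = 2$ case requires a short separate check using that $a$- and $b$-clusters have unequal sizes by (C1). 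Thus it suffices to choose the subsets so that $|S^{(m)} \triangle S^{(\ell)}| > 2c$ for all $m \neq \ell$.

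\smallskip\noindent Next, I would apply the Gilbert--Varshamov greedy construction in the metric space of $2c$-subsets of $\gC_{\color{blue}b}^{(0)}$ equipped with the symmetric-difference distance. The ball of radius $2c$ around any fixed $2c$-subset has cardinality
\begin{equation*}
    V(2c) \;=\; \sum_{j=0}^{c} \binom{2c}{j}\binom{\alpha_{\color{blue}b} T - 2c}{j},
\end{equation*}
because a $2c$-subset $S'$ satisfies $|S' \triangle S| = 2j$ iff it is obtained by swapping $j$ elements of $S$ with $j$ elements of $\gC_{\color{blue}b}^{(0)} \setminus S$. Greedily removing closed balls of radius $2c$ at each step yields at least $\binom{\alpha_{\color{blue}b} T}{2c}/V(2c)$ hypotheses. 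I would then estimate the numerator using $\binom{n}{k} \geq (n/k)^k$ and the denominator by bounding the dominant $j=c$ term via $\binom{2c}{c} \leq 4^c$ and $\binom{\alpha_{\color{blue}b} T - 2c}{c} \leq (e\alpha_{\color{blue}b} T/c)^c$, along with the trivial factor $c+1$ for the sum.

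\smallskip\noindent Substituting these bounds yields a ratio of the form $\tfrac{1}{c+1}\bigl(\alpha_{\color{blue}b} T/(16 e c)\bigr)^{c}$, and sharpening the binomial estimate with Stirling (or a slightly tighter Johnson-scheme inequality) absorbs the $1/(c+1)$ factor and upgrades the leading constant to $2$, matching the claimed bound. The main obstacle I anticipate is the last step, namely carrying a clean constant through the binomial estimates rather than merely an $\exp(-O(\log c))$ loss, as the multiplicative $2$ requires both the Stirling refinement of $\binom{\alpha_{\color{blue}b} T}{2c}$ and careful bookkeeping of the ball volume. Once that is handled, we have $M_T \geq 2(\alpha_{\color{blue}b} T/(16ec))^c$ hypotheses jointly satisfying (C1)--(C3), completing the proof.
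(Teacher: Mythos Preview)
Your approach is essentially identical to the paper's: parameterize each $f^{(m)}$ by a $2c$-subset $S^{(m)}\subset\gC_{\color{blue}b}^{(0)}$, reduce (C2) to $|S^{(m)}\triangle S^{(\ell)}|>2c$, and apply the Gilbert--Varshamov greedy packing to get $M_T\ge \binom{\alpha_{\color{blue}b}T}{2c}\big/\sum_{j=0}^{c}\binom{2c}{j}\binom{\alpha_{\color{blue}b}T-2c}{j}$. The paper does not even discuss the non-identity-permutation issue you raised, so your treatment of (C2) is, if anything, more careful.

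The only place you diverge from the paper is the final constant bookkeeping, and here you are making it harder than necessary. You do not need Stirling or a Johnson-scheme refinement to obtain the leading $2$. The paper bounds the denominator as
\[
\sum_{j=0}^{c}\binom{2c}{j}\binom{\alpha_{\color{blue}b}T-2c}{j}\;\le\;\Bigl(\sum_{j=0}^{c}\binom{2c}{j}\Bigr)\cdot\max_{j\le c}\binom{\alpha_{\color{blue}b}T-2c}{j}\;\le\;2^{2c-1}\Bigl(\frac{e\alpha_{\color{blue}b}T}{c}\Bigr)^{c},
\]
and the numerator as $\binom{\alpha_{\color{blue}b}T}{2c}\ge(\alpha_{\color{blue}b}T/2c)^{2c}$. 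The ratio simplifies directly to $2\bigl(\alpha_{\color{blue}b}T/(16ec)\bigr)^{c}$, with the factor of $2$ coming from the ``$-1$'' in $2^{2c-1}$; there is no $(c+1)$ factor to absorb because the sum over $j$ is handled in one stroke rather than term-by-term.
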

\begin{proof}
    The combinatorial argument here is due to \citet[Appendix B]{yun2019optimal}, which we reproduce here for completeness.
    
    First, note that there are total of $\binom{\alpha_{\color{blue} b} T}{2c}$ partitions satisfying \ref{C1} and \ref{C3} by moving $2c$ trajectories from $\gC_{\color{blue} b}^{(0)}$ to $\gC_{\color{blue} a}^{(m)}$.
    For each such partition, there are at most $\sum_{\ell=0}^c \binom{2c}{\ell} \binom{\alpha_{\color{blue} b} T - 2c}{\ell}$ partitions that still satisfy \ref{C1} and \ref{C3} but violate \ref{C2}, as such partitions must be created via swapping $\ell$ elements from $\gC_{\color{blue} b}^{(m)}$ and $\ell$ elements from $\gC_{\color{blue} a}^{(m)} \setminus \gC_{\color{blue} a}^{(0)}$ with $\ell \leq c$.
    Thus, we have that
    \begin{equation*}
        M_T \geq \frac{\binom{\alpha_{\color{blue} b} T}{2c}}{\sum_{\ell=0}^c \binom{2c}{\ell} \binom{\alpha_{\color{blue} b}T - 2c}{\ell}}
        \overset{(*)}{\geq} \frac{\left( \frac{\alpha_{\color{blue} b} T}{2c} \right)^{2c}}{2^{2c-1} \left( \frac{e \alpha_{\color{blue} b} T}{c} \right)^c}
        = 2 \left( \frac{\alpha_{\color{blue} b} T}{16 e c} \right)^c,
    \end{equation*}
    where at $(*)$, we use the elementary inequalities $\left( \frac{n}{m} \right)^m \leq \binom{n}{m} \leq \left( \frac{e n}{m} \right)^m$ for $n \geq m \geq 1$.
\end{proof}

Then, for a clustering algorithm $\gA$ that outputs a $\widehat{f}_\gA : [T] \rightarrow [K]$, consider the following hypothesis testing procedure:
\begin{equation}
    R_{\color{blue} a}(\gA) = \argmin_{m \in \{0\} \cup [M_T]} \min_{\sigma \in \Sym(K)} \left| \gC_{\color{blue} a}^{(m)} \triangle \widehat{f}_\gA^{-1}(\sigma({\color{blue} a})) \right|,
\end{equation}
where for two sets $A, B$, $A \triangle B := (A \setminus B) \cup (B \setminus A)$ is their symmetric difference.

Intuitively, $R_{\color{blue} a}$ outputs the most likely hypothesis out of $\{0\} \cup [M_T]$, given some estimated cluster for ${\color{blue} a}$.
Let $\gE^{(m)} := \{ R_{\color{blue} a}(\gA) = m \}$ be disjoint events across $m \in \{0\} \cup [M_T]$.
Note that for each hypothesis $m \in [M_T]$, the corresponding $\bm\alpha^{(m)} := \bm\alpha(f^{(m)})$ satisfies $\lVert \bm\alpha - \bm\alpha^{(m)} \rVert_2 = \frac{2\sqrt{2} c}{T} = 2\sqrt{2}\varepsilon \leq \beta$.
As $\gA$ is $(\varepsilon, \beta, \delta)$-locally stable and any Type I/II error w.r.t. each model $\Psi_n^{(m)}$ results in at least $s + 1$ misclassifications (see \ref{C2}), we must have that
\begin{equation}
\label{eqn:stable}
    \min\left\{ \sP_0\left( \gE^{(0)} \right), \ \min_{m \in [M_T]} \sP_m\left( \gE^{(m)} \right) \right\} \geq 1 - \delta.
\end{equation}
Furthermore, we note that
\begin{equation}
\label{eqn:inclusion}
    \sP_0 \left( \bigcup_{m \in [M_T]} \gE^{(m)} \right) \leq \delta,
\end{equation}
as if $\gE^{(m)}$ is true for some $m \in [M_T]$, then
\begin{equation*}
    2c < E_T(f, f^{(m)}) \leq E_T(\widehat{f}_\gA, f) + E_T(\widehat{f}_\gA, f^{(m)})
    \leq E_T(\widehat{f}_\gA, f) + c \Longrightarrow E_T(\widehat{f}_\gA, f) > c,
\end{equation*}
which holds with probability at most $\delta$ due to $\gA$ being $(\varepsilon, \beta, \delta)$-stable.
Note that due to the disjointness of $\gE^{(m)}$'s, we additionally have that
\begin{equation}
\label{eqn:min-prob}
    \frac{1}{M_T} \sum_{m \in [M_T]} \sP_0 \left( \gE^{(m)} \right) \leq \delta
    \Longrightarrow \min_{m \in [M_T]} \sP_0 \left( \gE^{(m)} \right) \leq \frac{\delta}{M_T}.
\end{equation}

We now define the log-likelihood ratio of the given trajectories $\left\{ \gT_t = (s_{t,1}, \cdots, s_{t,H}) \right\}_{t \in [T]}$ under the true model ($m = 0$) and the $m$-th alternate model:
\begin{equation}
    \gQ_T^{(m)} := \log\frac{\sP_m(\{ \gT_t \}_{t \in [T]})}{\sP_0(\{ \gT_t \}_{t \in [T]})}
    = \sum_{t \in \gC_{\color{blue}a}^{(m)} \setminus \gC_{\color{blue}a}^{(0)}} \log\frac{\sP^{({\color{blue}a})}(s_{t,1}, \cdots, s_{t,H})}{\sP^{({\color{blue}b})}(s_{t,1}, \cdots, s_{t,H})}.
\end{equation}
Taking the expectation w.r.t. the $m$-th alternate model, we have
\begin{align*}
    \E_m\left[ \gQ_T^{(m)} \right] &= \sum_{t \in \gC_{\color{blue}a}^{(m)} \setminus \gC_{\color{blue}a}^{(0)}} \E_m\left[ \log\frac{\sP^{({\color{blue}a})}(s_{t,1}, \cdots, s_{t,H})}{\sP^{({\color{blue}b})}(s_{t,1}, \cdots, s_{t,H})} \right] \\
    &= \sum_{t \in \gC_{\color{blue}a}^{(m)} \setminus \gC_{\color{blue}a}^{(0)}} \E_m\left[ \log\frac{\mu^{({\color{blue}a})}(s_{t,1})}{\mu^{({\color{blue}b})}(s_{t,1})} + \sum_{h=1}^{H-1} \log\frac{p^{({\color{blue}a})}(s_{t,h+1} | s_{t,h})}{p^{({\color{blue}b})}(s_{t,h+1} | s_{t,h})} \right] \\
    &= 2c \left[ \KL\left( \mu^{({\color{blue}a})}, \mu^{({\color{blue}b})} \right) + \sum_{h=1}^{H-1} \sum_{s \in \gS} \sP^{({\color{blue}a})}(s_{t,h} = s) \KL\left( p^{({\color{blue}a})}(\cdot | s), p^{({\color{blue}b})}(\cdot | s) \right) \right] \tag{Law of total expectation w.r.t. $s_{t,h}$} \\
    &= 2 (H - 1) c \underbrace{\left[ \frac{1}{H - 1} \KL\left( \mu^{({\color{blue}a})}, \mu^{({\color{blue}b})} \right) + \sum_{s \in \gS} \sP_H^{({\color{blue}a})}(s) \KL\left( p^{({\color{blue}a})}(\cdot | s), p^{({\color{blue}b})}(\cdot | s) \right) \right]}_{=: \gD^{({\color{blue}a}, {\color{blue}b})}}. \tag{$\sP_H^{({\color{blue}a})}(s) := \frac{1}{H - 1} \sum_{h=1}^{H-1} \sP^{({\color{blue}a})}(s_{t,h} = s)$}
\end{align*}

There are two paths from here, one inspired by the arguments of \cite{jedra2023identification} that utilizes data processing inequality, and another inspired by \cite{yun2019optimal} that utilizes Markov's inequality.
The former leads to the non-asymptotic guarantee, while the latter leads to the asymptotic guarantee.

\vspace{0.2cm}
\textbf{Non-asymptotic guarantee.}
We recall the KL-version of the data processing inequality\footnote{This has been referred to as the ``Fundamental Inequality'' in \cite{garivier2019lower}, as using this allows for a strikingly simple proof of distribution-dependent bandit regret lower bounds.}:
\begin{lemma}[Data Processing Inequality; Lemma 1 of \cite{garivier2019lower}]
\label{lem:data-processing}
    Consider a measurable space $(\Gamma, \gG)$ equipped with two probability measure $\sP_1$ and $\sP_2$.
    Then, we have that
    \begin{equation}
        \KL(\sP_1, \sP_2) \geq \sup_Z \kl(\E_1[Z], \E_2[Z]),
    \end{equation}
    where $\sup_Z$ is over all possible $\gG$-measurable random variable $Z : \Omega \rightarrow [0, 1]$ and $\kl$ is the Bernoulli KL divergence, i.e., $\kl(p, q) := p \log\frac{p}{q} + (1 - p) \log\frac{1-p}{1-q}$ for $p, q \in (0, 1)$.
\end{lemma}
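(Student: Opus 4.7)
The plan is to prove the inequality via the standard ``Bernoulli randomization'' trick, reducing it to the data processing inequality for KL divergence under measurable pushforwards. Given a $\gG$-measurable $Z : \Omega \to [0,1]$, I would first enlarge the probability space by taking its product with $([0,1], \mathrm{Leb})$, thereby introducing an independent uniform random variable $U \sim \Unif[0,1]$. Set $B := \indicator[U \leq Z]$, a $\{0,1\}$-valued measurable random variable on the extended space. Under the product measure $\sP_i \otimes \mathrm{Leb}$, one has $\sP_i(B = 1) = \int \sP(U \leq Z(\omega)) \, d\sP_i(\omega) = \E_i[Z]$, so $B$ is Bernoulli with parameter $\E_i[Z]$.

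Next, I would invoke two standard facts. First, by the chain rule for KL on product measures sharing a common second factor, $\KL(\sP_1 \otimes \mathrm{Leb}, \sP_2 \otimes \mathrm{Leb}) = \KL(\sP_1, \sP_2) + \KL(\mathrm{Leb}, \mathrm{Leb}) = \KL(\sP_1, \sP_2)$. Second, the data processing inequality for KL under any measurable mapping $T$ asserts $\KL(\mu_1 \circ T^{-1}, \mu_2 \circ T^{-1}) \leq \KL(\mu_1, \mu_2)$. Applying this with $T(\omega, u) := \indicator[u \leq Z(\omega)]$ gives
\begin{equation*}
\KL(\sP_1, \sP_2) \;=\; \KL(\sP_1 \otimes \mathrm{Leb}, \sP_2 \otimes \mathrm{Leb}) \;\geq\; \KL\bigl(\Ber(\E_1[Z]), \Ber(\E_2[Z])\bigr) \;=\; \kl(\E_1[Z], \E_2[Z]),
\end{equation*}
and taking the supremum over $Z$ concludes the proof.

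The only nontrivial ingredient is the data processing inequality for KL itself, which I would establish in a short auxiliary argument via the conditional Jensen inequality. Writing $\KL(\mu_1, \mu_2) = \E_{\mu_2}[\phi(d\mu_1 / d\mu_2)]$ with the convex function $\phi(x) = x \log x$, and noting that $d(\mu_1 \circ T^{-1})/d(\mu_2 \circ T^{-1}) = \E_{\mu_2}[d\mu_1/d\mu_2 \mid \sigma(T)]$, the conditional Jensen inequality applied pointwise and integrated against $\mu_2 \circ T^{-1}$ yields the desired contraction. I do not anticipate a substantial obstacle; the only subtlety worth flagging is that $Z$ is permitted to be $[0,1]$-valued rather than $\{0,1\}$-valued, which is precisely what forces the randomization step to reduce to a Bernoulli comparison in the first place.
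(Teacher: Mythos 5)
The paper does not prove this lemma at all — it is imported verbatim as Lemma 1 of \citet{garivier2019lower} — so there is no internal proof to compare against; your argument is correct and is essentially the standard proof from that source: augment the space with an independent $U \sim \Unif[0,1]$, note that $\indicator[U \leq Z]$ is $\Ber(\E_i[Z])$ under $\sP_i \otimes \mathrm{Leb}$, and combine tensorization of KL with the data-processing inequality (itself justified by conditional Jensen applied to $x \mapsto x \log x$). The only loose ends are trivial: the case $\sP_1 \not\ll \sP_2$ makes the left-hand side $+\infty$, and the boundary cases $\E_i[Z] \in \{0,1\}$ are handled by the usual conventions for $\kl$.
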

In the above lemma, let $(\Gamma, \gG)$ be our observation space of $T$ trajectories, $\sP_1 = \sP_m$, $\sP_2 = \sP_0$, and $Z = \indicator\lbrace \gE^{(m)}\rbrace$.

Then, we have that
\begin{align*}
    2 (H - 1) c \gD^{({\color{blue}a}, {\color{blue}b})} &= \frac{1}{M_T} \sum_{m \in [M_T]} \KL\left( \sP_m, \sP_0 \right) \\
    &\geq \frac{1}{M_T} \sum_{m \in [M_T]} \kl\left( \sP_m(\gE^{(m)}), \sP_0(\gE^{(m)}) \right) \tag{Lemma~\ref{lem:data-processing}} \\
    &\geq \kl\left( \frac{1}{M_T} \sum_{m \in [M_T]} \sP_m(\gE^{(m)}), \frac{1}{M_T} \sum_{m \in [M_T]} \sP_0\left( \gE^{(m)} \right) \right) \tag{joint convexity of $\kl$} \\
    &= \kl\left( \frac{1}{M_T} \sum_{m \in [M_T]} \sP_m(\gE^{(m)}), \frac{1}{M_T} \sP_0\left( \bigcup_{m \in [M_T]} \gE^{(m)} \right) \right) \tag{$\gE^{(m)}$'s are disjoint} \\
    &\overset{(*)}{\ge} \kl\left( 1 - \delta, \frac{\delta}{M_T} \right) \\ 
    &\overset{(**)}{\ge} (1 - \delta) \log \frac{M_T}{\delta} - \log 2 \\
    &\ge \frac{1}{2} \log \frac{1}{\delta} + \frac{c}{2} \log\frac{\alpha_{\color{blue} b} T}{16ec} - \frac{1}{2} \log 2 \tag{Lemma~\ref{lem:M}, $\delta \in (0, 1/2]$} \\
    &\ge \frac{1}{2} \log \frac{1}{2\delta} + \frac{c}{2} \log\frac{\alpha_{\min} T}{16ec}.
\end{align*}

$(*)$ follows from $\sP_m(\gE^{(m)}) \geq 1 - \delta$ (as the algorithm is $(\varepsilon, \beta, \delta)$-stable), Eqn.~\eqref{eqn:inclusion}, and the following well-known properties of $\kl$:
\begin{itemize}
    \item $q \mapsto \kl(p, q)$ is increasing in $[p, 1]$, as
    \begin{equation*}
        \partial_q \kl(p, q) = -\frac{p}{q} + \frac{1 - p}{1 - q}
        = \frac{q - p}{q(1 - q)} \geq 0,
    \end{equation*}
    \item $p \mapsto \kl(p, q)$ is increasing on $p \in [1/2, 1]$ when $q < 1/2$, as 
    \begin{equation*}
        \partial_p \kl(p, q) 
        = \log\frac{p}{q} - \log\frac{1 - p}{1 - q}
        = \log \frac{p}{1 - p} + \frac{1 - q}{q}
        \geq 0.
    \end{equation*}
\end{itemize}
$(**)$ follows from $\kl(p, q) \geq p \log\frac{1}{q} - \log 2$ (Eqn. (11) of \cite{garivier2019lower}).

We conclude by rearranging the resulting inequality in terms of $\delta$ and optimizing over ${\color{blue}a} \neq {\color{blue}b} \in [K]$.

\qed (for non-asymptotic)

\vspace{0.1cm}
\textbf{Asymptotic guarantee.}
Let $m^* = \argmin_{m \in [M_T]} \sP_0[\gE^{(m)}]$.
Then,
\begin{align*}
    \sP_{m^*}[\gQ_T^{(m^*)} \leq \log M_T] &= \sP_{m^*}\left[ \gQ_T^{(m^*)} \leq \log M_T, \gE^{(m^*)} \right] + \sP_{m^*}\left[\gQ_T^{(m^*)} \leq \log M_T, (\gE^{(m^*)})^\complement \right] \\
    &\leq \exp(\log M_T) \sP_0 \left[\gQ_T^{(m^*)} \leq \log M_T, \gE^{(m^*)} \right] + \sP_{m^*} \left[(\gE^{(m^*)})^\complement \right] \tag{change of measure from the alternate model $m^*$ to the true model} \\
    &\leq M_T \sP_0[\gE^{(m^*)}] + \delta_T \\
    &\leq 2\delta_T \tag{Eqn.~\eqref{eqn:min-prob}},
\end{align*}
where $(\delta_T)_{T \in \sN}$ is such that $\delta_T \rightarrow 0$ as $T \rightarrow \infty$.

Thus, by Markov's inequality,
\begin{align*}
    1 - 2\delta_T &\leq \sP_{m^*}\left[\gQ_T^{(m^*)} > \log M_T\right] \\
    &\leq \frac{2 (H - 1) c \gD^{({\color{blue} a}, {\color{blue} b})}}{\log M_T} \\
    &\leq \frac{2 (H - 1) c \gD^{({\color{blue} a}, {\color{blue} b})}}{c \log\frac{\alpha_{\color{blue}b} T}{16 ec} + \log 2}. \tag{Lemma~\ref{lem:M}} \\
    &\leq \frac{2 (H - 1) c \gD^{({\color{blue} a}, {\color{blue} b})}}{c \log\frac{\alpha_{\min} T}{16 ec} + \log 2}.
\end{align*}

Taking the min over ${\color{blue}a} \neq {\color{blue}b} \in [K]$ and the limit $n \rightarrow \infty$, we have that
\begin{equation}
    \liminf_{T \rightarrow \infty} \frac{2 (H - 1) \gD}{\log \frac{\alpha_{\min}}{16e \varepsilon}} \geq 1.
\end{equation}

\qed(for asymptotic)

\begin{proof}[Proof of Proposition~\ref{prop:distance}]
    Positivity and symmetricity are trivial. Thus, it remains to show the triangle inequality.
    Let $f, g, h \in \gF(T, K)$.
    Then,
    \begin{align*}
        E_T(f, h) &= \min_{\sigma \in \Sym(K)} \sum_{t \in [T]} \indicator[f(t) \neq \sigma(h(t))] \\
        &\leq \min_{\nu \in \Sym(K)} \min_{\sigma \in \Sym(K)} \sum_{t \in [T]} \indicator[f(t) \neq \nu(g(t)) \vee \nu(g(t)) \neq \sigma(h(t))] \\
        &\leq \min_{\nu \in \Sym(K)} \min_{\sigma \in \Sym(K)} \sum_{t \in [T]} \left( \indicator[f(t) \neq \nu(g(t))] + \indicator[\nu(g(t)) \neq \sigma(h(t))] \right) \\
        &\leq  \min_{\nu \in \Sym(K)} \sum_{t \in [T]}  \indicator[f(t) \neq \nu(g(t))] + \min_{\nu \in \Sym(K)} \min_{\sigma \in \Sym(K)} \sum_{t \in [T]} \indicator[\nu(g(t)) \neq \sigma(h(t))] \\
        &= E_T(f, g) + \min_{\nu' \in \Sym(K)} \indicator[g(t) \neq \nu'(h(t))] \tag{$\nu' = \nu^{-1} \circ \sigma$} \\
        &= E_T(f, g) + E_T(g, h).
    \end{align*}
\end{proof}

\newpage

\section{\texorpdfstring{PROOF OF THEOREM~\ref{thm:initial-spectral} -- INITIAL SPECTRAL CLUSTERING}{PROOF OF THEOREM 4.1 -- INITIAL SPECTRAL CLUSTERING}}
\label{app:initial-spectral}
We remark in advance that the proofs of all supporting lemmas and propositions are deferred to the end of this section.

We begin by establishing the following concentration result for $\widehat{\mW}$:
\begin{lemma}[Concentration of $\widehat{\mW}$ to $\mW$]
\label{lem:concentration-W}
    Let $\delta \in (0, 1)$.
    For each $s \in \gS$ and $t \in [T]$, the following holds with probability at least $1 - \delta$, \underline{given} that $H \geq \left( \frac{25}{1 - \pi(s)} \vee 52 \right) \frac{2}{\pi(s) \gamma_\ps} \log\frac{8}{\pi_{\min} \delta^2}$:
    \begin{equation}
        \bignorm{\sqrt{\pi^{(f(t))}(s)} p^{(f(t))}(\cdot | s) - \frac{1}{\sqrt{H \widehat{N}_t(s)}} \widehat{N}_t(s, \cdot)}_2 \leq
        4 \sqrt{\frac{2}{H \gamma_\ps} \log\frac{8 (H \pi(s) + 1)}{\pi_{\min} \delta^2}}.
    \end{equation}
    If we \underline{additionally} assume that $H \geq \frac{9 \pi(s)}{\pi_{\min}}$, the following holds with probability at least $1 - \delta$:
    \begin{equation}
        \bigabs{(\mW)_{t,:} - (\widehat{\mW})_{t,:}} \leq \widehat{r} \triangleq 8 \sqrt{\frac{S}{H \gamma_\ps} \log\frac{H}{\delta}}
        \Longrightarrow
        \bignorm{\mW - \widehat{\mW}}_{2 \rightarrow \infty} \leq 8 \sqrt{\frac{S}{H \gamma_\ps} \log\frac{T H}{\delta}}.
    \end{equation}
\end{lemma}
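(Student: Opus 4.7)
The plan is to split the per-trajectory error state-by-state via an add-and-subtract decomposition that isolates a transition-kernel error from a visit-count (square-root) error. Abbreviating $\pi \equiv \pi^{(f(t))}$, $p \equiv p^{(f(t))}$, and setting $\widehat{p}_t(\cdot | s) := \widehat{N}_t(s,\cdot)/\widehat{N}_t(s)$, for each $s \in \gS$ I write
\[
\sqrt{\pi(s)}\, p(\cdot|s) - \frac{\widehat{N}_t(s,\cdot)}{\sqrt{H \widehat{N}_t(s)}} = \sqrt{\pi(s)}\bigl(p(\cdot|s) - \widehat{p}_t(\cdot|s)\bigr) + \bigl(\sqrt{\pi(s)} - \sqrt{\widehat{N}_t(s)/H}\bigr)\widehat{p}_t(\cdot|s).
\]
By the $\ell_2$-triangle inequality and $\|\widehat{p}_t(\cdot|s)\|_2 \leq \|\widehat{p}_t(\cdot|s)\|_1 = 1$, the per-state bound reduces to two scalar estimates. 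The crucial phenomenon is that the $\sqrt{\pi(s)}$ weighting is exactly what lets each of these estimates become independent of $\pi(s)$, so that summing over $s$ produces an $S$ factor rather than an $S/\pi_{\min}$ factor.

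The visit-count term I control via the Markov-chain Bernstein inequality of \citet[Theorem 3.4]{paulin2015markov} applied to $h \mapsto \mathds{1}[s_h = s]$ (stationary variance $\pi(s)(1-\pi(s))$). The hypothesis on $H$ is calibrated so that, with probability $\geq 1 - \delta/4$, simultaneously (i) $\widehat{N}_t(s)/H \geq \pi(s)/2$ and (ii) $|\widehat{N}_t(s)/H - \pi(s)| \lesssim \sqrt{\pi(s)\log(1/(\pi_{\min}\delta^2))/(H\gamma_\ps)}$ hold. Combining via $|\sqrt{a} - \sqrt{b}| = |a-b|/(\sqrt{a}+\sqrt{b}) \leq |a - b|/\sqrt{\pi(s)}$ cancels the extra $\sqrt{\pi(s)}$ in (ii), leaving $|\sqrt{\pi(s)} - \sqrt{\widehat{N}_t(s)/H}| \lesssim \sqrt{\log(1/(\pi_{\min}\delta^2))/(H\gamma_\ps)}$, independent of $\pi(s)$.

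For the transition-kernel term, the cleanest route is the strong Markov property: conditional on the $n := \widehat{N}_t(s)$ visit times to $s$, the $n$ immediate successors are i.i.d.\ samples from $p(\cdot|s)$. A vector concentration for the empirical multinomial (e.g.\ $\mathbb{E}\|\widehat{p}_t(\cdot|s) - p(\cdot|s)\|_2^2 \leq 1/n$ followed by a bounded-differences deviation) gives $\|\widehat{p}_t(\cdot|s) - p(\cdot|s)\|_2 \lesssim \sqrt{\log(1/\delta')/n}$; a union bound over the realizable values $n \in \{1,\ldots,\lceil 2H\pi(s)\rceil\}$ (constrained by event (i)) introduces the $\log(H\pi(s)+1)$ factor appearing in the stated bound. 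On event (i), $n \geq H\pi(s)/2$, so multiplying by $\sqrt{\pi(s)}$ again cancels $\sqrt{\pi(s)}$ and yields a contribution of order $\sqrt{\log/H}$, which is dominated by the visit-count term since $\gamma_\ps \leq 1$. Adding the two contributions produces the first display.

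For the second display, I square and sum the per-state bound over $s \in \gS$ (bounding $\pi(s) \leq 1$ inside the logarithm), then union-bound over $t \in [T]$ by replacing $\delta$ with $\delta/T$. The extra hypothesis $H \geq 9\pi(s)/\pi_{\min}$ absorbs $\log(1/\pi_{\min})$ into $\log(TH/\delta)$, giving the claimed $\|\mW - \widehat{\mW}\|_{2 \to \infty}$ bound. The main obstacle will be threading the $\sqrt{\pi(s)}$ cancellations \emph{exactly} at each step: any looser estimate (e.g.\ using $\|\widehat{p}(\cdot|s)\|_2$ in place of $\|\widehat{p}(\cdot|s)\|_1 = 1$, or invoking the $\ell_1$ transition rate $\sqrt{S/(H\pi(s))}$ of \citet{wolfer2021markov}) would reintroduce a factor $\pi(s)^{-1/2}$ that ultimately forces $H \gtrsim \pi_{\min}^{-1}$, destroying the very improvement the embedding $L$ is designed to achieve.
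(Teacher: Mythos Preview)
Your proposal is correct and follows essentially the same route as the paper: the same add-and-subtract decomposition, Paulin's Bernstein inequality for the visit-count term with the $|\sqrt{a}-\sqrt{b}|\le|a-b|/\sqrt{\pi(s)}$ cancellation, and the reduction to i.i.d.\ multinomial sampling (via the strong Markov property, which the paper cites as \citet{billingsley1961markov}) followed by a union bound over the realised visit count for the transition-kernel term. The only step you leave implicit is the transfer from the stationary start $\sP_\pi$ to the actual initial distribution $\mu$ via \citet[Proposition 3.10]{paulin2015markov}; this is precisely what produces the $1/\pi_{\min}$ factor inside $\log(8/(\pi_{\min}\delta^2))$ that you correctly write down but do not source.
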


First, as $\bignorm{\mW - \widehat{\mW}}_2 \leq \sqrt{T} \bignorm{\mW - \widehat{\mW}}_{2 \rightarrow \infty}$~\citep[Proposition 6.3]{cape2019twoinfty}, with our choice of $\widehat{\sigma}_\thres := 8 \sqrt{\frac{T S}{H \gamma_\ps} \log\frac{T H}{\delta}}$ (line 3), we have from Lemma~\ref{lem:concentration-W} and Weyl's inequality for singular values~\citep[Problem 7.3.P16]{hornjohnson} that $\sP\left( \widehat{R} \leq R = \rank(\mW) \right) \geq 1 - \delta$.
We will condition on this event throughout the remainder of the proof.

Then, we have that
\begin{align*}
    \bignorm{\widehat{\mX} \widehat{\mV}_{1:R}^\top - \mW}_F^2 &\leq 2 R \bignorm{\widehat{\mX} \widehat{\mV}_{1:R}^\top - \mW}_2^2 \\
    &\leq 4 R \left( \bignorm{\widehat{\mX} \widehat{\mV}_{1:R}^\top - \widehat{\mW}}_2^2 + \bignorm{\widehat{\mW} - \mW}_2^2 \right) \\
    &\leq 8 R (\widehat{\sigma}_{\thres})^2 \tag{with probability at least $1 - \delta$}, \\
    &= 2^9 R \frac{TS}{H \gamma_\ps} \log\frac{TH}{\delta}.
\end{align*}

Let us denote $\widehat{\mY} := \widehat{\mX} \widehat{\mV}_{1:R}$.
We now have the following lemma:
\begin{lemma}
\label{lem:initial-misclassification}
    Additionally assume that $H \gtrsim \frac{S}{\Delta_\mW^2 \gamma_\ps} \log\frac{H}{\delta} \log\frac{TH}{\delta}$.
    Then, if $t \in [T]$ is misclassified in Algorithm~\ref{alg:initial-spectral}, then $\bignorm{\widehat{\mY}_{t,:} - \mW_{t,:}}_2 > \frac{\Delta_\mW}{4}$.
\end{lemma}
Then, we have that with probability at least $1 - \delta$,
\begin{equation*}
    E_T(\widehat{f}_0, f) \leq \frac{\bignorm{\widehat{\mY} - \mW}_F^2}{\frac{\Delta_\mW^2}{2^4}}
    \leq \frac{2^{13} T R S}{H \gamma_\ps \Delta_\mW^2} \log\frac{TH}{\delta}.
\end{equation*}
\qed

\subsection{Proof of Lemma~\ref{lem:concentration-W}}
Note that
\begin{align*}
	&\bignorm{\sqrt{\pi^{(f(t))}(s)} p^{(f(t))}(\cdot | s) - \frac{1}{\sqrt{H \widehat{N}_t(s)}} \widehat{N}_t(s, \cdot)}_2 \\
    &\leq \sqrt{\pi^{(f(t))}(s)} \bignorm{ p^{(f(t))}(\cdot | s) - \frac{\widehat{N}_t(s, \cdot)}{\widehat{N}_t(s)} }_2 + \bigabs{\sqrt{\pi^{(f(t))}(s)} - \sqrt{\frac{\widehat{N}_t(s)}{H}}} \bignorm{\frac{\widehat{N}_t(s, \cdot)}{\widehat{N}_t(s)}}_2 \\
	&\leq \sqrt{\pi^{(f(t))}(s)} \underbrace{\bignorm{ p^{(f(t))}(\cdot | s) - \frac{\widehat{N}_t(s, \cdot)}{\widehat{N}_t(s)} }_2}_{(i)} + \underbrace{\bigabs{\sqrt{\pi^{(f(t))}(s)} - \sqrt{\frac{\widehat{N}_t(s)}{H}}}}_{(ii)} \underbrace{\bignorm{\frac{\widehat{N}_t(s, \cdot)}{\widehat{N}_t(s)}}_1}_{= 1} 
\end{align*}
Let us bound $(i)$ and $(ii)$ separately.
For simplicity, we omit the dependency on $t \in [T]$.

\underline{\textbf{\textit{Bounding $(i)$.}}}
Here, we follow the proof strategy of \citet[Theorem 3.1]{wolfer2021markov}, with two crucial modifications: the quantity being bounded is the $\ell_2$-norm (not $\ell_1$-norm) error, and we only need to bound the error stemming from a single state $s$.

Let $n_s \in \sN$ to be determined later, and let $\sP_\pi$ be the probability measure induced when the Markov chain starts from its stationary distribution.
Then,
\begin{align*}
	&\sP_\pi\left( \bignorm{ p(\cdot | s) - \frac{\widehat{N}(s, \cdot)}{\widehat{N}(s)} }_2 > \varepsilon \right) 
	\leq \sum_{n = n_s}^{3 n_s} \sP_\pi\left( \bignorm{ p(\cdot | s) - \frac{\widehat{N}(s, \cdot)}{\widehat{N}(s)} }_2 > \varepsilon, \ \widehat{N}(s) = n_s \right) + \sP_\pi\left( \widehat{N}(s) \not\in \llbracket n_s, 3 n_s \rrbracket \right).
\end{align*}
For the first term, we utilize the scheme of reducing learning the Markovian transition kernel to learning a discrete distribution, as described in the proof of \citet[Theorem 3.1]{billingsley1961markov}.
Then, denoting $\widehat{p}_n(\cdot | s)$ to be the estimate of $p(\cdot | s)$ from $n$ i.i.d. samples of $p(\cdot | s)$, we have that
\begin{align*}
	\sum_{n = n_s}^{3 n_s} \sP_\pi\left( \bignorm{ p(\cdot | s) - \frac{\widehat{N}(s, \cdot)}{\widehat{N}(s)} }_2 > \varepsilon, \ \widehat{N}(s) = n \right) &\leq \sum_{n = n_s}^{3 n_s} \sP_\pi\left( \bignorm{ p(\cdot | s) - \widehat{p}_n(\cdot | s) }_2 > \varepsilon \right) \\
	&\overset{(*)}{\leq} \sum_{n = n_s}^{3 n_s} \exp\left( - \frac{\varepsilon^2 n}{2} \right) \\
	&\leq (2n_s + 1) \exp\left( - \frac{\varepsilon^2 n_s}{2} \right),
\end{align*}
where $(*)$ follows from the following $\ell_2$-concentration of learning discrete distribution:
\begin{lemma}[Theorem 9 of \cite{canonne2020note}]
	\label{lem:canonne}
	Let $\gS$ be a finite state space, $p \in \gP(\gS)$, and $\delta \in (0, 1)$.
	We are given $\{s_i\}_{i \in [N]}$ with $s_i \overset{i.i.d.}{\sim} p$.
	Let $\hat{p}_N \in \gP(\gS)$ be defined as $\hat{p}_N(s) := \frac{1}{N} \sum_{i \in [N]} \indicator[s_i = s]$ for each $s \in \gS$.
	Then, we have the following:
	\begin{equation}
		\sP\left( \bignorm{p - \hat{p}_N}_2 \geq \varepsilon \right) \leq \exp\left( - \frac{\varepsilon^2 N}{2} \right),
	\end{equation}
\end{lemma}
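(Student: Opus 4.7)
My plan is to prove Lemma~\ref{lem:canonne} via a combination of McDiarmid's bounded differences inequality and a direct variance bound, since the statement is dimension-free (no $|\gS|$ appears) and this dimension-freeness is the hallmark of second-moment-based arguments. The quantity $f(s_1,\dots,s_N) := \lVert p - \hat p_N\rVert_2$ is a function of $N$ i.i.d.\ inputs, and the $\ell_2$ self-normalization makes it ideally suited to a concentration-around-mean plus mean-bound split.

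First, I would establish the bounded differences property. If a single sample $s_i$ is replaced by $s_i'$, then $\hat p_N$ changes by the vector $(e_{s_i'}-e_{s_i})/N$, whose $\ell_2$ norm is at most $\sqrt{2}/N$. Hence by the reverse triangle inequality, $|f(s_1,\dots,s_i,\dots,s_N) - f(s_1,\dots,s_i',\dots,s_N)| \leq \sqrt{2}/N$. Applying McDiarmid's inequality then yields $\sP(f \geq \E f + t) \leq \exp\bigl(-2t^2 / (N \cdot 2/N^2)\bigr) = \exp(-Nt^2)$.

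Second, I would bound the mean $\E f$ using Jensen's inequality together with an explicit variance computation: $\E f \leq \sqrt{\E f^2} = \sqrt{\sum_{s\in\gS}\Var(\hat p_N(s))} = \sqrt{(1 - \lVert p\rVert_2^2)/N} \leq 1/\sqrt{N}$, where the second equality uses that $N \hat p_N(s)$ is binomial with parameters $(N, p(s))$, and the inequality $1 - \lVert p\rVert_2^2 \leq 1$ is the only place a distribution-free slack is introduced. Setting $\varepsilon = 1/\sqrt{N} + t$ and combining these two steps then produces $\sP(\lVert p - \hat p_N\rVert_2 \geq \varepsilon) \leq \exp\bigl(-(\varepsilon\sqrt{N} - 1)^2\bigr)$ for $\varepsilon \geq 1/\sqrt{N}$, which is the canonical form of Canonne's Theorem~9.

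The main obstacle I anticipate is reconciling the constant in the stated bound $\exp(-\varepsilon^2 N/2)$ with what the bounded-differences argument delivers, namely $\exp(-(\varepsilon\sqrt{N}-1)^2)$. These agree asymptotically but are not identical: the stated form is loose for small $\varepsilon$ (indeed trivial when $\varepsilon^2 N/2 \lesssim \log 2$) and is strictly weaker than McDiarmid's bound once $\varepsilon \geq (2 + \sqrt{2})/\sqrt{N}$, where one can verify $(\varepsilon\sqrt{N} - 1)^2 \geq \varepsilon^2 N/2$ by a quadratic inequality. So my proof would either (i) prove the stronger McDiarmid-type bound and then invoke the elementary inequality to obtain the weaker stated form in the relevant regime, or (ii) directly follow the short argument in \cite{canonne2020note}, which derives the bound as written via the same two ingredients but with slightly different bookkeeping (e.g., applying McDiarmid to $f^2$ rather than $f$, or using a Pinelis-type vector Hoeffding for mean-zero $\ell_2$-bounded increments $e_{s_i} - p$). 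Either route suffices for the downstream application in Lemma~\ref{lem:concentration-W}, where only the $\ell_2$ error from a single state matters.
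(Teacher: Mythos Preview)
Your proposal is correct and follows essentially the same route as the paper: bound $\E\lVert \hat p_N - p\rVert_2$ via Jensen plus the explicit variance computation $\E\lVert \hat p_N - p\rVert_2^2 = (1-\lVert p\rVert_2^2)/N$, then apply McDiarmid's bounded-differences inequality to $f(s_1,\dots,s_N)=\lVert \hat p_N - p\rVert_2$. The paper's only bookkeeping difference is that it first imposes $N \geq 4(1-1/S)/\varepsilon^2$ to secure $\E f \leq \varepsilon/2$ and then takes $t=\varepsilon/2$ in McDiarmid, landing directly on $\exp(-\varepsilon^2 N/2)$; it uses $c_i = 1/N$ rather than your (correct) $c_i=\sqrt{2}/N$, so your caution about the exact constant is well placed, though the discrepancy is immaterial for the downstream use in Lemma~\ref{lem:concentration-W}.
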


We now choose $n_s = \frac{H \pi(s)}{2}$ and utilize the Markovian Bernstein concentration, which we recall here:
\begin{lemma}[Theorem 3.4 of \citet{paulin2015markov}]
\label{lem:paulin}
    Suppose of $(X_h)_{h \geq 1}$ is an ergodic Markov chain over $\gS$ with transition probability $p$, initial distribution $\mu$, and pseudo spectral gap $\gamma_\ps$.
    Let $\phi \in L^2(\pi)$ with $\sup_{s \in \gS} |\phi(s) - \E_\pi[\phi]| \leq C$ for some $C > 0$, and $V_\phi := \Var_\pi[\phi]$.
    Then, we have the following:
    for any $u \geq 0$,
    \begin{equation}
        \sP\left( \sum_{h=1}^H (\phi(X_h) - \E_\pi[\phi(X_h)]) \geq u \right) \leq \exp\left( - \frac{u^2 \gamma_\ps}{8 \left( H + \frac{1}{\gamma_\ps} \right) V_\phi + 20 C u} \right).
    \end{equation}
\end{lemma}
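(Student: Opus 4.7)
The plan is to prove a Markovian analogue of the classical Bernstein inequality by controlling the moment generating function (MGF) of $S_H := \sum_{h=1}^H (\phi(X_h) - \E_\pi[\phi(X_h)])$ through spectral quantities of the transition kernel. Without loss of generality I would center $\phi$ so that $\E_\pi[\phi] = 0$, and then apply Chernoff's inequality $\sP_\pi(S_H \geq u) \leq \inf_{\lambda > 0} e^{-\lambda u} \E_\pi[e^{\lambda S_H}]$. An auxiliary step handles a general initial distribution $\mu \neq \pi$ by absorbing a burn-in of order $1/\gamma_\ps$, which is what produces the $(H + 1/\gamma_\ps)$ factor rather than $H$ alone in the stated variance term.

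First I would rewrite the MGF in operator form. Setting $P_\lambda := \diag(e^{\lambda \phi}) P$, one has $\E_\pi[e^{\lambda S_H}] = \langle \pi, P_\lambda^{H-1} e^{\lambda \phi}\rangle$, so the task reduces to bounding the Perron root $\rho(P_\lambda)$. Analytic perturbation theory around $\lambda = 0$ (Kato) yields an expansion $\log \rho(P_\lambda) = \tfrac{\lambda^2}{2}\sigma_{\mathrm{as}}^2(\phi) + O(\lambda^3 C)$, where $\sigma_{\mathrm{as}}^2(\phi)$ is the asymptotic variance in the Markov CLT. The second step is to control $\sigma_{\mathrm{as}}^2(\phi)$ by the pseudo-spectral gap. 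For reversible chains with spectral gap $\gamma$, a Dirichlet-form argument gives $\sigma_{\mathrm{as}}^2(\phi) \leq 2 V_\phi / \gamma$. For the general ergodic case, the same bound is applied to the multiplicative reversibilization $(P^*)^k P^k$ and then optimized over $k$, producing $\sigma_{\mathrm{as}}^2(\phi) \leq 2 V_\phi / \gamma_\ps$. Plugging back into Chernoff and optimizing $\lambda$ over the admissible range $\lambda \in (0, c/C)$ yields precisely the stated Bernstein tail.

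The main obstacle is the operator-perturbation step: tracking the cubic remainder $O(\lambda^3 C)$ with the explicit sharp constants ($8$ and $20$) that appear in the lemma requires delicate accounting. A cleaner and more intuitive alternative avoids spectral perturbation entirely and proceeds via \emph{blocking}: partition $\{1,\dots,H\}$ into $m \asymp H\gamma_\ps$ consecutive blocks of length $\Theta(1/\gamma_\ps)$, use a Marton-style coupling together with the pseudo-spectral gap to approximate block sums as nearly independent, and then apply classical scalar Bernstein to the $m$ block sums, whose variances inherit the $V_\phi/\gamma_\ps$ scaling. Either route recovers the claimed bound; the pseudo-spectral gap enters in exactly the same way, as the inverse of the effective block length (equivalently, the mixing time, up to $\log(1/\pi_{\min})$ factors).

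Since the lemma is stated as a verbatim citation of \citet[Theorem 3.4]{paulin2015markov}, no new argument beyond reproducing the original is required; the statement is invoked as a black box in our concentration analysis of $\widehat{\mW}$. What matters for our downstream use is that the bound is genuinely Bernstein-type: the variance-dominated regime (small $u$) gives a sub-Gaussian tail with rate $\gamma_\ps / (H V_\phi)$, while the linear-in-$u$ regime (large $u$) gives a sub-exponential tail with rate $\gamma_\ps / (C)$. Both regimes are needed: the former to bound the $\ell_2$-error of the transition-kernel estimator via concentration of counts $\widehat{N}(s,s')/H$, and the latter to handle the rare-state regime where $\pi(s)$ is close to $\pi_{\min}$.
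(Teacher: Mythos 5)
The paper does not prove this lemma at all: it is imported verbatim as Theorem 3.4 of \citet{paulin2015markov} and used as a black box, which is exactly how you treat it, so your proposal is consistent with the paper's handling of the statement. Your sketched reconstructions (Chernoff bound via the perturbed operator $P_\lambda$ with the asymptotic variance controlled by $\gamma_{\mathrm{ps}}$ through the multiplicative reversibilization, or alternatively a blocking argument) are plausible summaries of how the original result is obtained, and nothing further is required here.
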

For two integers $a \geq b$, denote $\llbracket a, b \rrbracket := \{a, a+1, \cdots, b\}$.
We then have that
\begin{align*}
	\sP_\pi\left( \widehat{N}(s) \not\in \left\llbracket \frac{H \pi(s)}{2}, \frac{3 H \pi(s)}{2} \right\rrbracket \right) &= \sP_\pi\left( \widehat{N}(s) - H \pi(s) \not\in \left\llbracket - \frac{H \pi(s)}{2}, \frac{H \pi(s)}{2} \right\rrbracket \right) \\
	&\leq \sP_\pi\left( \left| \widehat{N}(s) - H \pi(s) \right| > \frac{H \pi(s)}{2} \right) \\
	&\leq 2 \exp\left( - \frac{(H \pi(s) / 2)^2 \gamma_\ps}{8 \left( H + \frac{1}{\gamma_\ps} \right) \pi(s) (1 - \pi(s)) + 20 (H \pi(s) / 2)} \right) \\
	&\leq 2 \exp\left( - \frac{H \pi(s) \gamma_\ps}{104} \right). \tag{Assume that $H > \frac{1}{\gamma_\ps}$}
\end{align*}
Combining everything, we have that
\begin{equation*}
	\sP_\pi\left( \bignorm{ p(\cdot | s) - \frac{\widehat{N}(s, \cdot)}{\widehat{N}(s)} }_2 > \varepsilon \right)
	\leq (H\pi(s) + 1) \exp\left( - \frac{\varepsilon^2 H\pi(s)}{2} \right) + 2 \exp\left( - \frac{H \pi(s) \gamma_\ps}{104} \right),
\end{equation*}
and thus, with $H \geq \frac{1}{\gamma_\ps} \vee \frac{104}{\pi(s) \gamma_\ps} \log\frac{2}{\delta}$, we have that
\begin{equation}
\label{eqn:p-concentration}
	\sP_\pi\left( \bignorm{ p(\cdot | s) - \frac{\widehat{N}(s, \cdot)}{\widehat{N}(s)} }_2 > \sqrt{\frac{2}{H \pi(s)} \log\frac{2(H\pi(s) + 1)}{\delta}} \right)
	\leq \delta.
\end{equation}

Here, we recall the following result that quantifies the price of non-stationarity due to the initial distribution $\mu$ not necessarily being the stationary distribution $\pi$:
\begin{lemma}[Proposition 3.10 of \citet{paulin2015markov}]
	\label{lem:paulin-3.10}
	Let $(X_1, \cdots, X_H)$ be a (time-homogeneous) Markov chain with stationary distribution $\pi$.
	For any measurable $g : \gS^H \rightarrow \sR$ and initial distribution $\mu \in \Delta(\gS)$, we have that for any $u \geq 0$,
	\begin{equation}
		\sP_\mu\left( g(X_1, \cdots, X_H) \geq u\right) \leq \sqrt{ \left[ \sum_{s \in \gS} \frac{\mu(s)^2}{\pi(s)} \right] \sP_\pi\left( g(X_1, \cdots, X_H) \geq u\right)}.
	\end{equation}
	Especially as $|\gS| < \infty$, we can bound $\sum_{s \in \gS} \frac{\mu(s)^2}{\pi(s)} \leq \frac{1}{\pi_{\min}}$.
\end{lemma}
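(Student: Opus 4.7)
The plan is to prove the inequality by a straightforward change of measure on the initial state followed by a single application of Cauchy--Schwarz. The key observation is that under both $\sP_\mu$ and $\sP_\pi$, the Markov dynamics after step $1$ are identical — only the law of $X_1$ differs — so conditioning on $X_1 = s$ isolates the role played by the initial distribution.

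Concretely, I would introduce the event $A = \{g(X_1,\ldots,X_H) \geq u\}$ and write
\begin{equation*}
\sP_\mu(A) = \sum_{s \in \gS} \mu(s)\, \sP(A \mid X_1 = s) = \sum_{s \in \gS} \sqrt{\pi(s)} \cdot \frac{\mu(s)}{\sqrt{\pi(s)}} \, \sP(A \mid X_1 = s),
\end{equation*}
where the rewriting is valid because Assumption~\ref{assumption:ergodic} ensures $\pi(s) > 0$ on $\supp(\mu) \subseteq \gS$. Applying Cauchy--Schwarz in $L^2(\gS)$ to this sum (splitting the summand as $(\mu(s)/\sqrt{\pi(s)}) \cdot (\sqrt{\pi(s)}\, \sP(A \mid X_1 = s))$) gives
\begin{equation*}
\sP_\mu(A) \leq \sqrt{\sum_{s \in \gS} \frac{\mu(s)^2}{\pi(s)}} \cdot \sqrt{\sum_{s \in \gS} \pi(s)\, \sP(A \mid X_1 = s)^2}.
\end{equation*}

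For the second factor, I would use the trivial bound $\sP(A \mid X_1 = s)^2 \leq \sP(A \mid X_1 = s)$ (since these quantities lie in $[0,1]$), which collapses the inner sum to $\sum_s \pi(s) \sP(A \mid X_1 = s) = \sP_\pi(A)$. Substituting back yields exactly the claimed inequality. The final remark that $\sum_s \mu(s)^2/\pi(s) \leq 1/\pi_{\min}$ is immediate from $\sum_s \mu(s)^2 \leq \sum_s \mu(s) = 1$ since $\mu(s) \in [0,1]$.

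I do not expect any real obstacle in this proof; the only subtlety is choosing the right Cauchy--Schwarz splitting so that the $\pi(s)$ weights recombine to recover $\sP_\pi(A)$ rather than some other functional. An alternative (equivalent) route would be to view $d\mu/d\pi$ as a Radon--Nikodym derivative and apply Cauchy--Schwarz to $\E_\pi[(d\mu/d\pi)(X_1)\, \indicator_A]$, yielding $\|d\mu/d\pi\|_{L^2(\pi)} \cdot \sqrt{\sP_\pi(A)}$; this is essentially the same argument phrased via the $\chi^2$-divergence $\chi^2(\mu \,\|\, \pi) = \sum_s \mu(s)^2/\pi(s) - 1$, and it has the conceptual advantage of making clear that the price of non-stationarity scales with the $\chi^2$-distance between $\mu$ and $\pi$.
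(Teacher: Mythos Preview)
Your proof is correct. The paper does not provide its own proof of this lemma---it is quoted verbatim as Proposition~3.10 of \citet{paulin2015markov} and used as a black box---so there is nothing in the paper to compare against; your argument via conditioning on $X_1$ and Cauchy--Schwarz (equivalently, the $\chi^2$-divergence formulation you mention) is precisely the standard proof of this change-of-measure bound.
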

Thus,
\begin{equation*}
	\sP\left( \bignorm{ p(\cdot | s) - \frac{\widehat{N}(s, \cdot)}{\widehat{N}(s)} }_2 > \sqrt{\frac{2}{H \pi(s)} \log\frac{2(H\pi(s) + 1)}{\delta}} \right)
	\leq \sqrt{\frac{\delta}{\pi_{\min}}}.
\end{equation*}
Reparametrizing finally gives
\begin{equation}
	\sP\left( \bignorm{ p(\cdot | s) - \frac{\widehat{N}(s, \cdot)}{\widehat{N}(s)} }_2 > \sqrt{\frac{2}{H \pi(s)} \log\frac{8(H\pi(s) + 1)}{\pi_{\min} \delta^2}} \right)
	\leq \frac{\delta}{2},
\end{equation}
\textit{given} that $H \geq \frac{1}{\gamma_\ps} \vee \frac{104}{\pi(s) \gamma_\ps} \log\frac{8}{\pi_{\min} \delta^2}$.

\underline{\textbf{\textit{Bounding $(ii)$.}}}
Note that
\begin{equation*}
	\bigabs{\sqrt{\pi(s)} - \sqrt{\frac{\widehat{N}(s)}{H}}} = \frac{\bigabs{\pi(s) - \frac{\widehat{N}(s)}{H}}}{\sqrt{\pi(s)} + \sqrt{\frac{\widehat{N}(s)}{H}}}
	\leq \frac{1}{\sqrt{\pi(s)}} \bigabs{\pi(s) - \frac{\widehat{N}(s)}{H}}.
\end{equation*}
Again, we invoke Lemma~\ref{lem:paulin}:
\begin{align*}
	&\sP_\pi\left( \bigabs{\pi(s) - \frac{\widehat{N}(s)}{H}} > \sqrt{\frac{32 \pi(s) (1 - \pi(s))}{H \gamma_\ps} \log\frac{8}{\pi_{\min} \delta^2}} \right) \\
	&\leq 2 \exp\left( - \frac{32 \pi(s) (1 - \pi(s)) H \log\frac{8}{\pi_{\min}\delta^2}}{8 \left( H + \frac{1}{\gamma_\ps} \right) \pi(s) (1 - \pi(s)) + 20 \sqrt{\frac{32 \pi(s) (1 - \pi(s)) H}{\gamma_\ps} \log\frac{8}{\pi_{\min} \gamma_\ps} }} \right) \\
	&\leq 2 \exp\left( - \frac{2 \sqrt{H \pi(s) (1 - \pi(s))} \log\frac{8}{\pi_{\min}\delta^2}}{\sqrt{H \pi(s) (1 - \pi(s))} + 5 \sqrt{\frac{2}{\gamma_\ps} \log\frac{8}{\pi_{\min} \delta^2}}} \right) \tag{$H \geq \frac{1}{\gamma_\ps}$} \\
	&\leq 2 \exp\left( - \frac{2 \sqrt{H \pi(s) (1 - \pi(s))} \log\frac{8}{\pi_{\min}\delta^2}}{2 \sqrt{H \pi(s) (1 - \pi(s))}} \right) \tag{$H \geq \frac{50}{\pi(s) (1 - \pi(s)) \gamma_\ps} \log\frac{8}{\pi_{\min} \delta^2}$} \\
	&= \frac{\pi_{\min} \delta^2}{4}.
\end{align*}
Using Lemma~\ref{lem:paulin-3.10}, we then have
\begin{equation}
\label{eqn:pi-concentration}
	\sP\left( \bigabs{\pi(s) - \frac{\widehat{N}(s)}{H}} > \sqrt{\frac{32 \pi(s) (1 - \pi(s))}{H \gamma_\ps} \log\frac{8}{\pi_{\min} \delta^2}} \right) \leq \frac{\delta}{2}.
\end{equation}

\underline{\textbf{\textit{Combining everything.}}}
By union bound, we have that with probability at least $1 - \delta$,
\begin{align}
	\bignorm{\sqrt{\pi^{(f(t))}(s)} p^{(f(t))}(\cdot | s) - \frac{1}{\sqrt{H \widehat{N}_t(s)}} \widehat{N}_t(s, \cdot)}_2 \nonumber &\leq \sqrt{\frac{2}{H} \log\frac{8(H\pi(s) + 1)}{\pi_{\min} \delta^2}} + \sqrt{\frac{32 (1 - \pi(s))}{H \gamma_\ps} \log\frac{8}{\pi_{\min} \delta^2}} \nonumber \\
	&\leq 4 \sqrt{\frac{2}{H \gamma_\ps} \log\frac{8 (H \pi(s) + 1)}{\pi_{\min} \delta^2}}.
\end{align}
\qed

\subsection{Proof of Lemma~\ref{lem:initial-misclassification}}
Here, we are largely inspired by the proof strategies of \citet[Appendix C]{yun2014adaptive} and \citet[Appendix C.2]{yun2016labeled}.
    
We start by defining the following sets:
\begin{align}
    \gI_k &:= \left\{ t \in f^{-1}(k) : \bignorm{\widehat{\mY}_{t,:} - \mW_{t,:}}_2^2 \leq \frac{1}{4} (\widehat{r})^2 \log\frac{TH}{\delta} \right\}, \quad k \in [K] \\
    \gO &:= \left\{ t \in [T] : \min_{t' \in [T]} \bignorm{\widehat{\mY}_{t,:} - \mW_{t',:}}_2^2 \geq 4 (\widehat{r})^2 \log\frac{TH}{\delta} \right\}.
\end{align}
Intuitively, $\gI_k$ is the subset of trajectories of $f^{-1}(k)$ that will be classified accurately (with high probability), and $\gO$ is the set of trajectories that are clustered at the end (i.e., lines 12-13).

Then the following properties hold:
\begin{itemize}
    \item[(i)] \underline{$\left( \bigcup_{k=1}^K \gI_k \right) \cap \gQ_t = \emptyset$ for any $t \in \gO$.}
    This is because for any $t' \in \gI_k$,
    \begin{align*}
        \bignorm{\widehat{\mY}_{t,:} - (\widehat{\mY})_{t',:}}_2^2
        &\geq \frac{1}{2} \bignorm{\widehat{\mY}_{t,:} - \mW_{t,:}}_2^2 - \bignorm{(\widehat{\mY})_{t',:} - \mW_{t,:}}_2^2 \tag{$(a + b)^2 \leq 2(a^2 + b^2)$} \\
        &\geq 2 (\widehat{r})^2 \log\frac{TH}{\delta} - \frac{1}{4} (\widehat{r})^2 \log\frac{TH}{\delta} \tag{Definition of $\gO$ and $\gI_k$} \\
        &= \frac{7}{4} (\widehat{r})^2 \log\frac{TH}{\delta}
        \geq (\widehat{r})^2 \log\frac{TH}{\delta},
    \end{align*}
    i.e., $t' \in \gQ_t \Rightarrow t' \not\in \bigcup_{k=1}^{K_s} \gI_k$.

    \item[(ii)] \underline{$\left| [T] \setminus \cup_{k=1}^K \gI_k \right| \leq \frac{R T}{\log\frac{TH}{\delta}}$.}
    This is because
    \begin{equation*}
        \left| [T] \setminus \cup_{k=1}^K \gI_k \right| \leq \frac{\bignorm{\widehat{\mY} - \mW}_F^2}{\min_{t \in [T] \setminus \cup_{k=1}^K \gI_k} \bignorm{\widehat{\mY}_{t,:} - \mW_{t,:}}_2^2}
        \leq \frac{8R T  (\widehat{r})^2}{\frac{1}{4} (\widehat{r})^2 \log\frac{TH}{\delta}}
        = \frac{32 R T}{\log\frac{TH}{\delta}}
    \end{equation*}
    
     \item[(iii)] \underline{$\gI_k \subseteq \gQ_t$ for any $t \in \gI_k$.}
    This is because for any $t, t' \in \gI_k$,
    \begin{align*}
        \bignorm{\widehat{\mY}_{t',:} - \widehat{\mY}_{t,:}}_2^2 &\leq 2 \bignorm{\widehat{\mY}_{t',:} - \mW_{t',:}}_2^2 + 2 \bignorm{\widehat{\mY}_{t,:} - \mW_{t,:}}_2^2
        \leq (\widehat{r})^2 \log\frac{TH}{\delta}
    \end{align*}

    \item[(iv)] \underline{If $\gQ_t \cap \gI_k \neq \emptyset$, then $\gQ_t \cap \bigcup_{k' \neq k} \gI_k = \emptyset$.}
    We show this via \textit{reductio ad absurdum}.
    Suppose there exists $k \neq k'$ such that $\gQ_t \cap \gI_k \neq \emptyset$ and $\gQ_t \cap \gI_{k'} \neq \emptyset$.
    Let $t_k \in \gQ_t \cap \gI_k$ and $t_{k'} \in \gQ_t \cap \gI_{k'}$.
    Then, by definition of $\gQ_t$,
    \begin{align*}
        \widehat{r} \sqrt{\log\frac{TH}{\delta}} &\geq \bignorm{\widehat{\mY}_{t_k,:} - \widehat{\mY}_{t_{k'},:}}_2  \\
        &\geq \bignorm{\mW_{t_k,:} - \mW_{t_{k'},:}}_2 - \bignorm{\widehat{\mY}_{t_k,:} - \mW_{t_k,:}}_2 - \bignorm{\widehat{\mY}_{t_{k'},:} - \mW_{t_{k'},:}}_2 \\
        &\geq \Delta_\mW - \widehat{r} \sqrt{\log\frac{TH}{\delta}}.
    \end{align*}
    Recalling the definition of $\widehat{r}$, one can easily check that if $H \geq 2^8 \frac{S}{\Delta_\mW^2 \gamma_\ps} \log\frac{H}{\delta} \log\frac{TH}{\delta}$, above cannot be true, a contradiction.
\end{itemize}

\begin{claim}
    $\widehat{K} = K$.
\end{claim}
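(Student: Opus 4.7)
The plan is to prove $\widehat K \geq K$ and $\widehat K \leq K$ separately, leveraging the four structural properties (i)-(iv) together with the auxiliary fact $\widehat R = R$ (to be established via Weyl's inequality combined with the lower bound $\sigma_R(\mW) \gtrsim \sqrt{\alpha_{\min} T}\,\Delta_\mW$, which exceeds $\widehat{\sigma}_\thres$ under the parameter regime of Theorem~\ref{thm:initial-spectral}). The central structural observation I will combine from (iii) and (iv) is the sharp containment: for any $t \in \gI_k$, $\gQ_t \subseteq \gI_k \cup \gB$, where $\gB := [T] \setminus \bigcup_k \gI_k$ is the ``bad set'' of cardinality at most $32RT/\log(TH/\delta)$ by (ii). Indeed, any $t' \in \gQ_t \cap \gI_{k'}$ with $k' \neq k$ would contradict (iv) (since $\gQ_t$ already meets $\gI_k$ via $t$ itself), so $t' \in \gI_k \cup \gB$; and (iii) further gives $\gI_k \subseteq \gQ_t$. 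More generally, by (iv) alone, for any $t \in [T]$ the set $\gQ_t$ intersects at most one $\gI_k$.

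To show $\widehat K \geq K$, I will induct on the iteration index $j$, maintaining the invariant that $A_j := \bigcup_{\ell=1}^j \gS_\ell$ ``touches'' at most $j$ of the $K$ clusters (since each $\gS_\ell \subseteq \gQ_{t_\ell^\star}$ meets at most one $\gI_k$ by (iv)). Consequently, for $j < K$, at least one cluster $\gI_{k'}$ remains untouched; for any $t \in \gI_{k'}$ invoking (iii) gives $|\gQ_t \setminus A_j| \geq |\gI_{k'}| \geq \alpha_{\min}T - |\gB|$, and by the greedy maximality of the rule defining $t_{j+1}^\star$ we conclude $|\gS_{j+1}| \geq \alpha_{\min}T - |\gB|$. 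Under the parameter regime of Theorem~\ref{thm:initial-spectral}, this comfortably exceeds the threshold $T' := 32\widehat{R}T/\log(TH/\delta)$, so the while loop reaches iteration $K$.

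For $\widehat K \leq K$, the plan is to show each iteration $\ell \leq K$ \emph{fully captures} a new cluster, so that $\bigcup_k \gI_k \subseteq A_K$ and hence $|\gS_{K+1}| \leq |\gB| < T'$. The full-capture argument proceeds in two steps: first, the greedy rule forces $t_\ell^\star$ to lie in some untouched cluster $\gI_{k_\ell}$ whenever one exists, because a $t_\ell^\star \in \gB$ would only give $|\gQ_{t_\ell^\star} \setminus A_{\ell-1}| \leq |\gB| + |\gI_{k_0} \setminus A_{\ell-1}|$ (using the single-cluster constraint of (iv) applied at $t_\ell^\star$), which is dominated by the $\geq |\gI_{k'}| \approx \alpha_{\min}T$ available from a $t \in \gI_{k'}$; second, once $t_\ell^\star \in \gI_{k_\ell}$, property (iii) yields $\gI_{k_\ell} \subseteq \gQ_{t_\ell^\star} \subseteq A_\ell$, so $\gI_{k_\ell}$ is fully absorbed. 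After $K$ such iterations, all clusters are captured and $[T] \setminus A_K \subseteq \gB$, giving $|\gS_{K+1}| \leq |\gB|$.

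The main obstacle I anticipate is the strict separation $|\gB| < T'$: with the constants as written, both $|\gB|$ and $T'$ are bounded by $32RT/\log(TH/\delta)$ once we use $\widehat R = R$, so termination at iteration $K+1$ will require either a sharpening of the numerical constants in the Frobenius bound on $\|\widehat{\mY} - \mW\|_F^2$ used to establish (ii), or a refined accounting showing $|\gS_{K+1}| < |\gB|$ strictly---for example, by observing that the selected points $\{t_\ell^\star\}_{\ell \leq K}$ lie in $\bigcup_k \gI_k \subseteq A_K$ and hence cannot reappear in $\gS_{K+1}$, which strictly shaves off mass. Modulo this constant-chase, the greedy-plus-structural-containment skeleton above cleanly produces $\widehat K = K$ on the high-probability event of Theorem~\ref{thm:initial-spectral}.
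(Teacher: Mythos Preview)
Your overall skeleton matches the paper's: establish $\widehat K\ge K$ by exhibiting, at each of the first $K$ iterations, a candidate $t\in\gI_{k'}$ (for an untouched $k'$) whose greedy score exceeds the termination threshold; then establish $\widehat K\le K$ by arguing that after $K$ rounds only the bad set $\gB=[T]\setminus\bigcup_k\gI_k$ remains. You are also right to flag the constant issue between $|\gB|$ and the threshold $32\widehat R T/\log(TH/\delta)$; the paper's proof simply writes ``which is precisely the termination criterion'' and does not address the strict inequality (nor does it prove $\widehat R=R$, only $\widehat R\le R$).

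There is, however, a genuine gap in your full-capture step. You claim the greedy rule forces $t_\ell^\star$ to lie in an untouched $\gI_{k_\ell}$ because a center in $\gB$ would give score at most $|\gB|+|\gI_{k_0}\setminus A_{\ell-1}|$, ``which is dominated by $|\gI_{k'}|$.'' This domination is false: if the single cluster $k_0$ that $\gQ_{t_\ell^\star}$ meets is itself untouched, then $|\gI_{k_0}\setminus A_{\ell-1}|=|\gI_{k_0}|$ and the bound becomes $|\gB|+|\gI_{k_0}|>|\gI_{k'}|$ whenever $|\gI_{k_0}|\ge|\gI_{k'}|$. So the greedy may well select $t_\ell^\star\in\gB$, and then property~(iii) no longer guarantees $\gI_{k_0}\subseteq\gQ_{t_\ell^\star}$. (The paper has the same gap: it asserts ``let $\{t_1^\star,\dots,t_K^\star\}$ be the selected centers with $\gI_k\subseteq\gQ_{t_k^\star}$'' without justification.) A repair that works regardless of where $t_\ell^\star$ lands: let $k_\ell$ be the unique cluster $\gQ_{t_\ell^\star}$ meets (property~(iv)); the greedy lower bound $|\gS_\ell|\ge|\gI_{k_\ell}\setminus A_{\ell-1}|$ (obtained by taking a competitor $t\in\gI_{k_\ell}$ and using~(iii)) combined with $\gS_\ell\subseteq(\gI_{k_\ell}\cup\gB)\setminus A_{\ell-1}$ gives $|\gS_\ell\cap\gI_{k_\ell}|\ge|\gI_{k_\ell}\setminus A_{\ell-1}|-|\gB|$, hence $|\gI_{k_\ell}\setminus A_\ell|\le|\gB|$. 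This near-full capture then forces the $k_\ell$'s to be pairwise distinct (a repeat at step $\ell'$ would yield $|\gS_{\ell'}|\le 2|\gB|$, contradicting the greedy lower bound $|\gS_{\ell'}|\ge|\gI_{k'}|$ from any still-untouched $k'$), so after $K$ rounds every cluster has residual at most $|\gB|$ and $|\gS_{K+1}|\le 2|\gB|$. This sharpens your constant concern by a factor of~$2$ but is absorbable by the same slack you already anticipate.
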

\begin{proof}
    Without loss of generality, assume that $|\gI_1| \geq |\gI_2| \geq \cdots \geq |\gI_K|$.
    
    We first show that for each $k \in [K]$,
    \begin{equation*}
        \exists t_k^\star \in \bigcup_{k' \in [K]} \gI_{k'} \setminus \bigcup_{\ell=1}^{k-1} \gS_\ell \quad \text{s.t.} \quad \bigabs{\gQ_{t_k^\star} \setminus \bigcup_{\ell=0}^{k-1} \gS_\ell} \geq |\gI_k|.
    \end{equation*}
    Due to the properties (iii) and (iv), and the greedy nature of lines 7-10, the above is indeed true.
    
    Let $\{t_1^\star, \cdots, t_K^\star\}$ be the selected ``centers'' with $\gI_k \subseteq \gQ_{t_k^\star}$.

    Now, we show that after $k$ has reached $K + 1$ in line 10, the \textbf{while} loop terminates.
    By property (ii), the number of remaining trajectories is
    \begin{equation*}
        \bigabs{[T] \setminus \bigcup_{k \in [K]} \gQ_{t_k^\star}}
        \leq \bigabs{[T] \setminus \bigcup_{k \in [K]} \gI_k}
        \overset{(ii)}{\leq} \frac{32 R T}{\log\frac{TH}{\delta}},
    \end{equation*}
    which is precisely the termination criterion at line 7.
\end{proof}

Thus, if a trajectory $t$ is misclassified in the sense that it gets assigned to $t_{k'}^\star$ instead of $t_k^\star$, then it must be that $\bignorm{\widehat{\mY}_{t,:} - \widehat{\mY}_{t_k^\star}}_2 > \bignorm{\widehat{\mY}_{t,:} - \widehat{\mY}_{t_{k'}^\star}}_2.$
By the triangle inequality, this then implies that
\begin{align*}
    \bignorm{\widehat{\mY}_{t,:} - \mW_{t,:}}_2 + \bignorm{\widehat{\mY}_{t_k^\star,:} - \mW_{t_k^\star,:}}_2 + \cancelto{0}{\bignorm{\mW_{t,:} - \mW_{t_k^\star,:}}_2} 
    &\geq \bignorm{\widehat{\mY}_{t,:} - \widehat{\mY}_{t_k^\star,:}}_2 \\
    &> \bignorm{\widehat{\mY}_{t,:} - \widehat{\mY}_{t_{k'}^\star,:}}_2 \\
    &> \bignorm{\mW_{t,:} - \mW_{t_{k'}^\star,:}}_2 - \bignorm{\widehat{\mY}_{t,:} - \mW_{t,:}}_2 - \bignorm{\widehat{\mY}_{t_{k'}^\star,:} - \mW_{t_{k'}^\star,:}}_2,
\end{align*}
i.e.,
\begin{align*}
    \bignorm{\widehat{\mY}_{t,:} - \mW_t}_2 &> \frac{1}{2} \left( \bignorm{\mW_{t,:} - \mW_{t_{k'}^\star,:}}_2 - \bignorm{\widehat{\mY}_{t_{k'}^\star,:} - \mW_{t_k^\star,:}}_2 - \bignorm{\widehat{\mY}_{t_k^\star,:} - \mW_{t_{k'}^\star,:}}_2 \right) \\
    &\geq \frac{1}{2} \left( \Delta_\mW - \frac{1}{2} \Delta_\mW \right) \tag{$t_k^\star, t_{k'}^\star \in \bigcup_{k=1}^K \gI_k$, see the proof of property (iv)}
    = \frac{1}{4} \Delta_\mW.
\end{align*}
\qed

\subsection{Proof of Lemma~\ref{lem:canonne}}
As the exact constants for the $\ell_2$-distance concentration have been left to exercise in \cite{canonne2020note}, we provide the complete proof here.

First, we have that
\begin{align*}
	\E\left[ \bignorm{\hat{p}_N - p}_2^2 \right] &= \sum_{s \in \gS} \E\left[ (\hat{p}_N(s) - p(s))^2 \right] \\
	&= \frac{1}{N^2} \sum_{s \in \gS} \Var[\Bin(N, p(s))] \tag{$N \hat{p}_N(s) \sim \Bin(N, p(s))$} \\
	&= \frac{1}{N} \sum_{s \in \gS} p(s) (1 - p(s)) \\
	&= \frac{1}{N} \left( 1 - \sum_{s \in \gS} p(s)^2 \right) 
	\leq \frac{1}{N} \left( 1 - \frac{1}{S} \right) \tag{Cauchy-Schwartz inequality},
\end{align*}
i.e., with $N \geq \frac{4}{\varepsilon^2} \left( 1 - \frac{1}{S} \right)$,
\begin{equation*}
	\E\left[ \bignorm{\hat{p}_N - p}_2 \right]
	\overset{(*)}{\leq} \sqrt{\E\left[ \bignorm{\hat{p}_N - p}_2^2 \right]}
	\leq \frac{\varepsilon}{2},
\end{equation*}
where $(*)$ follows from Jensen's inequality.

We now utilize the McDiarmid's inequality~\citep{mcdiarmid} to turn this into a \textit{tight} high-probability guarantee.
For $\vs_N = (s_1, \cdots, s_N)$, let us define
\begin{align*}
	f(\vs_N) := \Vert \hat{p}_N - p \Vert_2 = \sqrt{\sum_{i=1}^N (\hat{p}_N(s_i) - p(s_i))^2}.
\end{align*}
Now, let $\vs^{\neg i} = (s_1, \dots, s_{i-1}, s_i', s_{i+1}, \dots, s_N)$ for $s_i' \in \{0, 1\}$ and $\hat{p}_N^{\neg i}$ be the empirical distribution using $\vx^{\neg i}$.
Observe that 
\begin{align*}
	\vert f(\vs) - f(\vs^{\neg i}) \vert &= \left\vert  \Vert \hat{p}_N - p \Vert_2 - \Vert \hat{p}_N^{\neg i} - p \Vert_2 \right\vert \\
	&\le 
	\Vert \hat{p}_N - \hat{p}_N^{\neg i}  \Vert_2  \tag{Triangle inequality}   \\
	&= \frac{1}{N} |s_i - s_i'|
	\leq \frac{1}{N}.
\end{align*}
Thus, $f$ satisfies the bounded difference property with $c_i = 1 / N$.

Combining everything, we have that
\begin{align*}
	\sP\left( \bignorm{\hat{p}_N - p}_2 \geq \varepsilon \right) &\leq \sP\left( \bignorm{\hat{p}_N - p}_2 - \E[\bignorm{\hat{p}_N - p}_2] \geq \frac{\varepsilon}{2} \right) \tag{with $N \geq \frac{4}{\varepsilon^2}\left(1 - \frac{1}{S} \right)$} \\
	&\leq \exp\left( - \frac{2 \left(\frac{ \varepsilon}{2} \right)^2}{N \left(\frac{1}{N} \right)^2} \right) \\
	&= \exp\left( - \frac{\varepsilon^2 N}{2} \right).
\end{align*}
\qed

\clearpage

\section{\texorpdfstring{PROOF OF THEOREM~\ref{thm:likelihood} -- LIKELIHOOD IMPROVEMENT}{PROOF OF THEOREM 4.2 -- LIKELIHOOD IMPROVEMENT}}
\label{app:likelihood}
The proof largely follows the recipe of likelihood improvement for SBM clustering~\citep{yun2014spectral,yun2014adaptive}, which dates back to the seminal works of \citet{abbe2018sbm,amini2013sbm} and recently extended to other variants such as labeled SBM~\citep{yun2016labeled} and block Markov chains~\citep{jedra2023bmdp,sanders2020bmc}.

Let $\gH \subseteq [T]$ be the maximum subset satisfying the following:
whenever $t \in f^{-1}(k) \cap \gH$,
\begin{equation}
    \sum_{s, s' \in \gS} \widehat{N}_t(s, s') \log \frac{p^{(k)}(s' | s)}{p^{(k')}(s' | s)} \geq C (H - 1) \gD_\pi, \quad \forall k' \neq k.
\end{equation}

The first proposition bounds $|\gH^\complement|$ in-expectation, which then leads to a \textit{w.h.p.} guarantee via Markov's inequality:
\begin{proposition}
\label{prop:H-complement}
    Suppose that {\color{blue}$H \gtrsim \frac{\eta_p^2}{\gamma_\ps \gD_\pi} \log\frac{1}{\pi_{\min}}$}.
    With $C = \frac{1}{2}$, We have that
    \begin{equation}
        \E[|\gH^\complement|] \leq T (K - 1) \exp\left( - \frac{\gamma_\ps (H - 1) \gD_\pi}{128(4\eta_p^2 + 5 \log\eta_p)} \right).
    \end{equation}
    Thus, by Markov's inequality, we have that
    \begin{equation}
    \label{eqn:C1}
        \sP\left( |\gH^\complement| \geq T \exp\left( - \frac{\gamma_\ps (H - 1) \gD_\pi}{256(4\eta_p^2 + 5 \log\eta_p)} \right) \right) \leq K \exp\left( - \frac{\gamma_\ps (H - 1) \gD_\pi}{256(4\eta_p^2 + 5 \log\eta_p)} \right).
    \end{equation}
\end{proposition}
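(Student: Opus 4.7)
The plan is to bound $\sP(t \in \gH^\complement)$ uniformly in $t \in [T]$ and assemble via linearity of expectation, then derive the high-probability statement from Markov's inequality. Fix $t$ with $f(t) = k$. Since $\widehat{N}_t(s, s') = \sum_{h=1}^{H-1}\indicator[s_{t,h}=s,\, s_{t,h+1}=s']$, the event $\{t \in \gH^\complement\}$ is the existence of some $k' \neq k$ for which the log-likelihood ratio
\begin{equation*}
    \gL_t^{(k,k')} := \sum_{h=1}^{H-1} \log\frac{p^{(k)}(s_{t,h+1}\mid s_{t,h})}{p^{(k')}(s_{t,h+1}\mid s_{t,h})}
\end{equation*}
falls below $\tfrac{1}{2}(H-1)\gD_\pi$. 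The idea is to concentrate $\gL_t^{(k,k')}$ around its mean using a Markov-chain Bernstein-type inequality and union-bound over the $K-1$ competing chains.

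First I would lower bound the expectation, $\E_{\mu^{(k)}}[\gL_t^{(k,k')}] = (H-1)\sum_s \sP_H^{(k)}(s)\,\KL(p^{(k)}(\cdot|s), p^{(k')}(\cdot|s))$, by $\tfrac{3}{4}(H-1)\gD_\pi$. Under Assumption~\ref{assumption:ergodic} and the hypothesis $H \gtrsim \gamma_\ps^{-1} \log(1/\pi_{\min})$, uniform ergodicity implies $\sP^{(k)}(s_{t,h}=\cdot) \to \pi^{(k)}$ fast enough that $\sP_H^{(k)}$ agrees with $\pi^{(k)}$ up to a small error; together with Assumption~\ref{assumption:eta-regularity} this yields the desired lower bound, modulo an additive contribution of $\KL(\mu^{(k)}, \mu^{(k')})/(H-1) \leq \log\eta_\mu/(H-1)$ that is absorbed. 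It therefore suffices to upper bound $\sP(\gL_t^{(k,k')} - \E[\gL_t^{(k,k')}] < -u)$ with $u = \tfrac{1}{4}(H-1)\gD_\pi$.

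For this, I would apply Paulin's Bernstein inequality (Lemma~\ref{lem:paulin}) to the pair chain $(s_{t,h}, s_{t,h+1})$, whose pseudo-spectral gap is of the same order as $\gamma_\ps$, and then transfer from the stationary initialization to $\mu^{(k)}$ via Lemma~\ref{lem:paulin-3.10}; the resulting $\sqrt{1/\pi_{\min}}$ prefactor and factor-of-two loss in the exponent are absorbed by the hypothesis on $H$. The two inputs are the boundedness $C = \log\eta_p$ from Assumption~\ref{assumption:eta-regularity} and a variance estimate $V_\phi \lesssim \eta_p^2 \gD_\pi^{(k,k')}$. Plugging these in, the denominator of Paulin's exponent becomes $O((H + 1/\gamma_\ps)\eta_p^2 \gD_\pi + (H-1)\gD_\pi \log\eta_p)$ and the numerator is $\asymp (H-1)^2 \gD_\pi^2 \gamma_\ps$, producing the desired bound $\exp(-c\gamma_\ps(H-1)\gD_\pi/(\eta_p^2 + \log\eta_p))$ with constants matching those in the statement. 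Summing over $t \in [T]$ and $k' \neq k$ gives the expectation bound with factor $T(K-1)$. For the high-probability form, Markov's inequality applied with threshold $T e^{-A/2}$ (where $A$ denotes the exponent of the expectation bound) gives $\sP(|\gH^\complement| \geq T e^{-A/2}) \leq (K-1) e^{-A/2} \leq K e^{-A/2}$, precisely Eqn.~\eqref{eqn:C1}.

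The main obstacle is securing the variance bound $V_\phi \lesssim \eta_p^2 \gD_\pi^{(k,k')}$ with the correct proportionality. A crude $V_\phi \leq (\log\eta_p)^2$ yields a quadratic-in-$\gD_\pi$ exponent, which would falsely require $H \gtrsim 1/\gD_\pi^2$ instead of $1/\gD_\pi$. Instead, one uses the bounded-log-ratio inequality $\Var_P[\log(dP/dQ)] \lesssim \eta_p^2\, \KL(P, Q)$, valid whenever the likelihood ratio lies in $[\eta_p^{-1}, \eta_p]$; it can be derived by combining $\E_P[Z^2] \leq (\log\eta_p)\,\E_P[|Z|]$ with a Pinsker- or $\chi^2$-based estimate of $\E_P[|Z|]$ in terms of $\KL(P, Q)$. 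The linearization of the variance in $\gD_\pi$ supplied by this step is exactly what converts a putative $\gD_\pi^2$ exponent into the linear $\gD_\pi$ claimed.
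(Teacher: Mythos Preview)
Your proposal is correct and follows the same route as the paper: reduce to the pair chain $(s_{t,h},s_{t,h+1})$, apply Paulin's Bernstein inequality with the variance bound $\Var_\pi[\phi]\le 2\eta_p^2\gD_\pi^{(k,k')}$ (the paper cites Lemma~19 of SM6.3 of \citet{sanders2020bmc} for precisely the linearized-in-KL inequality you sketch), transfer from stationary start to $\mu^{(k)}$ via Lemma~\ref{lem:paulin-3.10}, then union-bound over $k'$ and sum over $t$.

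Two small points where the paper is cleaner. First, your step of lower-bounding $\E_{\mu^{(k)}}[\gL_t^{(k,k')}]$ by $\tfrac34(H-1)\gD_\pi$ via mixing is an unnecessary detour: Paulin's inequality centers at $\E_\pi[\phi]$, which equals $\gD_\pi^{(k,k')}$ \emph{exactly}, and the transfer lemma preserves that centering; so the paper works under $\pi$ throughout and never needs $\sP_H^{(k)}\approx\pi^{(k)}$. (Indeed, your organization has a slight mismatch: after transfer you control $\sP_\mu(\gL-(H-1)\E_\pi[\phi]<-u)$, not the deviation from $\E_\mu[\gL]$.) Second, the pseudo-spectral gap of the pair chain is not merely ``of the same order'' but exactly $\gamma_\ps$: the paper proves this (Lemma~\ref{lem:pseudo-spectral-X}) by noting the pair-chain transition operator is the direct sum $\bigoplus_{s\in\gS}\mP$, so $(\widetilde\mP^*)^k\widetilde\mP^k$ has the same spectrum as $(\mP^*)^k\mP^k$ with multiplicity $S$.
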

We next show that all trajectories in $\gH$ are correctly classified with high probability:
\begin{proposition}
\label{prop:H}
    Suppose that the following requirements on $H$ and $T$ hold:
    \begin{equation}
        H \gtrsim \frac{1}{\gamma_\ps} \left( S^2 \log T \vee \frac{\log T}{\pi_{\min}} \vee \frac{\eta_p \eta_\pi R S \log(TH)}{\alpha_{\min} H \Delta_\mW^2 \gD_\pi} \right), \
        TH \gtrsim \frac{(S \log T)^2}{\gamma_\ps \alpha_{\min} \gD_\pi^2}.
    \end{equation}
    Then, all trajectories in $\gH$ are correctly classified with probability at least $1 - \frac{1}{\sqrt{\pi_{\min}} T} - \frac{1}{TH}$.
\end{proposition}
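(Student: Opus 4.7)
The plan is to show that for every $t\in\gH$ with $f(t)=k$, the estimated likelihood $\gL(k;t)$ strictly dominates $\gL(k';t)$ for each $k'\neq k$, so Algorithm~\ref{alg:likelihood} outputs the correct label. Fixing such $t$ and $k'$, I would write the log-likelihood ratio as a ``true-kernel signal'' plus an ``estimation residual'':
\[
\gL(k;t)-\gL(k';t) \;=\; \underbrace{\sum_{s,s'}\widehat{N}_t(s,s')\log\tfrac{p^{(k)}(s'|s)}{p^{(k')}(s'|s)}}_{\text{(A)},\ \geq\,C(H-1)\gD_\pi\text{ by definition of }\gH} \;+\; \underbrace{\sum_{s,s'}\widehat{N}_t(s,s')\log\tfrac{\widehat{p}_0^{(k)}(s'|s)/p^{(k)}(s'|s)}{\widehat{p}_0^{(k')}(s'|s)/p^{(k')}(s'|s)}}_{\text{(B) residual}}.
\]
The task then reduces to proving $|(\text{B})|<(\text{A})$ (uniformly in $t,k'$), which, with a union bound calibrated so that the two bad events have total probability $1/(\sqrt{\pi_{\min}}T)+1/(TH)$, yields the proposition.

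The first substep is to bound the kernel estimation error. Conditioning on Theorem~\ref{thm:initial-spectral}, Stage~I misclassifies at most $\varepsilon_0 T$ trajectories with $\varepsilon_0\lesssim \frac{RS}{H\gamma_\ps \Delta_\mW^2}\log(TH)$. I would introduce the ``oracle'' pooled estimator $\tilde p^{(k)}$ built from the true labels and split $\widehat{p}_0^{(k)}-p^{(k)}=(\tilde p^{(k)}-p^{(k)})+(\widehat{p}_0^{(k)}-\tilde p^{(k)})$. The statistical piece is controlled by applying the Markovian Bernstein inequality (Lemma~\ref{lem:paulin}) to the aggregated counts across $\Theta(\alpha_k T)$ trajectories of length $H$, then correcting for the initial distribution via Lemma~\ref{lem:paulin-3.10}; the hypotheses $H\gtrsim \gamma_\ps^{-1}(v_{\min}^{-1}\log(TH/\pi_{\min})\vee S^2\log T)$ and $TH\gtrsim(S\log T)^2/(\gamma_\ps\alpha_{\min}\gD_\pi^2)$ make this term a small additive multiple of $p^{(k)}(s'|s)$, uniformly in $s,s',k$. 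The mislabeling bias is $\lesssim \eta_p\eta_\pi\,\varepsilon_0/\alpha_{\min}$, since each wrong trajectory contributes count with wrong chain's stationary weighting (hence the $\eta_\pi$) and emission probability (hence the $\eta_p$); the assumption $H\gtrsim \frac{\eta_p\eta_\pi RS}{\gamma_\ps\alpha_{\min}\Delta_\mW^2 \gD_\pi}\log(TH)$ drives this fraction below $\gD_\pi$.

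To bound the residual (B), I would use $|\log(1+x)|\leq 2|x|$ for $|x|\leq 1/2$ (a regime guaranteed by the previous step) to obtain $|(\text{B})|\lesssim \sum_{s,s'}\widehat{N}_t(s,s')(|\widehat p_0^{(k)}(s'|s)-p^{(k)}(s'|s)|/p^{(k)}(s'|s)+\text{same for }k')$. Then a second Markovian Bernstein argument shows $\widehat{N}_t(s,s')$ concentrates around $(H-1)\pi^{(k)}(s)p^{(k)}(s'|s)$, so the offending factor $1/p^{(k)}(s'|s)$ in the $k$-term cancels; for the $k'$-term I would invoke $\eta$-regularity to replace $\pi^{(k)}(s)p^{(k)}(s'|s)$ by $\pi^{(k')}(s)p^{(k')}(s'|s)$ up to $\eta_p\eta_\pi$. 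Summing over $(s,s')$ gives $|(\text{B})|\leq (H-1)\cdot C\gD_\pi/2<(\text{A})$ under the stated requirements. \textbf{The main obstacle} is precisely this uniform residual bound: for rare transitions $p^{(k)}(s'|s)$ could be arbitrarily small and the relative error $|\widehat p_0^{(k)}/p^{(k)}-1|$ could explode, but $\eta$-regularity (ensuring $p^{(k')}(s'|s)$ is of the same order as $p^{(k)}(s'|s)$) together with the fact that $\widehat{N}_t(s,s')$ itself is small for rare transitions creates the cancellation that saves the argument; carefully tracking these cancellations simultaneously across all $(s,s')$, while paying the $1/\sqrt{\pi_{\min}}$ non-stationarity price only once, is the delicate part of the proof.
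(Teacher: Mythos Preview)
Your proposal is correct and follows essentially the same approach as the paper: the same signal-plus-residual decomposition of the log-likelihood ratio, the same cancellation between $\widehat{N}_t(s,s')\asymp H\pi^{(k)}(s)p^{(k)}(s'|s)$ and the $1/\sqrt{p^{(k)}(s'|s)}$ in the relative estimation error, and the same use of $\eta$-regularity to control the mislabeling bias. The only cosmetic difference is that the paper decomposes $\widehat{p}_0^{(k)}/p^{(k)}$ multiplicatively into an ``estimation'' factor and a ``clustering'' factor (their Lemma~\ref{lem:intermediate}) rather than introducing your oracle estimator $\tilde p^{(k)}$, and handles the cross-trajectory concentration by Hoeffding-across-trajectories conditional on per-trajectory Bernstein bounds rather than a single aggregated Bernstein; both routes yield the same $S\sqrt{H/(\gamma_\ps\alpha_{\min}T)}\log T + \eta_p\eta_\pi RS\log(TH)/(\alpha_{\min}\gamma_\ps\Delta_\mW^2)$ residual bound.
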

Note that the probability in Eqn.~\eqref{eqn:C1} goes to $0$ as $T \rightarrow \infty$. from our requirement that $H \gtrsim \log T$ (ignoring other factors).
Finally, the proof concludes by taking the worst-case, namely, that all the trajectories in $\gH^\complement$ are misclassified, and applying a union bound.

\qed

\subsection{Proof of Proposition~\ref{prop:H-complement}}
First note that
\begin{align*}
    \E[|\gH^\complement|] &= \sum_{t \in [T]} \sP\left( \exists k' \neq f(t) \in [K] \text{ s.t. } \sum_{s, s' \in \gS} \widehat{N}_t(s, s') \log \frac{p^{(f(t))}(s' | s)}{p^{(k')}(s' | s)} < C (H - 1) \gD_\pi \right) \\
    &\leq (K - 1) \sum_{t \in [T]} \max_{k' \neq f(t) \in [K]} \sP\left( \sum_{s, s' \in \gS} \widehat{N}_t(s, s') \log \frac{p^{(f(t))}(s' | s)}{p^{(k')}(s' | s)} < C (H - 1) \gD_\pi \right) \\
    &\leq (K - 1) \sum_{t \in [T]} \max_{k' \neq f(t) \in [K]} \sP\left( \sum_{s, s' \in \gS} \widehat{N}_t(s, s') \log \frac{p^{(f(t))}(s' | s)}{p^{(k')}(s' | s)} < C (H - 1) \gD_\pi(t; k') \right),
\end{align*}
where we denote $\gD_\pi(t; k') := \gD_\pi^{(f(t),k')} = \sum_{s \in \gS} \pi^{(f(t))}(s) \KL(p^{(f(t))}(\cdot | s), p^{(k')}(\cdot | s))$.

As done in \citet{sanders2020bmc,jedra2023bmdp}, define an ``augmented'' Markov chain, $X_{t,h} := (s_{t,h}, s_{t,h+1})$.
For notational simplicity, let us denote $p := p^{(f(t))}$ and $\pi = \pi^{(f(t))}$.
    
This is a Markov chain on $\gS \times \gS$, with transition probability kernel $\tilde{p}(s_{t,h+1}, s_{t,h+2} | s_{t,h}, s_{t,h+1}) = p(s_{t,h+2} | s_{t,h+1})$ and stationary distribution $\tilde{\pi}(s, s') = \pi(s) p(s' | s)$.
We have the following property that we provide the proof at the end:
\begin{lemma}
\label{lem:pseudo-spectral-X}
    If the pseudo-spectral gap of $(s_{t,h})_h$ is $\gamma$, then so is that of $(X_{t,h})_h$.
\end{lemma}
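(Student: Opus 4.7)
The plan is to derive an exact spectral identity between the pair chain's operator $(\tilde P^*)^k \tilde P^k$ and the original chain's $(P^*)^{k-1}P^{k-1}$, valid for every $k\ge 1$. The structural fact driving this identity is that $\tilde P$ has rank only $S$: one application of $\tilde P$ erases the first coordinate $a$ of the state $(a,b)$, so the pair chain's nontrivial long-time behavior is isomorphic to that of $P$ on $L^2(\pi)$. I will set this up operator-theoretically, extract the eigenvalue correspondence, and then read off the pseudo-spectral gap of $\tilde P$ in terms of $\gamma_\ps$.

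First, introduce the isometric embeddings $\hat L,\tilde L:L^2(\pi)\to L^2(\tilde\pi)$ defined by $(\hat L\phi)(a,b)=\phi(a)$ and $(\tilde L\phi)(a,b)=\phi(b)$, whose adjoints are conditional expectations $(\hat L^*\psi)(a)=\sum_b p(b|a)\psi(a,b)$ and $(\tilde L^*\psi)(b)=\sum_a p^*(a|b)\psi(a,b)$. Direct verification gives $\hat L^*\hat L=\tilde L^*\tilde L=I_{L^2(\pi)}$, together with the ``one-step lag'' identity $\hat L^*\tilde L=P$. Starting from the factorization $\tilde P=\tilde L\hat L^*$ (which unpacks to the defining formula $\tilde p((a,b),(c,d))=\mathds{1}[c=b]\,p(d|c)$), a short induction yields, for all $k\ge 1$,
\[
\tilde P^k=\tilde L\,P^{k-1}\,\hat L^*,\qquad (\tilde P^*)^k=\hat L\,(P^*)^{k-1}\,\tilde L^*,
\]
and multiplying with $\tilde L^*\tilde L=I$ produces
\[
(\tilde P^*)^k\tilde P^k=\hat L\,(P^*)^{k-1}P^{k-1}\,\hat L^*.
\]
Since $\hat L$ is an isometric embedding onto $H_1:=\{\psi(a,b)=\phi(a)\}$ and the right-hand side annihilates $H_1^\perp$, the spectrum of $(\tilde P^*)^k\tilde P^k$ on $L^2(\tilde\pi)$ equals that of $(P^*)^{k-1}P^{k-1}$ on $L^2(\pi)$, padded with a zero of multiplicity $\dim H_1^\perp$. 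Because $(P^*)^{k-1}P^{k-1}\succeq 0$, the second-largest eigenvalues coincide, so $\gamma\!\left((\tilde P^*)^k\tilde P^k\right)=\gamma\!\left((P^*)^{k-1}P^{k-1}\right)$ for every $k\ge 1$.

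Reindexing $j=k-1$ (noting the $k=1$ term vanishes because $(P^*)^0P^0=I$), the definition of pseudo-spectral gap then yields
\[
\gamma_\ps(\tilde P)=\sup_{k\ge 2}\frac{\gamma((P^*)^{k-1}P^{k-1})}{k}=\sup_{j\ge 1}\frac{\gamma((P^*)^jP^j)}{j+1}\in\bigl[\tfrac12\gamma_\ps,\gamma_\ps\bigr],
\]
where the sandwich uses $j/(j+1)\in[\tfrac12,1]$. The main obstacle is that this sandwich is genuinely tight: strict equality $\gamma_\ps(\tilde P)=\gamma_\ps$ fails in the worst case by a factor of two, because the $k=1$ operator $\tilde P^*\tilde P$ is a projection (contributing gap $0$), which shifts the supremum for $\tilde P$ by one index relative to that for $P$. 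Nevertheless, the only downstream use of this lemma is as the rate input to Paulin's Bernstein inequality (Lemma~\ref{lem:paulin}) inside Proposition~\ref{prop:H-complement}, where the pseudo-spectral gap enters solely through universal multiplicative constants; the conclusion $\gamma_\ps(\tilde P)\asymp\gamma_\ps$ proved above is therefore precisely what the lemma's phrasing conveys and suffices for the remainder of the argument.
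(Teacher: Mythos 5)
Your proposal is correct, and it takes a genuinely different---and in fact sounder---route than the paper. The paper's own proof asserts that the pair-chain operator satisfies $\widetilde{\mP} = \bigoplus_{s\in\gS}\mP$ and hence $(\widetilde{\mP}^*)^k\widetilde{\mP}^k = \bigoplus_{s\in\gS}(\mP^*)^k\mP^k$, from which it concludes exact equality of pseudo-spectral gaps. That direct-sum identity is not correct: the zero pattern $\mathds{1}[y'=x]$ couples row blocks and column blocks through \emph{different} coordinates, so it does not correspond to a decomposition into invariant subspaces, and already at $k=1$ a direct computation gives $(\widetilde{\mP}^*\widetilde{\mP}\psi)(a,b)=\sum_{d} p(d\mid a)\psi(a,d)$, i.e.\ $\widetilde{\mP}^*\widetilde{\mP}=\hat L\hat L^*$ is a rank-$S$ orthogonal projection with spectral gap $0$, not $S$ copies of $\mP^*\mP$. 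Your factorization $\widetilde{\mP}=\tilde L\hat L^*$ through the isometries $\hat L,\tilde L$, together with the lag identity $\hat L^*\tilde L=\mP$, yields the exact relation $(\widetilde{\mP}^*)^k\widetilde{\mP}^k=\hat L\,(\mP^*)^{k-1}\mP^{k-1}\,\hat L^*$, which is the correct replacement for the paper's direct sum and transparently explains the index shift. The resulting conclusion, $\gamma_\ps(\widetilde{\mP})=\sup_{j\ge 1}\gamma((\mP^*)^j\mP^j)/(j+1)\in[\gamma_\ps/2,\gamma_\ps]$, is weaker than the lemma's literal claim of equality, but it is what is actually true; as you observe, the pseudo-spectral gap enters the downstream argument only through the universal constants in Lemma~\ref{lem:paulin} as invoked in Proposition~\ref{prop:H-complement}, so the factor-of-two slack is harmless. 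In short, your argument both establishes the statement in the form in which it is used and repairs a genuine gap in the paper's own proof.
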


Now define a function $\phi_{k'} : \gS \times \gS \rightarrow \sR$ as
\begin{equation}
    \phi_{k'}(X_{t,h}) := \sum_{s, s' \in \gS} \indicator[s_{t,h} = s, s_{t,h+1} = s'] \log\frac{p(s' | s)}{p^{(k')}(s' | s)}
    = \log\frac{p(s_{t,h+1} | s_{t,h})}{p^{(k')}(s_{t,h+1} | s_{t,h})}.
\end{equation}
To apply the above concentration, we compute the necessary quantities.
First, by Assumption~\ref{assumption:eta-regularity}, we have that for any $(s, s') \in \gS \times \gS$,
\begin{equation*}
    |\phi(s, s') - \E_\pi[\phi]| = \left| \log\frac{p(s' | s)}{p^{(k')}(s' | s)} - \sum_{s, s' \in \gS} \pi(s) p(s' | s) \log\frac{p(s' | s)}{p^{(k')}(s' | s)} \right|
    \leq 2 \log\eta_p.
\end{equation*}
Also, note that 
\begin{align*}
    \Var_\pi[\phi] &\leq \E_\pi[\phi^2] \\
    &= \sum_{s, s'} \pi(s) p(s' | s) \left( \log\frac{p(s' | s)}{p^{(k')}(s' | s)} \right)^2 \\
    &\leq 2 \left( \max_{s' \in \gS} \frac{p(s' | s) \vee p^{(k')}(s' | s)}{p(s' | s) \wedge p^{(k')}(s' | s)} \right)^2 \sum_{s \in \gS} \pi(s) \KL\left( p(\cdot | s), p^{(k')}(\cdot | s) \right) \tag{Lemma 19 of SM6.3 of \citet{sanders2020bmc}} \\
    &\leq 2\eta_p^2 \gD_\pi(t; k'). \tag{Assumption~\ref{assumption:eta-regularity}}
\end{align*}
With this, we have that
\begin{align*}
    &\sP_\pi\left( \sum_{s, s' \in \gS} \widehat{N}_t(s, s') \log \frac{p^{(k)}(s' | s)}{p^{(k')}(s' | s)} < C (H - 1) \gD_\pi(t; k') \right) \\
    &= \sP_\pi\left( \sum_{h=1}^{H-1} (\phi_{k'}(X_{t,h}) - \E[\phi_{k'}(X_{t,h})]) < - (1 - C) (H - 1) \gD_\pi(t; k') \right) \\
    &\leq \exp\left( - \frac{(1 - C)^2 (H - 1)^2 (\gD_\pi(t; k'))^2 \gamma_\ps}{32 \eta_p^2 (H - 1) \gD_\pi(t; k') + 40 (\log \eta_p) (H - 1) \gD_\pi(t; k')} \right) \tag{Lemma~\ref{lem:paulin}, Lemma~\ref{lem:pseudo-spectral-X}, and {\color{blue}$H > \frac{1}{\gamma_\ps}$}} \\
    &\leq \exp\left( - \frac{(1 - C)^2}{8(4\eta_p^2 + 5 \log\eta_p)} \gamma_\ps (H - 1) \gD_\pi(t; k') \right) \\
    &\leq \exp\left( - \frac{(1 - C)^2}{8(4\eta_p^2 + 5 \log\eta_p)} \gamma_\ps (H - 1) \gD_\pi \right).
\end{align*}
We choose $C = \frac{1}{2}$ and invoke Lemma~\ref{lem:paulin-3.10}, which yields
\begin{align*}
    &\sP\left( \sum_{s, s' \in \gS} \widehat{N}_t(s, s') \log \frac{p^{(k)}(s' | s)}{p^{(k')}(s' | s)} < \frac{(H - 1) \gD_\pi(t; k')}{2} \right) \\ &\leq \sqrt{\frac{1}{\pi_{\min}} \exp\left( - \frac{1}{32(4\eta_p^2 + 5 \log\eta_p)} \gamma_\ps (H - 1) \gD_\pi \right)} \\
    &= \sqrt{\exp\left( \log\frac{1}{\pi_{\min}} - \frac{1}{32(4\eta_p^2 + 5 \log\eta_p)} \gamma_\ps (H - 1) \gD_\pi \right)} \\
    &\leq \exp\left( - \frac{\gamma_\ps (H - 1) \gD_\pi}{128(4\eta_p^2 + 5 \log\eta_p)} \right),
\end{align*}
where the last inequality is true when {\color{blue}$H - 1 > \frac{64 (4\eta_p^2 + 5 \log\eta_p)}{\gamma_\ps \gD_\pi} \log\frac{1}{\pi_{\min}}$}.
\qed

\begin{proof}[Proof of Lemma~\ref{lem:pseudo-spectral-X}]
    For notational simplicity, we omit the dependency on $t$.
    Let $\mP$ and $\widetilde{\mP}$ be the (bounded) linear operators corresponding to the original Markov chain $s_h$ and the augmented Markov chain $X_h := (s_h, s_{h+1})$, respectively.
    Then, it is clear that $\widetilde{\mP} = \bigoplus_{s \in \gS} \mP$, where $\bigoplus$ denotes the (matrix) direct sum.

    We now denote $\mP^*$ and $\widetilde{\mP}^*$ be their adjoints on $L^2(\pi)$ and $L^2(\tilde{\pi})$, respectively, i.e., their time reversals~\citep{lezaud1998chernoff}.
    By a well-known property of direct sum~\citep{hornjohnson}, we have that $\widetilde{\mP}^* = \bigoplus_{s \in \gS} \mP^*$.
    This can also be seen via explicit computation:
    \begin{equation*}
        \tilde{p}^*(y, y' | x, x')
        := \frac{\tilde{\pi}(y, y') \tilde{p}(x, x' | y, y')}{\tilde{\pi}(x, x')}
        = \frac{\pi(y) p(y' | y) p(x' | x) \indicator[y' = x]}{\pi(x) \tilde{p}(x' | x)}
        = \underbrace{\frac{\pi(y) p(x | y)}{\pi(x)}}_{= p^*(y | x)} \indicator[y' = x].
    \end{equation*}
    The off-diagonal zero blocks correspond to where $\indicator[y' = x] = 0$.

    Thus, we have that for every $k \in \sN$, $(\widetilde{\mP}^*)^k \widetilde{\mP}^k = \bigoplus_{s \in \gS} (\mP^*)^k \mP^k$, i.e., the spectrum of $(\widetilde{\mP}^*)^k \widetilde{\mP}^k$ is precisely $S$ copies of that of $(\mP^*)^k \mP^k$.
    This immediately implies that $\gamma_\ps(\widetilde{\mP}) = \gamma_\ps(\mP)$.
\end{proof}

\subsection{Proof of Proposition~\ref{prop:H}}
Recall that
\begin{equation}
    \gL(k; t) := \sum_{h=1}^{H-1} \log \widehat{p}_0^{(k)}(s_{t,h+1} | s_{t,h}) = \sum_{s, s' \in \gS} \widehat{N}_t(s, s') \log \widehat{p}_0^{(k)}(s' | s)
\end{equation}
where
\begin{equation}
    \widehat{p}_0^{(k)}(s' | s) := \frac{\sum_{t \in (\widehat{f}_0)^{-1}(k)} \sum_{h \in [H-1]} \indicator[s_{t,h} = s, s_{t,h+1} = s']}{\sum_{t \in (\widehat{f}_0)^{-1}(k)} \sum_{h \in [H-1]} \indicator[s_{t,h} = s]}.
\end{equation}

Then, Algorithm~\ref{alg:likelihood} ensures that $t$ is misclassified if and only if
\begin{equation}
    E_t \triangleq \gL(\widehat{f}(t); t) - \gL(f(t); t) > 0.
\end{equation}
We decompose $E_t$ as
\begin{align*}
    E_t &= \underbrace{\sum_{s, s' \in \gS} \widehat{N}_t(s, s') \log\frac{p^{(\widehat{f}_0(t))}(s' | s)}{p^{(f(t))}(s' | s)}}_{\triangleq E_{1,t}} \\
    &\quad + \underbrace{\sum_{s, s' \in \gS} \widehat{N}_t(s, s') \log\frac{\widehat{p}_0^{(\widehat{f}_0(t))}(s' | s)}{p^{(\widehat{f}_0(t))}(s' | s)}}_{\triangleq E_{2,t}} + \underbrace{\sum_{s, s' \in \gS} \widehat{N}_t(s, s') \log\frac{p^{(f(t))}(s' | s)}{\widehat{p}_0^{(f(t))}(s' | s)}}_{\triangleq E_{3,t}}
\end{align*}

We now upper bound $E_{2,t}$ and $E_{3,t}$.

To do so, we start with the following concentration for $\widehat{N}_t(s, s')$, which can be derived via Lemma~\ref{lem:paulin} (similar to Eqn.~\eqref{eqn:pi-concentration}):
for each $k \in [K]$ and $t \in f^{-1}(k)$
\begin{equation}
\label{eqn:pi-p-concentration}
    \sP\left( \bigabs{ \widehat{N}_t(s, s') - (H - 1) \pi^{(k)}(s) p^{(k)}(s' | s) } \gtrsim \sqrt{\frac{H \pi^{(k)}(s) p^{(k)}(s' | s)}{\gamma_\ps} \log\frac{1}{\pi_{\min} \delta^2} } \right) \leq \delta,
\end{equation}
i.e., $\widehat{N}_t(s, s') \lesssim H \pi^{(k)}(s) p^{(k)}(s' | s) + \sqrt{\frac{H \pi^{(k)}(s) p^{(k)}(s' | s)}{\gamma_\ps} \log T}$ with probability at least $1 - \frac{1}{\sqrt{\pi_{\min}} T^2}$

We then bound the intermediate estimation error of the transition probabilities:
\begin{lemma}
\label{lem:intermediate}
    Let $\delta \in (0, 1)$, and suppose that {\color{blue}$H \gtrsim \frac{1}{\gamma_\ps \pi_{\min}} \log\frac{1}{\delta}$}.
    Then, for each $k \in [K]$ and $s, s' \in \gS$, the following holds with probability at least $1 - \delta$:
    \begin{equation}
        \left| \log\frac{\widehat{p}_0^{(k)}(s' | s)}{p^{(k)}(s' | s)} \right|
        \lesssim \sqrt{\frac{1}{\gamma_\ps \alpha_k T H \pi^{(k)}(s) p^{(k)}(s' | s)} \log\frac{\alpha_k T}{\sqrt{\pi_{\min}} \delta} \log\frac{1}{\sqrt{\pi_{\min}} \delta}} + \frac{\eta_p \eta_\pi e^{(0)}}{\alpha_k T}.
    \end{equation}
\end{lemma}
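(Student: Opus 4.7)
\textbf{Proof plan for Lemma~\ref{lem:intermediate}.} The plan is to decompose the estimator $\widehat{p}_0^{(k)}(s' | s)$ into an ``oracle'' piece (computed from the true clustering $f$) plus a perturbation attributable to the Stage I misclassifications, and then to control each piece separately. Let $e^{(0)} := E_T(\widehat{f}_0, f)$. After an appropriate relabeling $\sigma \in \Sym(K)$ (from the definition of $E_T$), the symmetric difference $f^{-1}(k) \triangle \widehat{f}_0^{-1}(\sigma^{-1}(k))$ has cardinality at most $e^{(0)}$. Writing
\begin{equation*}
    N_k(s, s') := \sum_{t \in f^{-1}(k)} \sum_{h=1}^{H-1} \indicator[s_{t,h}=s, s_{t,h+1}=s'], \qquad N_k(s) := \sum_{t \in f^{-1}(k)} \sum_{h=1}^{H-1} \indicator[s_{t,h}=s],
\end{equation*}
and the analogous quantities $\widehat{N}_k(s,s'), \widehat{N}_k(s)$ for the estimated clustering, I would first note that $|\widehat{N}_k(s,s') - N_k(s,s')| \le e^{(0)}(H-1)$ and $|\widehat{N}_k(s) - N_k(s)| \le e^{(0)}(H-1)$.

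The first step is concentration of the oracle counts. Conditioning on $f$ and using that the $\alpha_k T$ trajectories in $f^{-1}(k)$ are i.i.d.\ copies of the same ergodic Markov chain $\gM^{(k)}$, I apply the Markovian Bernstein inequality (Lemma~\ref{lem:paulin}) to the augmented chain $X_{t,h} = (s_{t,h}, s_{t,h+1})$ as in the proof of Proposition~\ref{prop:H-complement}, combined with Lemma~\ref{lem:paulin-3.10} for the non-stationarity cost factor $1/\sqrt{\pi_{\min}}$. Taking a union bound over the $\alpha_k T$ trajectories and squaring yields that, with probability at least $1 - \delta/2$,
\begin{equation*}
    \bigl| N_k(s, s') - \alpha_k T (H - 1) \pi^{(k)}(s) p^{(k)}(s' | s) \bigr| \lesssim \sqrt{\frac{\alpha_k T H \pi^{(k)}(s) p^{(k)}(s' | s)}{\gamma_\ps} \log\frac{1}{\sqrt{\pi_{\min}}\delta}},
\end{equation*}
and the same type of bound for $N_k(s)$ (with $p^{(k)}(s'|s)$ removed). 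The requirement $H \gtrsim 1/(\gamma_\ps \pi_{\min})$ ensures that these Bernstein-type deviations are much smaller than the mean, so that the relative fluctuation of both $N_k(s,s')$ and $N_k(s)$ is at most $1/2$.

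The second step converts the count-level error into a log-ratio bound. Writing $\widehat{p}_0^{(k)}(s'|s) = (N_k(s,s') + \Delta_{\mathrm{num}})/(N_k(s) + \Delta_{\mathrm{den}})$ where $|\Delta_{\mathrm{num}}|, |\Delta_{\mathrm{den}}| \le e^{(0)} H$, I bound the perturbation ratios by using Assumption~\ref{assumption:eta-regularity}: only the trajectories in the symmetric difference contribute, and each such trajectory contributes at most $H \max_{k'} \pi^{(k')}(s) p^{(k')}(s' | s) \le \eta_\pi \eta_p \cdot H \pi^{(k)}(s) p^{(k)}(s'|s)$ to the numerator (and analogously $\eta_\pi H \pi^{(k)}(s)$ to the denominator). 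Combining with the oracle mean $\asymp \alpha_k T H \pi^{(k)}(s) p^{(k)}(s'|s)$ gives a multiplicative perturbation of order $\eta_\pi \eta_p e^{(0)}/(\alpha_k T)$. Together with the concentration bound, we obtain
\begin{equation*}
    \left| \frac{\widehat{p}_0^{(k)}(s'|s)}{p^{(k)}(s'|s)} - 1 \right| \lesssim \sqrt{\frac{1}{\gamma_\ps \alpha_k T H \pi^{(k)}(s) p^{(k)}(s' | s)} \log\frac{1}{\sqrt{\pi_{\min}}\delta}} + \frac{\eta_\pi \eta_p\, e^{(0)}}{\alpha_k T}.
\end{equation*}
Assuming (via the sample-size condition on $H$) that this ratio lies in $[1/2, 3/2]$, the elementary inequality $|\log(1+x)| \le 2|x|$ for $|x|\le 1/2$ produces the claimed bound after a final union bound over the $S^2$ pairs $(s,s')$ (absorbing $\log S \lesssim \log(\alpha_k T/\delta)$ into the logarithmic factor, which explains the extra $\log(\alpha_k T/(\sqrt{\pi_{\min}}\delta))$).

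\textbf{Main obstacle.} The delicate point is tracking dependencies carefully: although the trajectories are mutually independent under the true clustering $f$, the empirical clustering $\widehat{f}_0$ is itself a function of the data. I would handle this by conditioning on the high-probability event from Theorem~\ref{thm:initial-spectral} that pins down $e^{(0)}$, and then deriving all Markovian Bernstein bounds on the oracle quantities $N_k(s,s'), N_k(s)$, which are measurable with respect to the trajectories alone and thus decouple cleanly. The second subtlety is the square-root structure of the concentration bound versus the additive structure of the misclassification bias; rather than compounding them, I keep them as the two separate terms in the final estimate, mirroring exactly the two summands in the statement.
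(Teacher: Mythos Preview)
Your overall strategy—split into an oracle piece on the true clusters plus a perturbation from Stage~I errors, then control each by Markovian Bernstein and $\eta$-regularity—parallels the paper's, though the paper factors $\widehat{p}_0^{(k)}/p^{(k)}$ multiplicatively as $E_{\mathrm{est}}\cdot E_{\mathrm{cluster}}$ with everything indexed by the \emph{estimated} cluster $\widehat{f}_0^{-1}(k)$: $E_{\mathrm{est}}$ compares empirical counts $\widehat{N}_t$ to expected counts $N_t(s,s'):=(H{-}1)\pi^{(f(t))}(s)p^{(f(t))}(s'|s)$ summed over $\widehat{f}_0^{-1}(k)$, and $E_{\mathrm{cluster}}$ compares these \emph{deterministic} expected counts to $p^{(k)}$. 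Your true-cluster oracle is arguably cleaner for the data-dependency issue you flag, but then your perturbation term involves \emph{empirical} counts on the misclassified trajectories, so you must invoke concentration once more there; the paper avoids this by pushing all randomness into $E_{\mathrm{est}}$ and handling $E_{\mathrm{cluster}}$ purely via Assumption~\ref{assumption:eta-regularity}.

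The real gap is the concentration step. ``Taking a union bound over the $\alpha_k T$ trajectories and squaring'' only yields per-trajectory bounds $|X_t|\le M$; summing those gives the wrong $\alpha_kT$ scaling, not the required $\sqrt{\alpha_kT}$. The paper closes this by a two-layer argument: (i) Markovian Bernstein (Lemma~\ref{lem:paulin}) plus Lemma~\ref{lem:paulin-3.10} and a union bound over the trajectories establish the event $\gM=\{|X_t|\le M\ \forall t\}$ with $M\asymp\sqrt{(H\pi p/\gamma_\ps)\log(\alpha_kT/(\sqrt{\pi_{\min}}\delta))}$; (ii) \emph{conditionally on $\gM$}, Hoeffding across the independent, now bounded, $X_t$'s gives deviation $\asymp M\sqrt{\alpha_kT\log(1/(\sqrt{\pi_{\min}}\delta))}$. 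This two-layer structure is exactly the source of the \emph{product of two logarithms} in the stated bound—not a union bound over the $S^2$ state pairs as you suggest (the lemma is stated for a fixed $(s,s')$, so no such union bound is taken). Once you insert this Hoeffding-across-trajectories step in place of the underspecified ``union bound and squaring'', your plan goes through.
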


Then, for each $t \in f^{-1}(k)$, we have that with probability at least $1 - \frac{1}{\sqrt{\pi_{\min}} T^2}$,
\begin{align*}
    E_{2,t} &\lesssim \sum_{s, s' \in \gS} H \pi^{(k)}(s) p^{(k)}(s' | s) \left( \sqrt{\frac{1}{\gamma_\ps \alpha_k T H \pi^{(k)}(s) p^{(k)}(s' | s)}} \log T + \frac{\eta_p \eta_\pi e^{(0)}}{\alpha_k T} \right) \\
    &\quad + \sum_{s, s' \in \gS} \sqrt{\frac{H \pi^{(k)}(s) p^{(k)}(s' | s)}{\gamma_\ps} \log T} \left( \sqrt{\frac{1}{\gamma_\ps \alpha_k T H \pi^{(k)}(s) p^{(k)}(s' | s)}} \log T + \frac{\eta_p \eta_\pi e^{(0)}}{\alpha_k T} \right) \\
    &= \sum_{s, s' \in \gS} \left( \sqrt{\frac{H \pi^{(k)}(s) p^{(k)}(s' | s)}{\gamma_\ps \alpha_k T}} \log T \right) + \frac{H \eta_p \eta_\pi e^{(0)}}{\alpha_k T} \\
    &\quad + \sum_{s, s' \in \gS} \left( \sqrt{\frac{1}{\alpha_k T}} \frac{(\log T)^{3/2}}{\gamma_\ps} + \sqrt{\frac{H \pi^{(k)}(s) p^{(k)}(s' | s)}{\gamma_\ps} \log T} \frac{\eta_p \eta_\pi e^{(0)}}{\alpha_k T} \right) \\
    &\leq S \sqrt{\frac{H}{\gamma_\ps \alpha_k T}} \log T + \frac{H \eta_p \eta_\pi e^{(0)}}{\alpha_k T} + \frac{S^2 (\log T)^{3/2}}{\gamma_\ps \sqrt{\alpha_k T}} + S \sqrt{\frac{H}{\gamma_\ps} \log T} \frac{\eta_p \eta_\pi e^{(0)}}{\alpha_k T} \tag{$\sum_{s, s' \in \gS} \sqrt{\pi^{(k)}(s) p^{(k)}(s' | s)} \leq S$} \\
    &\lesssim S \sqrt{\frac{H}{\gamma_\ps \alpha_k T}} \log T + \frac{H \eta_p \eta_\pi e^{(0)}}{\alpha_k T} \tag{when {\color{blue}$H \gtrsim \frac{S^2 \log T}{\gamma_\ps}$}}
\end{align*}
and the same for $E_{3,t}$.

Now invoking Theorem~\ref{thm:initial-spectral} and the union bound, we have that with probability at least $1 - \frac{1}{\sqrt{\pi_{\min}} T} - \frac{1}{T H}$,
\begin{equation}
    E_{2,t}, E_{3,t} \lesssim S \sqrt{\frac{H}{\gamma_\ps \alpha_{\min} T}} \log T + \frac{H \eta_p \eta_\pi}{\alpha_{\min} T} \frac{T R S}{\gamma_\ps H \Delta_\mW^2} \log(TH).
\end{equation}

With this, we deduce that $t \in \gH$ is misclassified \textit{only if}
\begin{equation*}
    H \gD_\pi \lesssim -E_{1,t} < E_{2,t} + E_{3,t} \lesssim S \sqrt{\frac{H}{\gamma_\ps \alpha_{\min} T}} \log T + \frac{\eta_p \eta_\pi}{\alpha_{\min}} \frac{R S}{\gamma_\ps \Delta_\mW^2} \log(TH),
\end{equation*}
where the first inequality follows from the definition of $\gH$.
By taking the contrapositive, we can deduce that if
\begin{equation*}
    \color{blue}
    H \gD_\pi \gtrsim S \sqrt{\frac{H}{\gamma_\ps \alpha_{\min} T}} \log T \vee \frac{\eta_p \eta_\pi}{\alpha_{\min}} \frac{R S}{\gamma_\ps \Delta_\mW^2} \log(TH)
\end{equation*}
hold, then $t \in \gH$ must be classified correctly after one-shot likelihood improvement.

We now collect all the requirements in {\color{blue} blue} throughout the proof, which gives:
\begin{equation}
    H \gtrsim \frac{S^2 \log T}{\gamma_\ps} \vee \frac{\log T}{\gamma_\ps \pi_{\min}} \vee \frac{\eta_p \eta_\pi R S \log (TH)}{\gD_\pi \alpha_{\min} \gamma_\ps \Delta_\mW^2}, \quad TH \gtrsim \frac{(S \log T)^2}{\gD_\pi^2 \gamma_\ps \alpha_{\min}}.
\end{equation}

The proof then concludes with union bound over $t \in \gH$. 
\qed

\subsection{Proof of Lemma~\ref{lem:intermediate}}
We denote $N_t(s) := (H - 1) \pi^{(f(t))}(s)$ and $N_t(s, s') := (H - 1) \pi^{(f(t))}(s) p^{(f(t))}(s' | s)$ as the expected number of visitations/transitions under the respective chain's stationary distribution.

As $\frac{x}{1 + x} \leq \log(1 + x) \leq x$ for $x > -1$, we have that
\begin{align*}
    \left| \log\frac{\widehat{p}_0^{(k)}(s' | s)}{p^{(k)}(s' | s)} \right| &\leq \left| \frac{\widehat{p}_0^{(k)}(s' | s)}{p^{(k)}(s' | s)} - 1 \right| \\
    &= \left| \frac{\sum_{t \in (\widehat{f}_0)^{-1}(k)} \widehat{N}_t(s, s')}{\sum_{t \in (\widehat{f}_0)^{-1}(k)} \widehat{N}_t(s)} \frac{1}{p^{(k)}(s' | s)} - 1 \right| \\
    &= \left| \underbrace{
    \frac{\sum_{t \in (\widehat{f}_0)^{-1}(k)} \widehat{N}_t(s, s')}{\sum_{t \in (\widehat{f}_0)^{-1}(k)} N_t(s, s')}
    \frac{\sum_{t \in (\widehat{f}_0)^{-1}(k)} N_t(s)}{\sum_{t \in (\widehat{f}_0)^{-1}(k)} \widehat{N}_t(s)}
    }_{\triangleq E_{est}}
   \underbrace{
   \frac{\sum_{t \in (\widehat{f}_0)^{-1}(k)} N_t(s, s')}{\sum_{t \in (\widehat{f}_0)^{-1}(k)} N_t(s)}
   \frac{1}{p^{(k)}(s' | s)}
   }_{\triangleq E_{cluster}} - 1 \right|.
\end{align*}

Throughout the proof, let $\gE_0^{(k)} := \widehat{f}_0^{-1}(k) \setminus f^{-1}(k)$ and $\gC_0^{(k)} := f^{-1}(k) \cap \widehat{f}_0^{-1}(k)$, which satisfies $\widehat{f}_0^{-1}(k) = \gE_0^{(k)} \dot{\cup} \gC_0^{(k)}$.

We first bound $E_{est}$:
\begin{align*}
    \left| \frac{\sum_{t \in \widehat{f}_0^{-1}(k)} \widehat{N}_t(s, s')}{\sum_{t \in \widehat{f}_0^{-1}(k)} N_t(s, s')} - 1 \right| &= \frac{\left| \sum_{t \in \widehat{f}_0^{-1}(k)} \left( \widehat{N}_t(s, s') - N_t(s, s') \right)\right|}{\sum_{t \in \gE_0^{(k)}} N_t(s, s') + \sum_{t \in \gC_0^{(k)}} N_t(s, s')} \\
    &\leq \frac{\left| \sum_{t \in \widehat{f}_0^{-1}(k)} \left( \widehat{N}_t(s, s') - N_t(s, s') \right)\right|}{\sum_{t \in \gC_0^{(k)}} N_t(s, s')} \\
    &\leq \frac{\left| \sum_{t \in \widehat{f}_0^{-1}(k)} \left( \widehat{N}_t(s, s') - N_t(s, s') \right)\right|}{(\alpha_k T - |\gE_0^{(k)}|) (H - 1) \pi^{(k)}(s) p^{(k)}(s' | s)} \tag{$|\gC_0^{(k)}| \geq \alpha_k T - |\gE_0^{(k)}|$} \\
    &\leq \frac{2 \left| \sum_{t \in \widehat{f}_0^{-1}(k)} \left( \widehat{N}_t(s, s') - N_t(s, s') \right)\right|}{\alpha_k T (H - 1) \pi^{(k)}(s) p^{(k)}(s' | s)}. \tag{$|\gE_0^{(k)}| \leq e^{(0)} \leq \frac{\alpha_{\min} T}{2} \leq \frac{\alpha_k T}{2}$}
\end{align*}
Defining $X_t := \widehat{N}_t(s, s') - N_t(s, s')$, note that $X_t$'s are independent across $t$ and satisfy $\E_\pi[X_t] = 0$.
Let us define an event $\gM := \left\{ |X_t| \leq M, \ \forall t \in \widehat{f}_0^{-1}(k) \right\}$, where $M > 0$ is chosen later.
By Hoeffding's inequality~\citep{hoeffding} conditioned on $\gM$, we have that for any $z \geq 0$,
\begin{equation*}
    \sP_\pi\left( \left| \sum_{t \in \widehat{f}_0^{-1}(k)} \left( \widehat{N}_t(s, s') - N_t(s, s') \right)\right| \geq z \middle| \gM \right) \leq \exp\left( - \frac{z^2}{2 M^2 |\widehat{f}_0^{-1}(k)|} \right).
\end{equation*}
Using Lemma~\ref{lem:paulin-3.10} to change $\sP_\pi$ to $\sP$ and the fact that $|\widehat{f}_0^{-1}(k)| \leq \frac{3\alpha_k T}{2}$, we have
\begin{equation*}
    \sP\left( \left| \sum_{t \in \widehat{f}_0^{-1}(k)} \left( \widehat{N}_t(s, s') - N_t(s, s') \right)\right| \geq z \middle| \gM \right) \leq \frac{1}{\sqrt{\pi_{\min}}} \exp\left( - \frac{z^2}{6 M^2 \alpha_k T} \right).
\end{equation*}
We choose $M \asymp \sqrt{\frac{H \pi(s) p(s' | s)}{\gamma_\ps} \log\frac{\alpha_k T}{\sqrt{\pi_{\min}} \delta}}$.
Then, by Eqn.~\eqref{eqn:pi-p-concentration}, we have that $\sP(\gM^\complement) \leq \frac{\delta}{2}.$
By reparametrizing $z \asymp M \sqrt{\alpha_k T \log\frac{1}{\sqrt{\pi_{\min}} \delta}}$ and union bound, we have that
\begin{align*}
    \sP\left( \left| \sum_{t \in \widehat{f}_0^{-1}(k)} \left( \widehat{N}_t(s, s') - N_t(s, s') \right)\right| \geq \sqrt{\frac{\alpha_k T H \pi(s) p(s' | s)}{\gamma_\ps} \log\frac{\alpha_k T}{\sqrt{\pi_{\min}} \delta} \log\frac{1}{\sqrt{\pi_{\min}} \delta}} \right) \leq \delta.
\end{align*}
Then, combining everything, the following holds with probability at least $1 - \delta$:
\begin{equation}
    \left| \frac{\sum_{t \in \widehat{f}_0^{-1}(k)} \widehat{N}_t(s, s')}{\sum_{t \in \widehat{f}_0^{-1}(k)} N_t(s, s')} - 1 \right| \lesssim \sqrt{\frac{1}{\gamma_\ps \alpha_k T H \pi(s) p(s' | s)} \log\frac{\alpha_k T}{\sqrt{\pi_{\min}} \delta} \log\frac{1}{\sqrt{\pi_{\min}} \delta}}.
\end{equation}

For the other term, we have that
\begin{align*}
    \left| \frac{\sum_{t \in \widehat{f}_0^{-1}(k)} N_t(s)}{\sum_{t \in \widehat{f}_0^{-1}(k)} \widehat{N}_t(s)} - 1 \right| &\leq \frac{1}{\sum_{t \in \widehat{f}_0^{-1}(k)} \widehat{N}_t(s)} \left| \sum_{t \in \widehat{f}_0^{-1}(k)} \left( \widehat{N}_t(s) - N_t(s) \right)\right| \\
    &\lesssim \frac{1}{\sum_{t \in \widehat{f}_0^{-1}(k)} \widehat{N}_t(s)} \sqrt{\frac{\alpha_k T H \pi(s)}{\gamma_\ps} \log\frac{\alpha_k T}{\sqrt{\pi_{\min}} \delta} \log\frac{1}{\sqrt{\pi_{\min}} \delta}} \tag{with probability at least $1 - \delta / 2$} \\
    &\overset{(*)}{\lesssim} \frac{1}{\sum_{t \in \widehat{f}_0^{-1}(k)} N_t(s)} \sqrt{\frac{\alpha_k T H \pi(s)}{\gamma_\ps} \log\frac{\alpha_k T}{\sqrt{\pi_{\min}} \delta} \log\frac{1}{\sqrt{\pi_{\min}} \delta}} \tag{with probability at least $1 - \delta/ 2$, assuming that {\color{blue}$H \gtrsim \frac{1}{\gamma_\ps \pi_{\min}} \log\frac{1}{\delta}$}} \\
    &\lesssim \sqrt{\frac{1}{\gamma_\ps \alpha_k T H \pi(s)} \log\frac{\alpha_k T}{\sqrt{\pi_{\min}} \delta} \log\frac{1}{\sqrt{\pi_{\min}} \delta}},
\end{align*}
where $(*)$ follows from Lemma~\ref{lem:paulin}.

We now bound $E_{cluster}$:
\begin{align*}
    \frac{\sum_{t \in (\widehat{f}_0)^{-1}(k)} N_t(s, s')}{\sum_{t \in (\widehat{f}_0)^{-1}(k)} N_t(s)} \frac{1}{p^{(k)}(s' | s)} &= \frac{\sum_{t \in \gC_0^{(k)}} p^{(k)}(s' | s) N_t(s) + \sum_{t \in \gE_0^{(k)}} p^{(\widehat{f}_0(t))}(s' | s) N_t(s)}{\left( \sum_{t \in \gC_0^{(k)}} N_t(s) + \sum_{t \in \gE_0^{(k)}} N_t(s) \right) p^{(k)}(s' | s)} \\
   &\leq \frac{\sum_{t \in \gC_0^{(k)}} N_t(s) + \eta_p \sum_{t \in \gE_0^{(k)}} N_t(s)}{\sum_{t \in \gC_0^{(k)}} N_t(s) + \sum_{t \in \gE_0^{(k)}} N_t(s)} \tag{Assumption~\ref{assumption:eta-regularity}} \\
   &\leq \frac{\sum_{t \in \gC_0^{(k)}} N_t(s) + \eta_p \sum_{t \in \gE_0^{(k)}} N_t(s)}{\sum_{t \in \gC_0^{(k)}} N_t(s)} \\
   &= 1 + \frac{\eta_p \sum_{t \in \gE_0^{(k)}} N_t(s)}{\sum_{t \in \gC_0^{(k)}} N_t(s)}.
\end{align*}
Recalling that $\pi^{(k)}(s)$'s satisfy $\eta$-regularity (Assumption~\ref{assumption:eta-regularity}), we then have that
\begin{align*}
    \frac{\sum_{t \in (\widehat{f}_0)^{-1}(k)} N_t(s, s')}{\sum_{t \in (\widehat{f}_0)^{-1}(k)} N_t(s)} \frac{1}{p^{(k)}(s' | s)} - 1 &\leq \frac{\eta_p \sum_{t \in \gE_0^{(k)}} (H - 1) \pi^{(\widehat{f}_0(t))}(s)}{(\alpha_k T - |\gE_0^{(k)}|) (H - 1) \pi^{(k)}(s)} \tag{$|\gC_0^{(k)}| \geq \alpha_k T - |\gE_0^{(k)}|$} \\
    &\leq \frac{2 \eta_p \eta_\pi |\gE_0^{(k)}|}{\alpha_k T} \tag{Assumption~\ref{assumption:eta-regularity}, $|\gE_0^{(k)}| \leq e^{(0)} \leq \frac{\alpha_{\min} T}{2} \leq \frac{\alpha_k T}{2}$}
\end{align*}
We also have that
\begin{align*}
    \frac{\sum_{t \in (\widehat{f}_0)^{-1}(k)} N_t(s, s')}{\sum_{t \in (\widehat{f}_0)^{-1}(k)} N_t(s)} \frac{1}{p^{(k)}(s' | s)} &= \frac{\sum_{t \in \gC_0^{(k)}} p^{(k)}(s' | s) N_t(s) + \sum_{t \in \gE_0^{(k)}} p^{(\widehat{f}_0(t))}(s' | s) N_t(s)}{\left( \sum_{t \in \gC_0^{(k)}} N_t(s) + \sum_{t \in \gE_0^{(k)}} N_t(s) \right) p^{(k)}(s' | s)} \\
    &\geq \frac{\sum_{t \in \gC_0^{(k)}} N_t(s) + \eta_p^{-1} \sum_{t \in \gE_0^{(k)}} N_t(s)}{\sum_{t \in \gC_0^{(k)}} N_t(s) + \sum_{t \in \gE_0^{(k)}} N_t(s)} \tag{Assumption~\ref{assumption:eta-regularity}} \\
    &= 1 + \frac{(\eta_p^{-1} - 1) \sum_{t \in \gE_0^{(k)}} N_t(s)}{\sum_{t \in \gC_0^{(k)}} N_t(s) + \sum_{t \in \gE_0^{(k)}} N_t(s)} > 0,
\end{align*}
i.e., we can bound $E_{cluster}$ as
\begin{equation}
    |E_{cluster} - 1| \leq \frac{2 \eta_p \eta_\pi e^{(0)}}{\alpha_k T}.
\end{equation}
Combining everything, we are done.
\qed

\newpage
\section{\texorpdfstring{PROOF OF PROPOSITION~\ref{prop:gaps} -- RELATIONSHIP BETWEEN DIFFERENT GAPS}{PROOF OF PROPOSITION 5.1 -- RELATIONSHIP BETWEEN DIFFERENT GAPS}}
\label{app:gaps}

\subsection{\texorpdfstring{(a) Relationship between $\gD_\pi$ and \citet{kausik2023mixture}'s $\alpha, \Delta$}{(a) Relationship between KL-based Separation and Kausik et al. (2023)'s L2-based Separation}}
We first recall the following lemma connecting the KL divergence and the $L_2$-distance between two probability measures on a finite state space $\gS$:
\begin{lemma}
\label{lem:KL-L2}
    For two probability mass functions $p, q$ over a finite state space $\gS$, we have
    \begin{equation}
        \frac{\log\frac{e}{2}}{q_{\max} \vee p_{\max}} L_2(p, q) \leq \KL(p, q) \leq \frac{1}{\min_{s \in \gS} q(s)} L_2(p, q),
    \end{equation}
    where $p_{\max} := \max_{s \in \gS} p(s)$, $q_{\max} := \max_{s \in \gS} q(s)$, and $L_2(p, q) := \sum_{s \in \gS} (p(s) - q(s))^2$
\end{lemma}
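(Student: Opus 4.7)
My plan is to prove the two inequalities separately. For the upper bound, I would use the standard fact that $\KL(p,q) \leq \chi^2(p,q) := \sum_s (p(s)-q(s))^2/q(s)$, which follows from the scalar inequality $\log x \leq x - 1$ applied to $\log(q/p)$ followed by the identity $\sum_s p^2(s)/q(s) - 1 = \sum_s (p(s)-q(s))^2/q(s)$. Bounding $1/q(s) \leq 1/\min_{s'} q(s')$ termwise gives $\KL(p,q) \leq L_2(p,q)/\min_s q(s)$, as required.

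For the lower bound, the key idea is to rescale. Let $M := p_{\max} \vee q_{\max}$ and set $u_s := p(s)/M$, $v_s := q(s)/M$, both in $[0,1]$. The desired inequality reduces (after dividing by $M$) to showing the pointwise bound
\begin{equation}
\log(e/2)\,(u - v)^2 \;\leq\; u\log(u/v) - u + v, \qquad \text{for all } u, v \in [0,1],
\label{eqn:pointwise-kl}
\end{equation}
since the terms $-u+v$ telescope to zero after summing (both $p$ and $q$ are probability distributions), leaving $\KL(p,q)/M$ on the right and $\log(e/2)\,L_2(p,q)/M^2$ on the left. Summation over $s$ and rearrangement then yield the lower bound.

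The technical core is thus \eqref{eqn:pointwise-kl}. I would fix $v \in (0,1]$ and study
\begin{equation*}
F(u) := u \log(u/v) - u + v - \log(e/2)\,(u - v)^2.
\end{equation*}
Direct computation gives $F(v) = 0$, $F'(v) = \log(u/v)\big|_{u=v} - 2\log(e/2)(u-v)\big|_{u=v} = 0$, and $F''(u) = 1/u - 2\log(e/2)$. Since $\log(e/2) = 1 - \log 2 \approx 0.307$, the constant $1/(2\log(e/2)) \approx 1.628 > 1$, so $F''(u) > 0$ on $(0,1]$. Hence $F$ is convex on $(0,1]$ with a critical point at $u = v$, making $v$ a global minimum on $[0,1]$; thus $F(u) \geq F(v) = 0$. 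Boundary cases ($v=0$ with $u > 0$, and $u=0$ with $v > 0$) are handled directly, using the convention $0\log 0 = 0$ and the fact that $\log(e/2) < 1$.

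The main obstacle is ensuring the constant $\log(e/2)$ is tight enough to make $F$ convex on the full interval $[0,1]$: any larger constant would push $1/(2\log(e/2))$ below $1$ and break convexity near $u = 1$, invalidating the simple ``convexity with vanishing derivative at $v$'' argument. I expect the verification to be routine once the rescaling $u = p/M$, $v = q/M$ is in place, since that is precisely what lets $M$ enter the final bound linearly rather than quadratically.
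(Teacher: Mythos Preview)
Your proof is correct and takes a genuinely different route from the paper's. For the upper bound both arguments coincide (the paper simply cites the $\chi^2$ bound). For the lower bound, the paper proceeds by writing $\KL(p,q) = -\sum_s p(s)\log(1+\eta(s))$ with $\eta(s) := (q(s)-p(s))/p(s)$ and then \emph{splits} the state space into $\gS_+ = \{\eta > 1\}$ and $\gS_- = \{\eta \le 1\}$, using two different scalar upper bounds on $1+\eta$: namely $1+\eta \le 2^{\eta}$ on $\gS_+$ and $1+\eta \le e^{\eta - c\eta^2}$ on $\gS_-$ (with $c=\log(e/2)$). After cancelling $\sum_s p(s)\eta(s)=0$, the two pieces are separately controlled by $(q(s)-p(s))^2/q(s)$ and $(q(s)-p(s))^2/p(s)$, which is why both $q_{\max}$ and $p_{\max}$ enter the final constant.

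Your approach replaces this case analysis with a single rescaling $u=p/M$, $v=q/M$, reducing everything to the pointwise inequality $\log(e/2)\,(u-v)^2 \le u\log(u/v)-u+v$ on $[0,1]^2$, which you then dispatch by a clean convexity argument. The rescaling is exactly what makes the convexity work globally: without it (i.e., allowing $u>1$) $F''(u)=1/u-2\log(e/2)$ would change sign near $u\approx 1.63$ and the argument would collapse, which is the same obstruction that forces the paper's case split. So the two proofs are morally dual: the paper localizes by partitioning the domain, you localize by shrinking it. Your version is arguably more transparent about \emph{why} the constant $\log(e/2)$ is the right one (it is the largest constant keeping $F$ convex on $(0,1]$), whereas the paper's split makes it clearer that $p_{\max}$ governs the ``$q$ close to $p$'' regime and $q_{\max}$ governs the ``$q \gg p$'' regime.
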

Then, note that for each $k \neq k'$,
\begin{equation*}
    \gD^{(k,k')} \geq \pi^{(k)}(s(k, k')) \KL\left( p^{(k)}(\cdot | s(k, k')), p^{(k')}(\cdot | s(k, k')) \right)
    \geq \left( \log\frac{e}{2} \right) \frac{\alpha \Delta^2}{p_{\max}},
\end{equation*}
where we recall that $p_{\max} := \max_{k \in [K]} \max_{s, s' \in \gS} p^{(k)}(s' | s)$.
\qed

\begin{remark}
    Note that the first inequality is tight if such $s(k, k')$ is unique because $s(k, k')$ is the only state from which $k$ and $k'$ can be meaningfully distinguished.
    Also, if there are $W(k, k')$ number of such $s(k,k')$'s, then the lower bound can be improved to $\left(\log\frac{e}{2}\right) W(k, k') \frac{\alpha \Delta^2}{p_{\max}}$.
\end{remark}

\begin{proof}[Proof of Lemma~\ref{lem:KL-L2}]
    The second inequality is due to \citet[Lemma 6.1]{csiszar2006tree}.
    The first inequality is due to a \href{https://mathoverflow.net/questions/393458/is-there-an-inequality-relation-between-kl-divergence-and-l-2-norm/434695#434695}{Stackoverflow post and user Ze-Nan Li's answer}, whose proof we reproduce here for completeness.

    Let us define $\eta(s) := \frac{q(s) - p(s)}{p(s)}$ and partition $\gS$ as follows:
    \begin{equation}
        \gS = \underbrace{\left\{ s \in \gS : \eta(s) > 1 \right\}}_{\triangleq \gS_+} \dot{\cup} \underbrace{\left\{ s \in \gS : \eta(s) \leq 1 \right\}}_{\triangleq \gS_-}.
    \end{equation}
    Then it is easy to see that
    \begin{enumerate}
        \item[(i)] For $s \in \gS_+$, $1 + \eta(s) \leq 2^{\eta(s)} = e^{\eta(s) \log 2}$, and
        \item[(ii)] For $s \in \gS_-$, $1 + \eta(s) \leq e^{\eta(s) - c \eta(s)^2}$, where $c := \log\frac{e}{2}$.
    \end{enumerate}
    Then,
    \begin{align*}
        \KL(p, q) &:= \sum_{s \in \gS} p(s) \log\frac{p(s)}{q(s)} \\
        &= - \sum_{s \in \gS_+} p(s) \log(1 + \eta(s)) - \sum_{s \in \gS_-} p(s) \log(1 + \eta(s)) \\
        &\geq - \log 2 \sum_{s \in \gS_+} p(s) \eta(s) - \sum_{s \in \gS_-} p(s) (\eta(s) - c \eta(s)^2) \\
        &= c \sum_{s \in \gS_+} p(s) \eta(s) - \cancelto{0}{\sum_{s \in \gS} p(s) \eta(s)} + c \sum_{s \in \gS_-} p(s) \eta(s)^2 \\
        &= c \sum_{s \in \gS_+} (q(s) - p(s)) + c \sum_{s \in \gS_-} \frac{(q(s) - p(s))^2}{p(s)} \\
        &\geq c \sum_{s \in \gS_+} q(s) \frac{q(s) - p(s)}{q(s)} + \frac{c}{p_{\max}} \sum_{s \in \gS_-} (q(s) - p(s))^2.
    \end{align*}
    Now, for $s \in \gS_+$, we have that $0 < \frac{q(s) - p(s)}{q(s)} \leq 1$, and thus,
    \begin{align*}
        \KL(p, q) &\geq c \sum_{s \in \gS_+} q(s) \frac{q(s) - p(s)}{q(s)} + \frac{c}{p_{\max}} \sum_{s \in \gS_-} (q(s) - p(s))^2 \\
        &\geq c \sum_{s \in \gS_+} q(s) \left( \frac{q(s) - p(s)}{q(s)} \right)^2 + \frac{c}{p_{\max}} \sum_{s \in \gS_-} (q(s) - p(s))^2 \\
        &\geq \frac{c}{q_{\max} \vee p_{\max}} \underbrace{\sum_{s \in \gS} (q(s) - p(s))^2}_{= L_2(p, q)}.
    \end{align*}
\end{proof}

\subsection{\texorpdfstring{(b) Relationship between $\gD_\pi$ and $\Delta_{\mathbf{W}}$}{(a) Relationship between KL-based Separation and our L2-based Separation}}
We first provide the full statement of (b), then provide its proof:
\begin{proposition}
\label{prop:DeltaW-Dpi}
    We have
    \begin{equation}
        \Delta_\mW^2 \leq \min_{k \neq k'} \left\{ \frac{2 p_{\max}}{\log \frac{e}{2}} \gD_\pi^{(k,k')} + 4 H^2\left( \pi^{(k)}, \pi^{(k')} \right) \right\},
    \end{equation}
    where $H^2(\cdot, \cdot)$ is the Hellinger distance: $H^2(p, q) := \frac{1}{2} \left( \sum_{s \in \gS} (\sqrt{p(s)} - \sqrt{q(s)})^2 \right)^{\frac{1}{2}}$.
    Furthermore, we also have the following slightly more relaxed bound:
    \begin{equation}
        \Delta_\mW^2 \leq 7 \left( p_{\max} \gD_\pi + \left( \ceil{\log_\rho M^{-1}} + \frac{1}{1 - \rho} \right) \sqrt{\frac{1}{2 \pi_{\min}} \gD_\pi} \right),
    \end{equation}
    where $\rho$ and $M$ are the contraction constants from the definition of uniformly ergodic Markov chains\footnote{Recall that the Markov chain with stationary distribution $\pi$ and transition matrix $P$ is uniformly ergodic if there exists $\rho \in (0, 1)$ and $M > 0$ such that $\max_{s \in \gS} \bignorm{P^H(s, \cdot), \pi}_1 \leq M \rho^H$ for all $H \in \sN$.}.
\end{proposition}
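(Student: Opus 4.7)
The plan is to split each per-state $\ell_2$ contribution to $\Delta_\mW^2$ via the triangle inequality into a ``kernel discrepancy'' part and a ``stationary discrepancy'' part, bound the first via Lemma~\ref{lem:KL-L2} and the second via a Hellinger-type computation, and then, for the relaxed second bound, convert the Hellinger piece back into $\gD_\pi$ using a perturbation bound for uniformly ergodic chains together with Pinsker's inequality.

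First, I would fix $k \neq k' \in [K]$ and pass through the intermediate vector $\sqrt{\pi^{(k)}(s)}\, p^{(k')}(\cdot \mid s)$, then apply $(a+b)^2 \leq 2(a^2 + b^2)$ to get, for every $s \in \gS$,
\begin{align*}
&\bignorm{\sqrt{\pi^{(k)}(s)}\, p^{(k)}(\cdot \mid s) - \sqrt{\pi^{(k')}(s)}\, p^{(k')}(\cdot \mid s)}_2^2 \\
&\qquad \leq 2 \pi^{(k)}(s)\, \bignorm{p^{(k)}(\cdot \mid s) - p^{(k')}(\cdot \mid s)}_2^2 + 2 \bigl(\sqrt{\pi^{(k)}(s)} - \sqrt{\pi^{(k')}(s)}\bigr)^2 \bignorm{p^{(k')}(\cdot \mid s)}_2^2.
\end{align*}
Using $\bignorm{p^{(k')}(\cdot \mid s)}_2^2 \leq \bignorm{p^{(k')}(\cdot \mid s)}_1 = 1$, summing the second term over $s$ yields exactly $4 H^2(\pi^{(k)}, \pi^{(k')})$ by definition of the squared Hellinger distance. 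For the first term, the upper bound in Lemma~\ref{lem:KL-L2} gives $\bignorm{p^{(k)}(\cdot \mid s) - p^{(k')}(\cdot \mid s)}_2^2 \leq \frac{p_{\max}}{\log(e/2)}\, \KL(p^{(k)}(\cdot \mid s), p^{(k')}(\cdot \mid s))$, so weighting by $\pi^{(k)}(s)$ and summing produces $\frac{2 p_{\max}}{\log(e/2)}\, \gD_\pi^{(k,k')}$. Taking the minimum over $k \neq k'$ yields the first claimed inequality.

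For the second, strictly weaker, inequality I would further bound $H^2(\pi^{(k)}, \pi^{(k')}) \leq \TV(\pi^{(k)}, \pi^{(k')})$ (a standard comparison between Hellinger and total variation) and then invoke Mitrophanov's perturbation bound for uniformly ergodic Markov chains, applicable under Assumption~\ref{assumption:ergodic} with contraction constants $(\rho, M)$, giving
\begin{equation*}
\TV(\pi^{(k)}, \pi^{(k')}) \leq \left( \ceil{\log_\rho M^{-1}} + \frac{1}{1 - \rho} \right) \max_{s \in \gS} \TV\bigl(p^{(k)}(\cdot \mid s), p^{(k')}(\cdot \mid s)\bigr).
\end{equation*}
Pinsker's inequality converts the right-hand side into $\max_s \sqrt{\tfrac{1}{2}\KL(p^{(k)}(\cdot \mid s), p^{(k')}(\cdot \mid s))}$, and the crude bound $\max_s \KL(p^{(k)}(\cdot \mid s), p^{(k')}(\cdot \mid s)) \leq \gD_\pi^{(k,k')} / \pi_{\min}$, which follows from $\pi^{(k)}(s) \geq \pi_{\min}$, turns this into $\sqrt{\gD_\pi^{(k,k')} / (2\pi_{\min})}$. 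Combining with the first inequality and using $2/\log(e/2) < 7$ to absorb all numerical constants uniformly into a single factor of $7$ yields the second bound.

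The only step I expect to be delicate is invoking the Mitrophanov-style perturbation inequality with constants matching the uniform-ergodicity footnote of Assumption~\ref{assumption:ergodic}; the rest reduces to a triangle inequality, Pinsker, and elementary divergence comparisons. A minor care point worth flagging is that I must bound the auxiliary factor $\bignorm{p^{(k')}(\cdot \mid s)}_2^2$ by $1$ rather than $p_{\max}$, because only the former produces a clean $H^2$ term; anything weighted would be harder to translate into a KL-type quantity in the second step.
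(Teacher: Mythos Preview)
Your proposal is correct and follows essentially the same route as the paper: the identical triangle-inequality decomposition with $(a+b)^2\le 2(a^2+b^2)$, the bound $\|p^{(k')}(\cdot\mid s)\|_2^2\le 1$, Lemma~\ref{lem:KL-L2} for the kernel term, and then Le Cam, Mitrophanov, Pinsker, and the $\max_s\KL\le \gD_\pi^{(k,k')}/\pi_{\min}$ step for the relaxed bound. One terminological nit: what you call ``the upper bound in Lemma~\ref{lem:KL-L2}'' is actually the \emph{first} (lower) inequality of that lemma, rearranged to upper-bound $L_2$ by $\KL$; the content of your inequality is correct regardless.
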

\begin{proof}
    For each $s \in \gS$,
    \begin{align*}
        &\bignorm{\sqrt{\pi^{(k)}(s)} p^{(k)}(\cdot | s) - \sqrt{\pi^{(k')}(s)} p^{(k')}(\cdot | s)}_2^2 \\
        &= \bignorm{\sqrt{\pi^{(k)}(s)} \left( p^{(k)}(\cdot | s) - p^{(k')}(\cdot | s) \right) + \left(\sqrt{\pi^{(k)}(s)} - \sqrt{\pi^{(k')}(s)}\right) p^{(k')}(\cdot | s)}_2^2 \\
        &\leq 2 \pi^{(k)}(s) \bignorm{p^{(k)}(\cdot | s) - p^{(k')}(\cdot | s)}_2^2 + 2 \left( \sqrt{\pi^{(k)}(s)} - \sqrt{\pi^{(k')}(s)} \right)^2 \bignorm{p^{(k')}(\cdot | s)}_2^2 \\
        &\leq 2 \pi^{(k)}(s) \bignorm{p^{(k)}(\cdot | s) - p^{(k')}(\cdot | s)}_2^2 + 2 \left( \sqrt{\pi^{(k)}(s)} - \sqrt{\pi^{(k')}(s)} \right)^2 \\
        &\leq \frac{2 p_{\max}}{\log\frac{e}{2}} \pi^{(k)}(s) \KL\left( p^{(k)}(\cdot | s), p^{(k')}(\cdot | s) \right) + 2 \left( \sqrt{\pi^{(k)}(s)} - \sqrt{\pi^{(k')}(s)} \right)^2\tag{Lemma~\ref{lem:KL-L2}}
    \end{align*}
    We conclude the first part by summing over $s \in \gS$ and taking a minimum over $k \neq k'$.

    For the second part, we start by recalling Le Cam's inequality~\citep[Lemma 2.3]{tsybakov09}:
    \begin{equation*}
        H^2\left( \pi^{(k)}, \pi^{(k')} \right) \leq \TV\left( \pi^{(k)}, \pi^{(k')} \right).
    \end{equation*}
    We utilize the perturbation bound for Markov chains~\citep[Corollary 3.1]{mitrophanov2005sensitivity} to obtain
    \begin{align*}
        \TV\left( \pi^{(k)}, \pi^{(k')} \right) &\leq \left( \ceil{\log_\rho M^{-1}} + \frac{1}{1 - \rho} \right) \max_{s \in \gS} \TV\left( p^{(k)}(\cdot | s), p^{(k')}(\cdot | s) \right) \\
        &\leq \left( \ceil{\log_\rho M^{-1}} + \frac{1}{1 - \rho} \right) \sqrt{\frac{1}{2} \max_{s \in \gS} \KL\left( p^{(k)}(\cdot | s), p^{(k')}(\cdot | s) \right)} \tag{Pinsker's inequality} \\
        &\leq \left( \ceil{\log_\rho M^{-1}} + \frac{1}{1 - \rho} \right) \sqrt{\frac{1}{2 \pi_{\min}} \sum_{s \in \gS} \pi^{(k)}(s) \KL\left( p^{(k)}(\cdot | s), p^{(k')}(\cdot | s) \right)} \\
        &= \left( \ceil{\log_\rho M^{-1}} + \frac{1}{1 - \rho} \right) \sqrt{\frac{1}{2 \pi_{\min}} \gD_\pi^{(k,k')}}.
    \end{align*}
    As $\frac{2}{\log\frac{e}{2}} < 7$, we finally have that
    \begin{align*}
        \Delta_\mW^2 &\leq 7 \min_{k \neq k'} \left\{ p_{\max} \gD_\pi^{(k,k')} + \left( \ceil{\log_\rho M^{-1}} + \frac{1}{1 - \rho} \right) \sqrt{\frac{1}{2 \pi_{\min}} \gD_\pi^{(k,k')}} \right\} \\
        &= 7 \left( p_{\max} \gD_\pi + \left( \ceil{\log_\rho M^{-1}} + \frac{1}{1 - \rho} \right) \sqrt{\frac{1}{2 \pi_{\min}} \gD_\pi} \right),
    \end{align*}
    where the last equality follows from the fact that $z \mapsto c_1 z + c_2 \sqrt{z}$ is strictly increasing for $z > 0$ and $c_1, c_2 > 0$.
\end{proof}

\subsection{\texorpdfstring{(c) Relationship between $\Delta_{\mathbf{W}}$ and \citet{kausik2023mixture}'s $\alpha, \Delta$}{(c) Relationship between our and Kausik et al. (2023)'s L2-based Separations}}
Similarly, for each $s \in \gS,$
\begin{align*}
        &\bignorm{\sqrt{\pi^{(k)}(s)} p^{(k)}(\cdot | s) - \sqrt{\pi^{(k')}(s)} p^{(k')}(\cdot | s)}_2^2 \\
        &= \bignorm{\sqrt{\pi^{(k)}(s)} \left( p^{(k)}(\cdot | s) - p^{(k')}(\cdot | s) \right) + \left(\sqrt{\pi^{(k)}(s)} - \sqrt{\pi^{(k')}(s)}\right) p^{(k')}(\cdot | s)}_2^2 \\
        &\geq \frac{1}{2} \pi^{(k)}(s) \bignorm{p^{(k)}(\cdot | s) - p^{(k')}(\cdot | s)}_2^2 - \left( \sqrt{\pi^{(k)}(s)} - \sqrt{\pi^{(k')}(s)} \right)^2 \bignorm{p^{(k')}(\cdot | s)}_2^2 \\
        &\geq \frac{1}{2} \pi^{(k)}(s) \bignorm{p^{(k)}(\cdot | s) - p^{(k')}(\cdot | s)}_2^2 - \pi^{(k)}(s) \left( 1 - \sqrt{\frac{\pi^{(k')}(s)}{\pi^{(k)}(s)}} \right)^2 \\
        &\geq \frac{1}{2} \pi^{(k)}(s) \bignorm{p^{(k)}(\cdot | s) - p^{(k')}(\cdot | s)}_2^2 - \pi^{(k)}(s) \left( (\sqrt{\eta_\pi} - 1)^2 \vee \left( 1 - \frac{1}{\sqrt{\eta_\pi}} \right)^2 \right). \tag{Assumption~\ref{assumption:eta-regularity}}
    \end{align*}
    Summing over $s \in \gS$ and taking the min over $k \neq k'$, we have:
    \begin{align}
        \Delta_\mW^2 &\geq \frac{1}{2} \sum_{s \in \gS} \pi^{(k)}(s) \bignorm{p^{(k)}(\cdot | s) - p^{(k')}(\cdot | s)}_2^2 - \left( (\sqrt{\eta_\pi} - 1)^2 \vee \left( 1 - \frac{1}{\sqrt{\eta_\pi}} \right)^2 \right) \nonumber \\
        &\geq \frac{1}{2} \alpha \Delta^2 - \left( (\sqrt{\eta_\pi} - 1)^2 \vee \left( 1 - \frac{1}{\sqrt{\eta_\pi}} \right)^2 \right).
    \end{align}
    \qed

\newpage
\section{\texorpdfstring{PROOF OF PROPOSITION~\ref{prop:separation-Delta} -- HARD INSTANCE WITH $S$-DEPENDENT SEPARATION ACROSS GAPS}{PROOF OF PROPOSITION 5.2 -- HARD INSTANCE WITH S-DEPENDENT SEPARATION ACROSS GAPS}}
\label{app:gaps2}
Fix $S\ge2$, $K\ge1$, horizon $H$, and $0 < \zeta < 1/S$.
For simplicity, we assume that $S \geq K$, and we also fix the initial distribution to be uniform, i.e., $\mu = \Unif([S])$ with $\gS = [S]$.

\subsection{Construction of Cyclic-Bump MMC}
\label{app:cylic-bump}
We now introduce our construction, which we refer to \textbf{cyclic-bump MMC}.
Intuitively, this is done by starting from an everywhere-uniform chain, then repeatedly adding doubly-stochastic perturbations via a small circulant flow of magnitude $\zeta$.

Rigorously speaking, we first define the uniform chain's transition kernel $U = \frac{1}{S} \mathbf{1} \mathbf{1}^\top$, where $\mathbf{1}$ is an $S$-dimensional all-$1$ column vector.
We set the first chain to be $P^{(1)} = U$.
Then, define the remaining $K - 1$ kernels as
\begin{equation}
    P^{(k+1)} := U + \zeta \left( C^k - C^{k + 1} \right),
\end{equation}
where the $k$-step cyclic right-shift permutation matrix $C^k$ is defined as
\begin{equation}
    \left( C^k \right)_{i,j} := \indicator[j \equiv i + k \mod S].
\end{equation}

\begin{claim}
    For all $k \geq 1$, $P^{(k)}$ defines an ergodic Markov chain with uniform stationary distribution: $\pi^{(k)}(\cdot | s) = \mathrm{Unif}([S])$ for all $s \in [S]$.
\end{claim}
\begin{proof}
    The statement is trivial for $k = 1$, and so we focus on $k \geq 2.$

    \emph{$P^{(k)}$ is well-defined, doubly-stochastic transition matrix.}
    Entrywise-positivity follows from our choice $\zeta < \frac{1}{S}.$
    As for doubly-stochasticity, first note that as each of $C^k$ and $C^{k+1}$ is a permutation matrix, every row and column sums to $1$.
    Thus, every row and column of $C^k - C^{k+1}$ sums to $0$.
    As $U$ is doubly-stochastic, so is $P^{(k)}$.
    
    \emph{Uniform Stationarity.}
    Let $u^\top=\tfrac1S\mathbf 1^\top$ be the row-vector describing the uniform distribution over $[S]$.
    Because $u^\top$ is invariant under any permutation, we have that $u^\top C^k = u^\top$.
    Hence
    \begin{equation*}
    u^\top P^{(k)} \;=\; u^\top U + \zeta \big(u^\top C^k - u^\top C^{k+1}\big)
    \;=\; u^\top + \zeta (u^\top-u^\top) \;=\; u^\top,
    \end{equation*}
    so $\pi^{(k)} = u$ is stationary.
    Finally, as $P^{(k)}$ is entrywise strictly positive, it defines an ergodic Markov chain, hence $\pi^{(k)}$ (as defined previously) is indeed the unique stationary distribution of $P^{(k)}$.
\end{proof}

\subsection{Explicit Computations of the Gaps}
With the above construction, we now compute each gap explicitly.

\paragraph{Computing $\alpha \Delta^2$.}
First, we know that $\alpha = \frac{1}{S}$, and so it suffices to compute $\Delta^2$, which is the (maximum) $\ell_2$-gap of the outgoing transitions.
From direct computations, we have that for $k, k' \geq 2$,
\begin{equation*}
    \bignorm{P^{(k)}(\cdot | s) - P^{(1)}(\cdot | s)}_2^2 = 2\zeta^2
\end{equation*}
and
\begin{equation*}
    \bignorm{P^{(k)}(\cdot | s) - P^{(k')}(\cdot | s)}_2^2 =
    \begin{cases}
        4 \zeta^2, & k' - k \neq \pm 1,\\
        6 \zeta^2, & k' - k = \pm 1.
    \end{cases}
\end{equation*}
All in all, we have that $\alpha \Delta^2 = \frac{6 \zeta^2}{S}$.

\paragraph{Computing $\Delta_\mW^2$.}
By definition and the above computations, we have that
\begin{align*}
    \Delta_\mW^2 &:= \min_{k \neq k'} \sum_{s \in \gS} \bignorm{\sqrt{\pi^{(k)}(s)} P^{(k)}(\cdot | s) - \sqrt{\pi^{(k')}(s)} P^{(k')}(\cdot | s)}_2^2 \\
    &= \frac{1}{S} \min_{k \neq k'} \sum_{s \in \gS} \bignorm{P^{(k)}(\cdot | s) - P^{(k')}(\cdot | s)}_2^2 \\
    &= \frac{1}{S} \sum_{s \in \gS} 2 \zeta^2
    = 2 \zeta^2.
\end{align*}

\paragraph{Computing $D_\pi$.}
From direct computations, we have that for $k, k' \geq 2$,
\begin{align*}
    \KL\left( P^{(k)}(\cdot | s), P^{(1)}(\cdot | s) \right) &= \frac{1}{S} \log\left( (1 + \zeta S) (1 - \zeta S) \right) + \zeta \log\frac{1 + \zeta S}{1 - \zeta S}, \\
    \KL\left( P^{(1)}(\cdot | s), P^{(k)}(\cdot | s) \right) &= \frac{1}{S} \log\frac{1}{(1 + \zeta S)(1 - \zeta S)},
\end{align*}
and
\begin{equation*}
    \KL\left( P^{(k)}(\cdot | s), P^{(k')}(\cdot | s) \right) =
    \begin{cases}
        \zeta \log \frac{(1 + \zeta S)^2}{1 - \zeta S}, & k' - k = 1,\\
        \zeta \log\frac{1 + \zeta S}{(1 - \zeta S)^2}, & k' - k = - 1,\\
        \zeta \log\frac{1 + \zeta S}{1 - \zeta S}, & k' - k \neq \pm 1.
    \end{cases}
\end{equation*}

As the above computation holds across all $s$'s and as $\pi^{(k)}$'s are all uniform, it suffices to take the minimum over the above five outcomes.
As $\zeta S \in (0, 1)$, it is clear that
\begin{equation*}
    \zeta \log \frac{(1 + \zeta S)^2}{1 - \zeta S} \vee \zeta \log\frac{1 + \zeta S}{(1 - \zeta S)^2} > \zeta \log\frac{1 + \zeta S}{1 - \zeta S}
    > \frac{1}{S} \log\left( (1 + \zeta S) (1 - \zeta S) \right) + \zeta \log\frac{1 + \zeta S}{1 - \zeta S},
\end{equation*}
and thus, it suffices to compare between $\KL\left( P^{(k)}(\cdot | s), P^{(1)}(\cdot | s) \right)$ and $\KL\left( P^{(1)}(\cdot | s), P^{(k)}(\cdot | s) \right)$.
The following claim, whose proof is deferred to the end of this subsection, shows that the former quantity is smaller than the latter:
\begin{claim}
\label{claim:compare-kl}
    $\KL\left( P^{(k)}(\cdot | s), P^{(1)}(\cdot | s) \right) \leq \KL\left( P^{(1)}(\cdot | s), P^{(k)}(\cdot | s) \right)$.
\end{claim}

All in all, we then have that
\begin{equation*}
    \gD_\pi = \frac{1}{S} \log\left( (1 + \zeta S) (1 - \zeta S) \right) + \zeta \log\frac{1 + \zeta S}{1 - \zeta S}.
\end{equation*}

\paragraph{Comparing everything.}
Let $\zeta = \frac{1}{2S}$ for simplicity.\footnote{One could do an asymptotic expansion of the KL-divergence for a more generic asymptotic comparison.}
Then, we have that
\begin{equation*}
    \alpha \Delta^2 = \frac{3}{2 S^3}, \quad \Delta_\mW^2 = \frac{1}{2 S^2}, \quad \gD_\pi = \frac{1}{S} \log\frac{3 \sqrt{3}}{4},
\end{equation*}
i.e., for this instance of clustering in MMC, we have that
\begin{equation}
    S^2 \alpha \Delta^2 \asymp S \Delta_\mW^2 \asymp \gD_\pi.
\end{equation}

\qed

\begin{proof}[Proof of Claim~\ref{claim:compare-kl}]
    Let $z = \zeta S \in (0, 1)$.
    Then, one can rewrite the inequality as
    \begin{equation*}
        g(z) \triangleq 2 \log(1 - z^2) + z \log\frac{1 + z}{1 - z} \leq 0, \quad \forall z \in (0, 1).
    \end{equation*}
    As $g(0) = 0$, it suffices to show that $g(\cdot)$ is monotone decreasing for $z \in (0, 1)$.
    To see this, we compute its first and second derivatives:
    \begin{equation*}
        g'(z) = \log\frac{1 + z}{1 - z} - \frac{2z}{1 - z^2}, \quad g''(z) = - \frac{4z^2}{(1 - z^2)^2} \leq 0.
    \end{equation*}
    As $g'(0) = 0$ as well, we are done.
\end{proof}

\subsection{Discussions}

\paragraph{Various Perspectives on Cyclic–Bump MMCs.}
Our construction is most naturally viewed as \emph{circulant perturbation} of an ergodic Markov kernel with a divergence-free \emph{circulation} on the transition graph: we add and cancel small cyclic shifts so that row and column sums remain unchanged and the stationary distribution stays the same, directly connecting to classical analyses of circulant/group-based chains and their spectra~\citep{diaconis1988,markovmixing}.
Geometrically, the perturbation is a small, \emph{local} move inside the Birkhoff--von Neumann polytope\footnote{This is the convex polytope generated by permutation matrices~\citep{birkhoff1946Tres}.}: starting from the point $U$, we stay within the affine subspace of doubly-stochastic matrices by moving along a divergence-free direction given by a scaled \emph{signed difference of permutation vertices}, which is interesting in its own right from a polyhedral viewpoint.
The construction presented here shares principles with irreversible and lifted Markov Chain Monte Carlo (MCMC) methods. Specifically, these methods introduce a cycle flow to modify the chain's spectral properties and accelerate mixing, all while preserving the target distribution~\citep{chen1999lifting,diaconis1988}. Our work diverges from this literature in a crucial way: whereas lifting requires expanding to a larger state space~\citep{chen1999lifting}, our construction modifies the chain while preserving the original state space $[S]$.

\paragraph{What if $K> S$?}
Our construction assumed $K\le S$ so that single-bump shifts can be chosen distinct; when $K>S$,\footnote{Here, we denote $C^k := C^{k \mod{S}}$ for $k > S$.} indices coincide modulo $S$ (e.g., $1$ and $K + 1 \equiv 1 \mod{K}$), and the corresponding single-bump components become identical (hence non-identifiable) if they use the same amplitude.
To retain distinctness while preserving the uniform stationary distribution and ergodicity, one can: (i) vary amplitudes across components, taking
$P^{(k)} = U + \zeta_k \big(C^{k} - C^{k+1}\big)$ with $\zeta_k \neq \zeta_{k'}$ and $0 < \zeta_k < 1/S$; (ii) use multi-bump combinations
$P^{(k)} = U + \sum_{r \in R_k} \zeta_{k,r}\big(C^{k+r}-C^{k+r+1}\big)$
with $\sum_r \zeta_{k,r} < 1/S$ and $R_k \subset [K]$; or (iii) adopt a general small circulant combination
$P^{(k)} = U + \sum_{m=1}^{S-1} a_{k,m} C^{m}$ with $\sum_m a_{k,m}=0$ and $\sum_{m=1}^{S-1}\lvert a_{k,m}\rvert < 1/S$ (a simple sufficient positivity condition). Option (iii) has a natural Fourier/representation-theoretic interpretation on the cyclic group (circulant kernels diagonalized by the discrete Fourier transform); see \citet{diaconis1988} for background.

\paragraph{Why $D_\pi$ is Large while $\Delta_\mW^2$ is Small?}
From an information–geometric perspective, the gap stems from the \emph{Fisher information metric} underlying KL divergence~\citep{infogeom,nielsen2020infogeom}.
This metric reweights directions by the inverse probabilities, amplifying changes along low-mass coordinates.
Rigorously, let $g_u$ be the Fisher information metric on the probability simplex at the uniform distribution $u=(1/S,\ldots,1/S)$, which reduces to $g_u = \mathrm{diag}(1/u) = S \mI$ on the tangent space -- hence locally $\mathrm{KL}(p\|u)\approx \tfrac12 (p-u)^\top g_u (p-u)=\tfrac{S}{2}\|p-u\|_2^2$.
The cyclic bump changes exactly two coordinates per row by $\pm\varepsilon$, which is a tiny \emph{absolute} move -- hence $\Delta_\mW^2=\Theta(\varepsilon^2)$ -- but a \emph{relative} change of order $S$ larger, i.e., $\gD_\pi = \Theta(\varepsilon S)$.
This is why the KL-based separation satisfies $D_\pi\asymp S \Delta_\mW^2$ compared to the $\ell_2$-based separation $\Delta_\mW^2$.

\newpage
\section{DEFERRED EXPERIMENTAL DETAILS \& ADDITIONAL ABLATIONS}
\label{app:experiments}

\subsection{\texorpdfstring{Deferred Experimental Details from Section~\ref{sec:experiments}}{Deferred Experimental Details from Section 6}}
\label{app:exp-details}
For the algorithms of \citet{kausik2023mixture}, the number next to `th' is the threshold level that is used to output an initial estimate of the clustering from pairwise distances; see line 16 of Algorithm 2 of \citet{kausik2023mixture}.
The ``right'' choice of the threshold level explicitly depends on the separation gap (see \citet[Theorem 1]{kausik2023mixture}), and thus, it is inherently heuristic in the absence of such knowledge.
For this reason, we arbitrarily sweep over $\{10^{-5}, 10^{-4}, 10^{-3}\}$.
We vary $H \in \{10, 20, \cdots, 90, 100, 200, \cdots, 500\}$ and $T \in \{100, 200, \cdots, 600\}$.
For statistical significance, we report results with $\delta = 0.05$, shading the 95\% bootstrapped confidence intervals~\citep{efron1979bootstrapping,diciccio1996bootstrapping} over 30 independent repeats for each configuration.
Each repeat corresponds to a fresh random draw of $T$ trajectories.
The codes are available in our GitHub repository.\footnote{\url{https://github.com/nick-jhlee/optimal-mixture-markov}}

\subsection{Ablation \#1. Impact of Number of Iterations for Stage II}
\label{app:ablation-1}
We utilize the same instance of MMC as in Section~\ref{sec:experiments}.
Figure~\ref{fig:ablation-1} examines how the clustering error evolves with additional EM iterations in Stage II.
While our theory only requires a single EM step, we observe consistent empirical gains from running more iterations.
The benefit is most pronounced when $H$ is small or $T$ is large, where each additional iteration pushes the error curve closer to the oracle benchmark.
The improvement largely saturates after about $5$–$6$ iterations, suggesting diminishing returns beyond this point.
Importantly, all Stage~II variants still outperform Stage~I alone, underscoring the added value of EM refinement.

\begin{figure}[h]
    \centering
    \includegraphics[width=\linewidth]{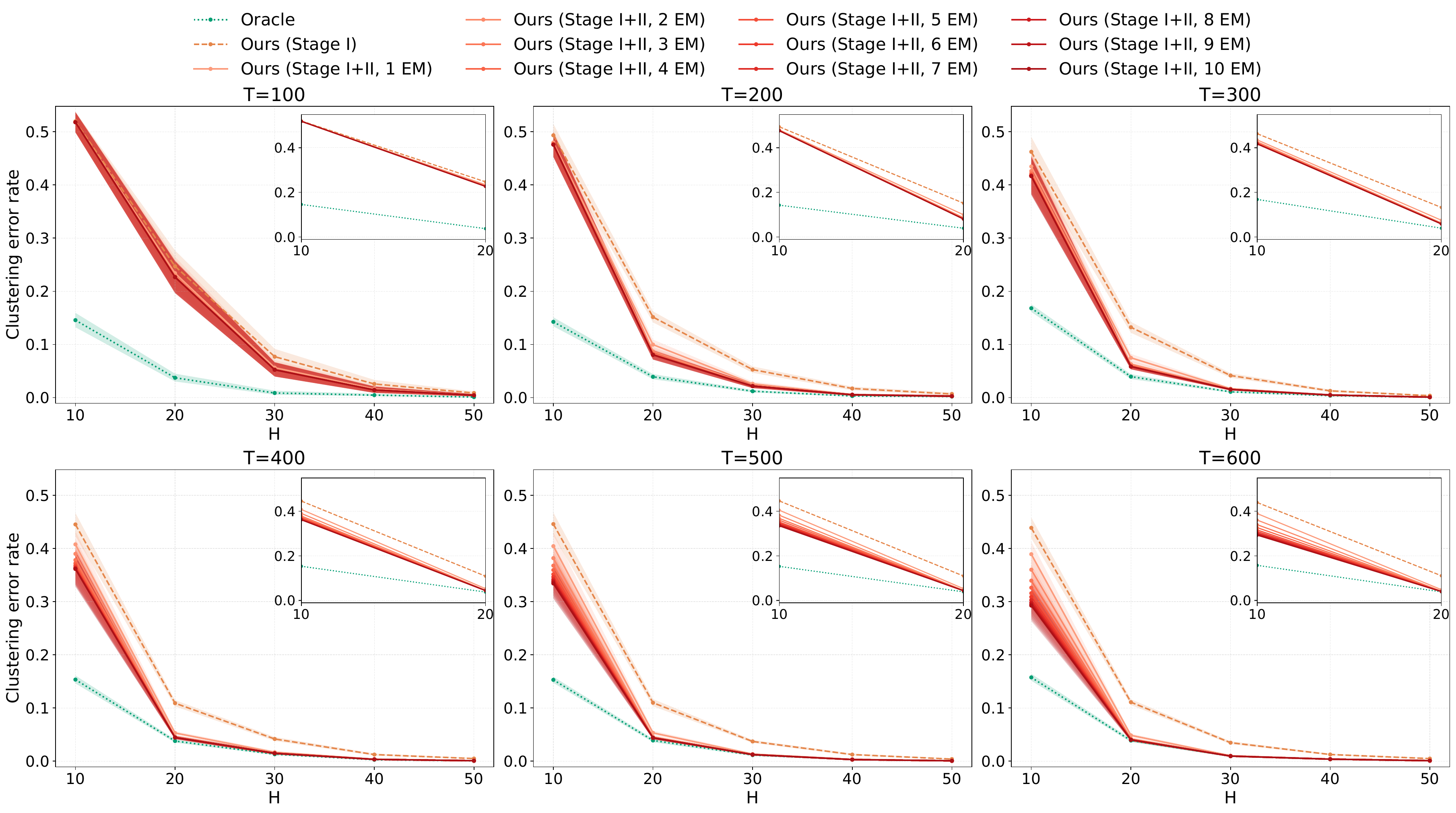}
    \caption{\textbf{(Ablation \#1)} Clustering error on the synthetic MMC instance across EM iterations (up to 10). Panels vary across $T \in \{100, 200, \cdots, 600\}$; the x-axis varies across $H \in \{10, 20, \cdots, 50\}$. Darker red curves indicate more EM iterations; ``Oracle'' and ``Stage I'' are shown for reference.
    }
    \label{fig:ablation-1}
\end{figure}

\subsection{\texorpdfstring{Ablation \#2. Impact of $S$ and $T$ on Performance and Runtime}{Ablation \#2. Impact of S and T on Performance and Runtime}}
\label{app:ablation-2}
To analyze the computational complexity of our two-stage algorithm, we measure wall-clock runtime as a function of the state-space size $S$ and the number of trajectories $T$.
We decompose the total runtime into four components: (i)~embedding construction (line~4 of Algorithm~\ref{alg:initial-spectral}), (ii)~singular value decomposition (SVD) (line~5), (iii)~$K$-means clustering (lines~6–7), and (iv)~likelihood reassignment (Algorithm~\ref{alg:likelihood}).
For the experiment varying $S$, we fix $T=400$ and $H=500$, and sweep $S \in \{10,20,30,40,50,60\}$.
For the experiment varying $T$, we fix $S=40$ and $H=500$, and sweep $T \in \{100,200,300,400,500,600\}$.
In both settings, we run $10$ EM iterations in Stage~II and average over $30$ independent trials.

Figure~\ref{fig:ablation-2} below summarizes the results.
The top row shows that clustering performance degrades as $S$ increases (left), due to the decreasing separation $\zeta = 0.9/S$ in our MMC construction, whereas performance improves as $T$ increases (right), as expected.
The middle row reveals that SVD dominates the runtime for large $S$ (left), while likelihood reassignment dominates for large $T$ (right).
The bottom row quantifies the overall scaling: total runtime grows quadratically in $S$ with $R^2 \approx 0.9583$, confirming the $\mathcal{O}(TS^2)$ SVD cost, and linearly in $T$ with $R^2 \approx 0.9957$, consistent with the $\mathcal{O}(T)$ scaling of all algorithmic steps.

\begin{figure}[h]
    \centering
    \includegraphics[width=0.78\linewidth]{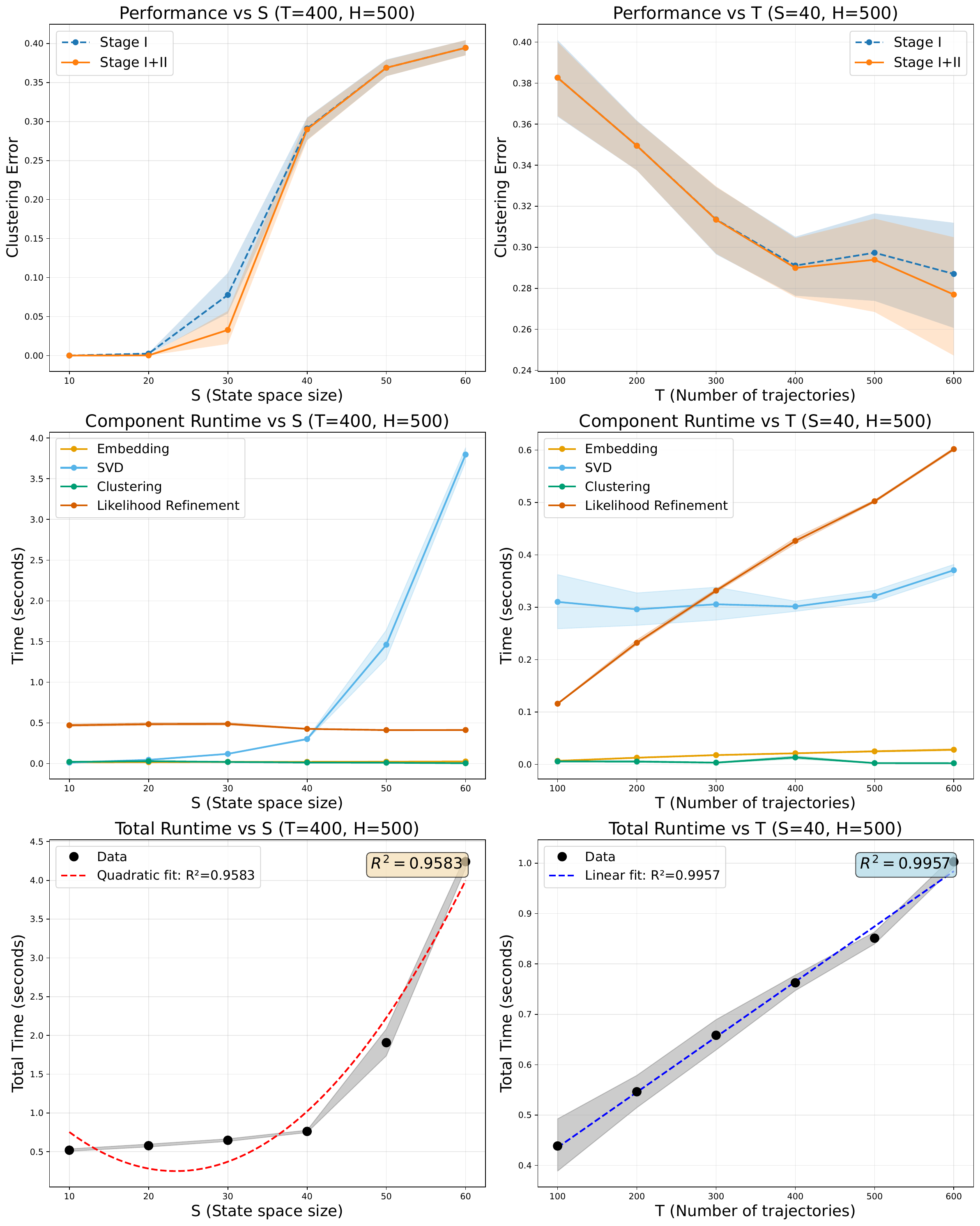}
    \caption{\textbf{(Ablation \#2)} Impact of $S$ and $T$ on the resulting clustering error and runtime.
    }
    \label{fig:ablation-2}
\end{figure}

\subsection{\texorpdfstring{Ablation \#3. Our Algorithm with Unknown $K$}{Ablation \#3. Our Algorithm with Unknown K}}
\label{app:ablation-3}
Again, in the same instance of MMC as in Section~\ref{sec:experiments}, we evaluate the performance of our method when the number of clusters $K$ is unknown a priori.
Recall that Algorithm~\ref{alg:initial-spectral} uses two fixed constants ($64$ for adaptive singular value thresholding at line~6, and $32$ for estimating $K$ at lines~10–14).
For this ablation, we replace these fixed values with tunable hyperparameters $c_1$ and $c_2$, respectively.
In practice, careful tuning of $(c_1, c_2)$ is essential, since the performance of the unknown-$K$ variant is sensitive to the empirical tightness of the concentration bounds that these parameters effectively calibrate.
Within a reasonable range, we believe that adjusting $(c_1, c_2)$ does not alter the theoretical validity of our guarantee; the only difference is that the failure probability scales as $\Theta(\delta)$ rather than exactly $\delta$.

We sweep over $c_1 \in \{10^{-3}, 10^{-2}, 10^{-1}\}$ and $c_2 \in \{10^{-3}, 5\!\cdot\! 10^{-3}, 10^{-2}, 5\!\cdot\! 10^{-2}, 10^{-1}\}$, and extend the trajectory length to
$H \in \{100,200,\dots,1000\} \cup \{1000,2000,\dots,20000\}$ in order to probe the large-$H$ regime where theory predicts consistent recovery of $K$.
We fix $T=100$, and Stage~II is fixed to run $10$ EM iterations.
The known-$K$ variant achieves perfect clustering and is therefore omitted from comparison.

Figure~\ref{fig:ablation-3} summarizes the results.
For small $H$ (top row), the unknown-$K$ variant often fails to recover clusters reliably, especially for moderate choices $c_1 \in \{10^{-2}, 10^{-1}\}$.
In contrast, a more aggressive setting $c_1=10^{-3}$ succeeds even in the $100$’s regime across all considered $c_2$ values, indicating that performance is substantially more sensitive to $c_1$ than to $c_2$.
In the large-$H$ regime (bottom row), $c_1=10^{-2}$ also succeeds consistently across $c_2$, whereas $c_1=10^{-1}$ continues to fail, again underscoring the high sensitivity of empirical performance to the choice of $c_1$.
While there remains some dependence on $c_2$—as reflected in differing convergence speeds with increasing $H$—the overall recovery trends are qualitatively similar.

\begin{figure}[h]
    \centering
    \includegraphics[width=\linewidth]{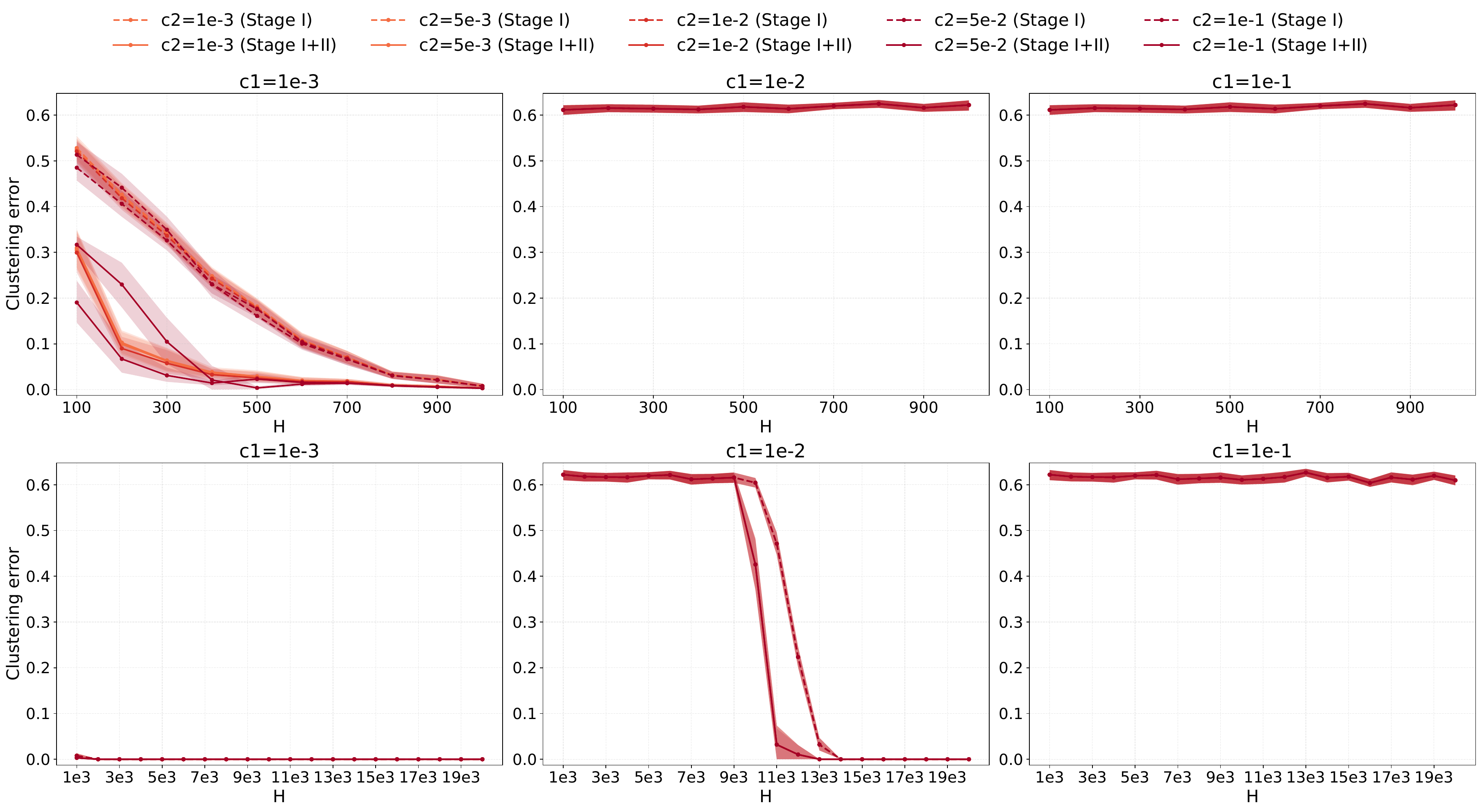}
    \caption{\textbf{(Ablation \#3)} Clustering error of the unknown-$K$ variant on the synthetic MMC instance across different $(c_1, c_2)$ settings.
    Panels vary across $c_1 \in \{10^{-3}, 10^{-2}, 10^{-1}\}$ (columns) and trajectory length $H$ (rows: top for $H \leq 1000$, bottom for $H \geq 1000$).
    Curves indicate different $c_2$ values; Stage I and Stage I+II are shown separately.}
    \label{fig:ablation-3}
\end{figure}

\subsection{\texorpdfstring{Ablation \#4. Impact of $\gamma_\ps$}{Ablation \#4. Impact of Pseudo-Spectral Gap}}
\label{app:ablation-4}
To demonstrate that the dependence on the pseudo-spectral gap $\gamma_\ps$ in our final performance guarantee (Theorem~\ref{thm:likelihood}) is fundamental (at least with our current algorithm design), we construct a sequence of MMC instances where the information-theoretic gap $\gD_\pi$ is strictly constant, but $\gamma_\ps$ varies.

We propose the ``Signal in a Maze'' construction. 
Let $S = 10$ and $K = 2$. We denote the state space as $\gS = \{s_1, s_2, \dots, s_{10}\}$. We partition this space into a signal region $\gS_{\text{diff}} = \{s_1, s_2\}$ and a mixing maze $\gS_{\text{mix}} = \{s_3, \dots, s_{10}\}$. 
Conceptually, the \textit{signal} region contains the only states where the two underlying Markov chains exhibit distinct transition behaviors. The \textit{maze} acts as a sequence of identical, slow-mixing states that artificially bottlenecks the traversal of the chain without altering the stationary distribution. Finally, a small \textit{leakage} parameter $\epsilon$ dictates the probability of transitioning between the signal region and the maze, ensuring the chain remains ergodic while compartmentalizing the signal.

We fix the leakage parameter $\epsilon = 0.05$ and distinct transition probabilities $p_1 = 0.4$ and $p_2 = 0.3$. 
We parameterize the sequence of MMCs by a bottleneck variable $q \in (0, 0.5)$, which dictates the traversal speed through the maze.
For each $k \in \{1, 2\}$ and given $q \in (0, 0.5)$, we define the transition matrix $\mP^{(k)}_q$ using a block-matrix structure:
\begin{equation*}
    \mP^{(k)}_q = \begin{bmatrix}
        \mB^{(k)} & \mE^\top \\
        \mE & \mT_q
    \end{bmatrix},
\end{equation*}
where the constituent blocks are defined as follows:
\begin{enumerate}
    \item \textbf{Signal Block ($\mB^{(k)} \in \sR^{2 \times 2}$):} Governs transitions within $\gS_{\text{diff}}$ and dictates the distinguishing behavior between the chains:
    \begin{equation*}
        \mB^{(k)} = \begin{bmatrix}
            1 - p_k - \epsilon & p_k \\
            p_k & 1 - p_k - \epsilon
        \end{bmatrix}.
    \end{equation*}
    \item \textbf{Leakage Block ($\mE \in \sR^{8 \times 2}$):} Connects the signal region to the ends of the maze. All entries are zero except for the entries mapping $s_3 \leftarrow s_1$ and $s_{10} \leftarrow s_2$, namely $\mE_{1,1} = \epsilon$ and $\mE_{8,2} = \epsilon$.
    
    \item \textbf{Maze Block ($\mT_q \in \sR^{8 \times 8}$):} A symmetric tridiagonal matrix representing the path graph of $\gS_{\text{mix}}$, independent of $k$:
    \begin{equation*}
        \mT_q = \text{tridiag}(q, 1-2q, q) + \text{diag}(-\epsilon, 0, \dots, 0, -\epsilon).
    \end{equation*}
    Explicitly, the main diagonal is $(1-q-\epsilon, 1-2q, \dots, 1-2q, 1-q-\epsilon)$ and the first off-diagonals are all $q$.
\end{enumerate}

By construction, $\mP^{(k)}_q$ is symmetric and doubly stochastic. Thus, the stationary distribution is strictly uniform: $\pi^{(1)} = \pi^{(2)} = \pi = [1/10, \dots, 1/10]^\top$, which is entirely independent of $q$.
Consequently, the stationary-weighted KL divergence evaluates to:
\begin{equation*}
    \gD_\pi = \frac{1}{10} \KL\left(\mP^{(1)}_q(s_1, \cdot), \mP^{(2)}_q(s_1, \cdot)\right) + \frac{1}{10} \KL\left(\mP^{(1)}_q(s_2, \cdot), \mP^{(2)}_q(s_2, \cdot)\right).
\end{equation*}
Because the bottleneck parameter $q$ only appears in the $\mT_q$ block (which is identical across $k=1, 2$), $\gD_\pi$ remains exactly constant for all $q$. 

However, as $q \to 0$, the pseudo-spectral gap $\gamma_\ps$ strictly shrinks, requiring trajectories of increasingly longer length $H$ to traverse the maze and observe the signal states.
For $q \in \{0.01, 0.03, 0.05, 0.07, 0.09, 0.1, 0.2\}$ (which is the range of $q$'s at which we evaluate our algorithm), the pseudo-spectral gaps are numerically computed as:
\begin{equation*}
    \gamma_\ps \in \{ 0.008399, 0.023994, 0.037832, 0.049807, 0.059931, 0.064334, 0.090893 \}.
\end{equation*} 

We evaluate our algorithm on the above instance over $H \in \{100, 200, \dots, 2000\}$, with 30 repeats per configuration. As shown in Figure~\ref{fig:ablation-4}, the rate of error decay against $H$ is slower with lower $\gamma_\ps$ (i.e., higher mixing time).

\begin{figure}[h]
    \centering
    \includegraphics[width=\linewidth]{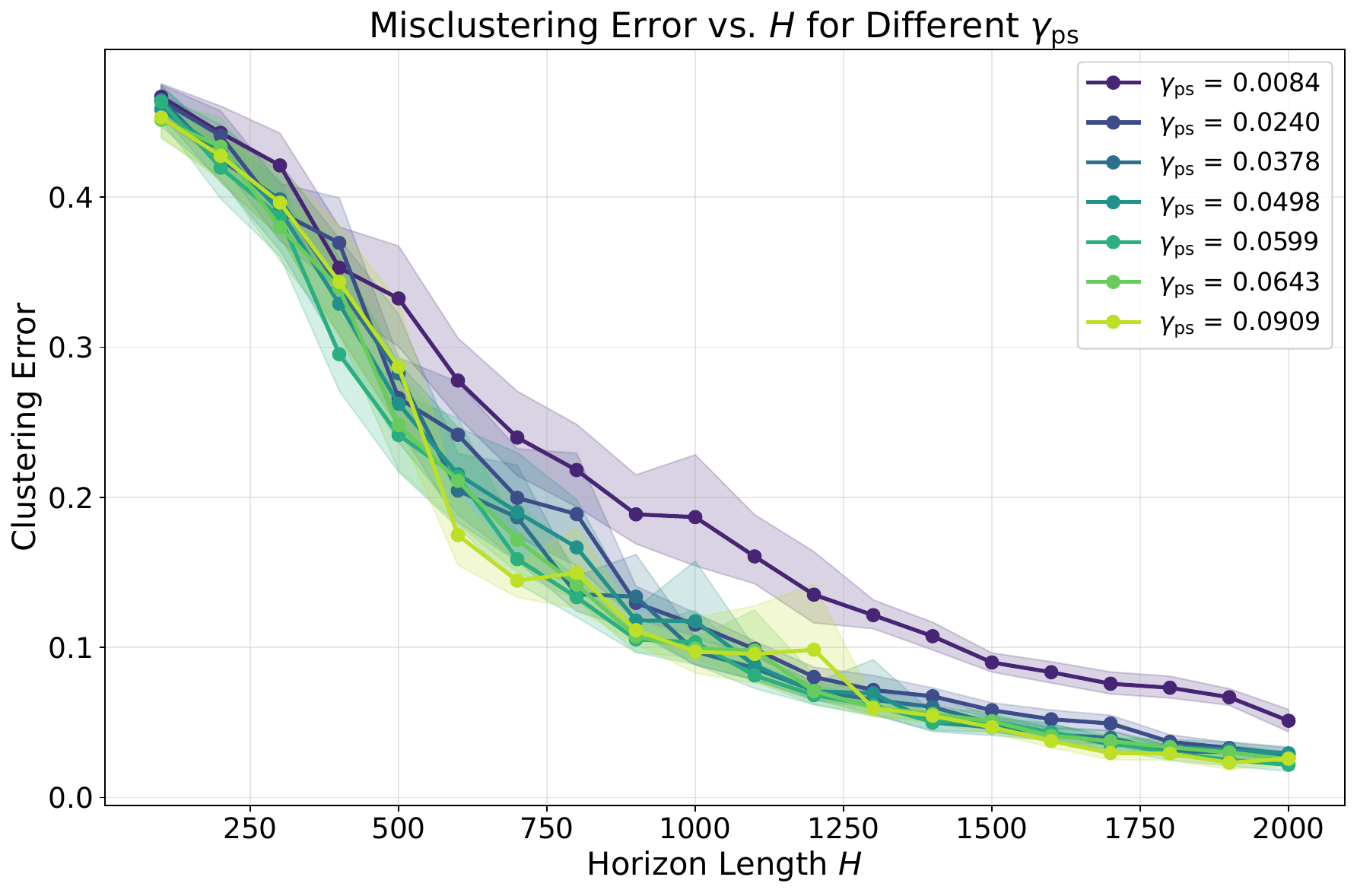}
    \caption{\textbf{(Ablation \#4)} Clustering error on another synthetic MMC instances with varying $\gamma_\ps$'s, over $H \in \{100, 200, \cdots, 2000\}$.}
    \label{fig:ablation-4}
\end{figure}

\subsection{Additional Experiment with Last.fm 1K}
\label{app:experiments-real}

\paragraph{Preprocessing.}
For this dataset, we follow the same pre-processing steps of \citet{kausik2023mixture}, as provided in their GitHub repository.\footnote{\url{https://github.com/hetankevin/mdpmix}}
We explain the preprocessing pipeline here for completeness.
We use the Last.fm 1K listening logs\footnote{\url{http://mtg.upf.edu/static/datasets/last.fm/lastfm-dataset-1K.tar.gz}} (\texttt{userid-timestamp-artid-artname-traid-traname.tsv}) and the Lastfm-ArtistTags2007 files\footnote{\url{https://web.archive.org/web/20110827230919/http://static.echonest.com/Lastfm-ArtistTags2007.tar.gz}} \texttt{tags.txt} and \texttt{ArtistTags.dat}.
The state space is the set of the top 100 tags (genres) from \texttt{tags.txt}. 
Each artist is mapped to exactly one genre by retaining, for each artist, the tag with the highest \texttt{rawtagcount} among the top 100 tags.
We retain the top 10 users by the number of listening events, merge their events with artist tags, and sort the results by user and timestamp.
We keep only transitions where the genre \emph{changes} (and the user is unchanged), so consecutive plays of the same genre are collapsed into a single state.
From each user, we then form non-overlapping trajectories of length $H$ along this sequence, yielding $75$ trajectories per user; the true cluster label of a trajectory is the user index.
Trajectories are optionally shuffled with a fixed seed for evaluation.
This yields $K = 10$ clusters, $S = 100$ states, horizon $H$, and total number of trajectories $T = 750$ (with next-step trajectories) for the mixture-of-Markov-chains experiment.

We vary the horizon $H \in \{10, 20, \ldots, 100\}$, repeat each setting 30 times with different seeds, and report permutation-invariant cluster error rate with bootstrap 95\% confidence intervals.
We compare our method (Stage I: spectral clustering only; Stage I+II: spectral clustering followed by $10$ EM iterations) to the algorithms of \citet{kausik2023mixture} (clustering-only and clustering + $10$ EM iterations), with thresholds sweeped over $\{10^{-6}, 5 \cdot 10^{-5}, 10^{-5}\}$.
For all algorithms, we use Laplace smoothing with $\lambda = 0.1$ for numerical stability.

\paragraph{Results.}
Figure~\ref{fig:lastfm} below shows the overall results.
We first note that we reproduce the general trend of the algorithms reported in \citet{kausik2023mixture}, as shown in their Figures 4 and 5.
Secondly, note that our algorithms, both Stage I and I+II, significantly outperform those of \citet{kausik2023mixture}.
This demonstrates that our algorithm is also practically efficient on real-world datasets.
Importantly, our algorithm (with Laplace smoothing) is shown to perform beyond the theoretically predicted regime, i.e., when the regularity (Assumption~\ref{assumption:eta-regularity}) fails and when $H$ is small.

\begin{figure}[h]
    \centering
    \includegraphics[width=\linewidth]{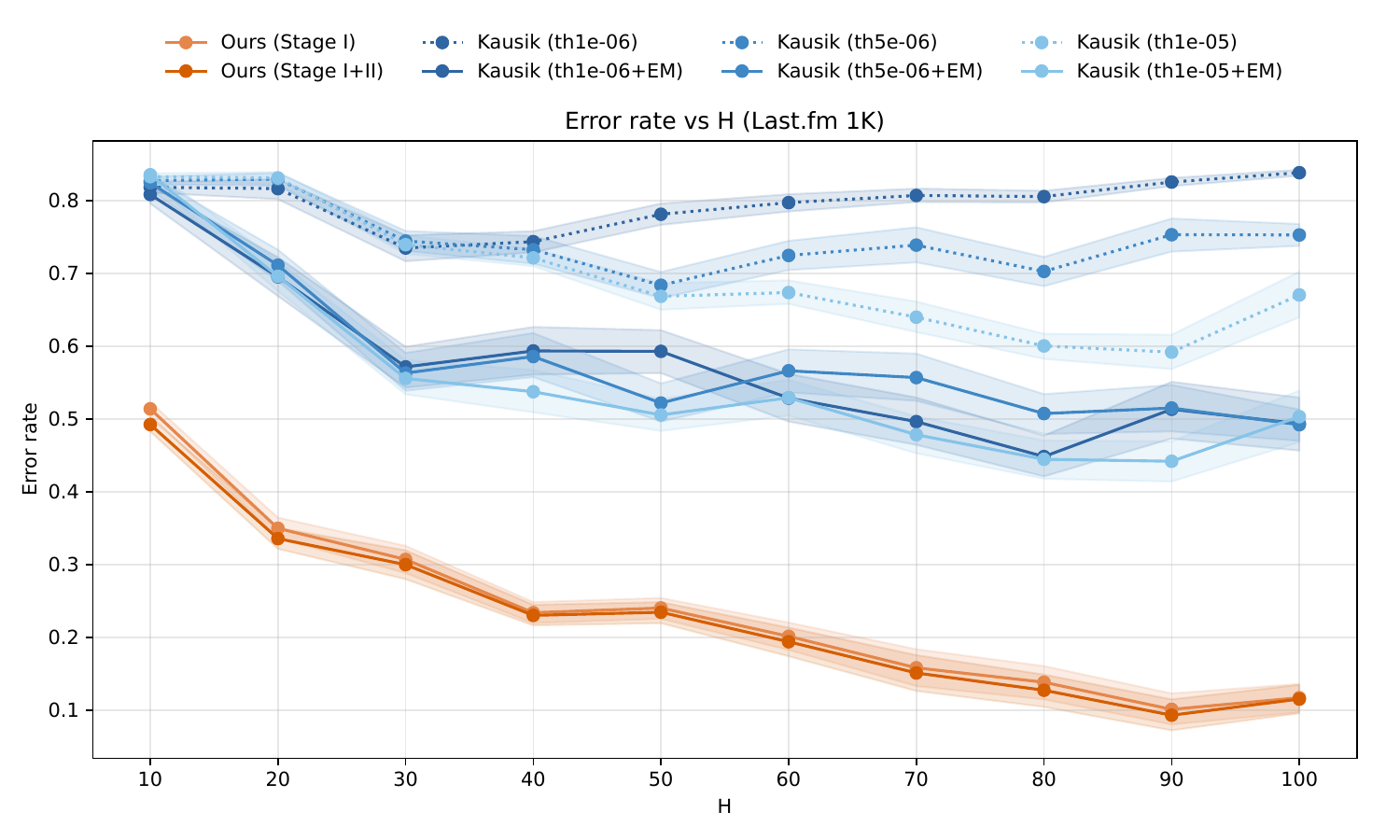}
    \caption{\textbf{(Last.fm 1K)} Clustering errors for Last.fm 1K, over $H \in \{10, 20, \cdots, 100\}$.}
    \label{fig:lastfm}
\end{figure}

\clearpage
\section{\texorpdfstring{RELATIONSHIPS BETWEEN OUR $L$-EMBEDDING AND OTHER SPECTRAL EMBEDDINGS}{RELATIONSHIPS BETWEEN OUR L-EMBEDDING AND OTHER SPECTRAL EMBEDDINGS}}
\label{app:comparisons}

\paragraph{Diffusion Map~\citep{coifman2006diffusion}.}
The objective of a diffusion map fundamentally differs from our approach. Let $\gX$ be a measure space endowed with a kernel $k : \gX \times \gX \rightarrow \sR_{\geq 0}$.
Diffusion maps provide a meaningful Euclidean embedding of \emph{data points} $\gD \subseteq \gX$ via a reversible Markov process (random walk) over $\gX$, induced by the kernel $k$ which encodes prior knowledge of the local geometry.
Thus, its purpose is not to embed a specific Markov chain, but rather to map a measure space onto a Euclidean space using a random walk.

\paragraph{(Weighted) Laplacian~\citep[Chapter 1.5]{chung1997spectral}.}
Similarly, the graph Laplacian is tailored for the spectral analysis of weighted undirected graphs.
While it offers an elegant probabilistic interpretation of eigenvalues and eigenvectors, it is not directly applicable as an embedding for general, potentially non-reversible Markov chains.

\paragraph{Symmetrized Form~\citep[Chapter 12]{markovmixing}.}
Recall that the symmetrized form is $\mD_\pi^{1/2} \mP \mD_\pi^{1/2}$, where $\mD_\pi = \mathrm{diag}(\pi)$.
The left factor, $\mD_\pi^{1/2} \mP$, when vectorized, corresponds precisely to our $L$-embedding.
To the best of our knowledge, however, no prior work has analyzed the properties of the mapping $S : \gM \mapsto \mD_\pi^{1/2} \mP \mD_\pi^{1/2}$ as a Euclidean embedding for Markov chains. 
Specifically, its injectivity and concentration properties remain systematically unexplored.
In contrast, for our $L$-embedding, we explicitly establish injectivity (Proposition~\ref{prop:L}) and derive sharp concentration bounds (Lemma~\ref{lem:concentration-W}), both of which are critical for our Stage I analysis (Theorem~\ref{thm:initial-spectral}).

\paragraph{Doublet Frequency~\citep{vidyasagar2014markov,wolfer2021markov}.}
Defined as $\mD_\pi \mP$, the doublet frequency closely resembles our $L$-embedding.
It satisfies injectivity up to the initial distribution and exhibits similarly well-behaved concentration properties.
However, its induced $\ell_2$-gap scales differently and does not align as naturally with the information-theoretic gap $\gD_\pi$ or \citet{kausik2023mixture}'s notion og gap $\alpha \Delta^2$, whereas our $L$-embedding explicitly facilitates these direct comparisons via $\Delta_\mW$ (Section~\ref{sec:discussions}\textbf{(3)}).

\newpage
\section{\texorpdfstring{RANK OF $\mW$ MAY BE MUCH SMALLER THAN THE NUMBER OF CLUSTERS $K$}{RANK OF W MAY BE MUCH SMALLER THAN THE NUMBER OF CLUSTERS K}}
\label{app:rank-K}
Here, we show this via explicit constructions of $K + 1$ transition matrices whose induced $\mW$ is of rank $2$.

Let $P, P'$ be two distinct ergodic transition matrices with a same stationary distribution $\pi$.
Define the other transition matrices as
\begin{equation}
    P^{(k)} = \left(1 - \frac{k - 1}{K} \right) P + \frac{k - 1}{K} P', \quad k \in \{1, 2, \cdots, K+1\}.
\end{equation}
As $\pi$ is also a stationary distribution of $P^{(k)}$ and ergodicity is preserved under convex combination, it can be easily seen that every row of $\mW$ is a convex combination of the $1$-st and $(K+1)$-th row.
In other words, $\rank(\mW) = 2$, yet there are $K + 1$ distinct transition matrices.

Due to its simple form, we can also compute $\Delta_\mW^2$ in a closed form:
\begin{align*}
    \Delta_\mW^2 &= \min_{k \neq k'} \sum_{s \in \gS} \pi(s) \left\lVert p^{(k)}(\cdot | s) - p^{(k')}(\cdot | s) \right\rVert_2^2 \\
    &= \min_{k \neq k'} \sum_{s \in \gS} \pi(s) \left\lVert \frac{k' - k}{K} p(\cdot | s) + \frac{k - k'}{K} p'(\cdot | s) \right\rVert_2^2 \\
    &= \sum_{s \in \gS} \pi(s) \left\lVert p(\cdot | s) - p'(\cdot | s) \right\rVert_2^2 \min_{k \neq k'}  \left( \frac{k' - k}{K}\right)^2 \\
    &= \frac{1}{K^2} \sum_{s \in \gS} \pi(s) \left\lVert p(\cdot | s) - p'(\cdot | s) \right\rVert_2^2 > 0.
\end{align*}

\newpage
\section{\texorpdfstring{RELAXING THE KNOWLEDGE OF $\gamma_\ps$}{RELAXING THE KNOWLEDGE OF PSEUDO-SPECTRAL GAP}}
\label{app:gamma-ps}
We first recall the na\"{i}ve idea briefly sketched in Section~\ref{sec:discussions}\textbf{(4)}:
\begin{enumerate}
    \item For each $t \in [T]$, compute the estimator $\hat{\gamma}_{\ps,t}$ of \citet{wolfer2024mixing} that estimates $\hat{\gamma}_\ps^{(f(t))}$ while satisfying the following:
    \begin{equation}
    \label{eqn:gamma-ps}
        \mathbb{P}\left(\left|\hat{\gamma}_{\ps,t}-\gamma_\ps^{(f(t))}\right| \leq \frac{1}{2} \gamma_\ps^{(f(t))}\right) \geq  1 - \frac{\delta}{T}.
    \end{equation}
    \item Output $\hat{\gamma}_\ps := \frac{2}{3} \min_{t \in [T]} \hat{\gamma}_{\ps,t}$.
\end{enumerate}

The union bound then immediately implies that $\hat{\gamma}_\ps$ satisfies $\mathbb{P}(\hat{\gamma}_\ps \le \gamma_\ps) \geq 1 - \delta$.
However, for Eqn.~\eqref{eqn:gamma-ps} to hold, we require 
$H \ \gtrsim\ \gamma_\ps^{-3}\,\pi_{\min}^{-1}\,\log\!\frac{1}{\pi_{\min}}\;\log\!\frac{T}{\pi_{\min}}\;\log\!\frac{T}{\pi_{\min}\gamma_\ps}$~\citep[Theorem~1]{wolfer2024mixing}.
Compared to our requirement $H \gtrsim \gamma_\ps^{-1}\,\pi_{\min}^{-1}\,\log T$ (Theorem~\ref{thm:likelihood}), this introduces an extra $\gamma_\ps^{-2}$ factor and an \emph{extra} $\log T$ factor.
As a lower bound of $H \ \gtrsim\ \gamma_\ps^{-3} S \log\!\frac{T}{S}$ is known~\citep[Theorem~4]{wolfer2019mixing}, we conjecture that the extra $\gamma_\ps^{-2}$ factor is unavoidable.

As for the latter, we now describe a simple approach that could mitigate the extra $\log T$ factor:
\begin{enumerate}
    \item Sample an index set $\gI \subset [T]$ of cardinality $|\gI| = \ceil{\alpha_{\min}^{-1} \log\frac{K}{\delta}}$, uniformly at random.
    \item Repeat the ``na\"{i}ve'' approach, with $\frac{\delta}{T}$ replaced with $\frac{\delta}{|\gI|}$ in step 1 and $[T]$ replaced with $\gI$ in step 2.
\end{enumerate}
The following lemma shows that with high probability, $\gI$ ``covers'' all $K$ chains:
\begin{lemma}
\label{lem:cover}
    $\sP\left( \gI \cap f^{-1}(k) \neq \emptyset, \ \forall k \in [K] \right) \geq 1 - \delta$.
\end{lemma}
Under this event, then note that we only need to take the union bound over $\gI$, not $[T]$.
This then results in the \textit{additional}\footnote{Here, we emphasize that this requirement is \textit{in addition} to that of our Theorem~\ref{thm:likelihood}.} requirement of $H \ \gtrsim\ \gamma_\ps^{-3}\,\pi_{\min}^{-1}\,\log\!\frac{1}{\pi_{\min}}\;\log\!\frac{\log\frac{K}{\delta}}{\alpha_{\min}\pi_{\min}}\;\log\!\frac{\log\frac{K}{\delta}}{\alpha_{\min}\pi_{\min}\gamma_\ps}$, which becomes \textit{free} of any $\log T$ factors.

We leave to future work the precise details of the proof of our entire algorithm ($\gamma_\ps$ estimation, Stage I, and Stage II), but we believe that it should be straightforward.

\begin{proof}[Proof of Lemma~\ref{lem:cover}]
    This follows from a simple combinatorial argument:
    \begin{align*}
        \sP\left( \exists k \in [K] \ : \ \gI \cap f^{-1}(k) = \emptyset \right) &\leq \sum_{k \in [K]} \sP\left( \gI \cap f^{-1}(k) = \emptyset \right) \tag{union bound} \\
        &= \sum_{k \in [K]} \frac{\binom{T - \alpha_k T}{|\gI|}}{\binom{T}{|\gI|}} \\
        &= \sum_{k \in [K]} \prod_{i=0}^{|\gI|-1} \frac{T - \alpha_k T - i}{T - i} \\
        &\leq \sum_{k \in [K]} (1 - \alpha_k)^{|\gI|} \tag{the multiplicands monotonically decrease with $i$} \\
        &\leq K e^{-\alpha_{\min} |\gI|}.
    \end{align*}
    Setting $|\gI| = \ceil{\alpha_{\min}^{-1} \log\frac{K}{\delta}}$ and simplifying gives the desired statement.
\end{proof}

\newpage
\section{ADDITIONAL FUTURE DIRECTIONS}
\label{app:future}

In this section, we present several directions for future work that extend beyond those mentioned in the main text.

\paragraph{(1) Towards Computationally Efficient Algorithms.}
As noted in Section~\ref{sec:discussions}\textbf{(4)}, our algorithm prioritizes parameter-free design and statistical efficiency over computational scalability. The construction of the empirical data matrix $\widehat{\mW}$ and its subsequent exact SVD in Stage I entail $\gO(TS^2)$ space and $\gO\big(TS^2 \min(T, S^2)\big)$ time complexities, respectively. For environments with massive state spaces or trajectory counts, this can become prohibitively heavy. 

To alleviate this computational burden, future large-scale deployments could integrate randomized linear algebra techniques, such as matrix sketching or fast low-rank approximations~\citep{sketching}. Alternatively, one could adapt the clustering pipeline to streaming environments~\citep{yun2014streaming}, which would significantly reduce the memory footprint.
Exploring whether incorporating such approximations inherently induces a fundamental trade-off between computational scalability and the tight statistical efficiency bounds established in this work remains an important open question.

\paragraph{(2) Further Study of the $L$-Embedding.}
It is natural to explore how our $L$-based $\ell_2$ distance (Definition~\ref{def:L-embedding}) relates to broader information divergences beyond KL~\citep{wang2023divergence}, as well as to bisimulation metrics~\citep{calo2024bisimulation,calo2025bisimulation}.
Even modest equivalence or comparison results could clarify when $L$-geometry best captures dynamics and when alternative metrics offer sharper statistical or algorithmic behavior.

Relatedly, it is important to determine whether the requirement $H \gtrsim 1/\Delta_\mW^2$—or, more generally, an inverse-squared dependence on an $\ell_2$ separation induced by any Euclidean embedding of the chains—is fundamental or merely an artifact of our two-stage design. Establishing either (i) a matching lower bound phrased in terms of $\Delta_\mW$ (or any embedding that dominates it), or (ii) an alternative initialization whose sample complexity depends directly on the information gap $\gD_\pi$ rather than an $\ell_2$ proxy, would sharpen statistical limits and guide the choice of representations.
We believe this is related to refining the lower bound by considering the uncertainty of the probability kernels; see the last paragraph of Section~\ref{sec:discussions}\textbf{(1)}.

\paragraph{(3) Extension to MDPs.}
Extending our approach to estimation \emph{and} learning in mixtures of MDPs~\citep{banerjee2025markov,kausik2023mixture} is another important direction.
Progress here would connect with spectral methods in structured RL (e.g., POMDPs~\citep{azizzadenesheli2016spectral,azizzadenesheli2017spectral,jedra2023bmdp}, HMMs~\citep{anandkumar2012hmm,hsu2012hmm}, and latent MDPs~\citep{kwon2021lmdp,kwon2024lmdp1,kwon2024lmdp2}) and could reveal performance separations analogous to those observed when state abstraction or clustering is known \emph{a priori}.
We believe that this extension would yield unique challenges due to controllable actions and policy-induced distributions~\citep{jedra2023bmdp}; even harder when one considers history-dependent policies and active exploration-style approaches~\citep{tarbouriech2019active,tarbouriech2020active}.

\end{document}